\documentclass{article}

% if you need to pass options to natbib, use, e.g.:
%     \PassOptionsToPackage{numbers, compress}{natbib}
% before loading neurips_2025
\PassOptionsToPackage{numbers, compress}{natbib}

% ready for submission
\usepackage[final]{neurips_2025}

% to compile a preprint version, e.g., for submission to arXiv, add add the
% [preprint] option:
%     \usepackage[preprint]{neurips_2025}

% to compile a camera-ready version, add the [final] option, e.g.:
%     \usepackage[final]{neurips_2025}

% to avoid loading the natbib package, add option nonatbib:
%    \usepackage[nonatbib]{neurips_2025}

\usepackage[utf8]{inputenc} % allow utf-8 input
\usepackage[T1]{fontenc}    % use 8-bit T1 fonts
\usepackage{hyperref}       % hyperlinks
\usepackage{url}            % simple URL typesetting
\usepackage{booktabs}       % professional-quality tables
\usepackage{amsfonts}       % blackboard math symbols
\usepackage{nicefrac}       % compact symbols for 1/2, etc.
\usepackage{microtype}      % microtypography
\usepackage{xcolor}         % colors
\usepackage{bbm}
\usepackage{amsmath}
\usepackage{graphicx} 
\usepackage{amssymb}
\usepackage{extarrows}
\usepackage{multirow}
\usepackage{colortbl}
\usepackage{tabularx} 
\usepackage{arydshln}
\usepackage{wrapfig}
\usepackage{booktabs}
\usepackage[capitalize]{cleveref}
\usepackage{caption}
\usepackage{wrapfig}
\usepackage{algorithm, algorithmic}
\hypersetup{
    colorlinks=true,            % 激活链接颜色，去掉链接边框
    linkcolor=red,              % 文档内部链接颜色（如图表等引用）
    citecolor=green,            % 文献引用链接颜色
    filecolor=mycustompurple,   % 文件链接颜色
    urlcolor=magenta            % 外部URL链接颜色
}

%%%%%%%%%%%%%%%%%%%%%%%%%%%%%%%%
% THEOREMS
%%%%%%%%%%%%%%%%%%%%%%%%%%%%%%%%
\usepackage{amsthm} 
\theoremstyle{plain}
\newtheorem{theorem}{Theorem}[section]

\newtheorem{lemma}[theorem]{Lemma}

\theoremstyle{definition}

\newtheorem{assumption}[theorem]{Assumption}

\theoremstyle{remark}
\newtheorem{remark}{Remark}
\usepackage{bm}

\title{Towards Robust Pseudo-Label Learning in \\Semantic Segmentation: An Encoding Perspective}

% The \author macro works with any number of authors. There are two commands
% used to separate the names and addresses of multiple authors: \And and \AND.
%
% Using \And between authors leaves it to LaTeX to determine where to break the
% lines. Using \AND forces a line break at that point. So, if LaTeX puts 3 of 4
% authors names on the first line, and the last on the second line, try using
% \AND instead of \And before the third author name.

\author{%
Wangkai Li$^1$, Rui Sun$^1$, Zhaoyang Li$^1$, Tianzhu Zhang$^{1,2}$\thanks{Corresponding author}
  \\
$^1$
 University of Science and Technology of China \\
 $^2$National Key Laboratory of Deep Space Exploration, Deep Space Exploration Laboratory \\
  \texttt{\{lwklwk, issunrui, lizhaoyang\}@mail.ustc.edu.cn},
  \texttt{
tzzhang@ustc.edu.cn
} \\
  % examples of more authors
  % \And
  % Coauthor \\
  % Affiliation \\
  % Address \\
  % \texttt{email} \\
  % \AND
  % Coauthor \\
  % Affiliation \\
  % Address \\
  % \texttt{email} \\
  % \And
  % Coauthor \\
  % Affiliation \\
  % Address \\
  % \texttt{email} \\
  % \And
  % Coauthor \\
  % Affiliation \\
  % Address \\
  % \texttt{email} \\
}

\begin{document}

\maketitle

\begin{abstract}

Pseudo-label learning is widely used in semantic segmentation, particularly in label-scarce scenarios such as unsupervised domain adaptation (UDA) and semi-supervised learning (SSL). Despite its success, this paradigm can generate erroneous pseudo-labels, which are further amplified during training due to utilization of one-hot encoding. To address this issue,  we propose ECOCSeg, a novel perspective for segmentation models that utilizes error-correcting output codes (ECOC) to create a fine-grained encoding for each class. ECOCSeg offers several advantages. First, an ECOC-based classifier is introduced, enabling model to disentangle classes into attributes and handle partial inaccurate bits, improving stability and generalization in pseudo-label learning. Second, a bit-level label denoising mechanism is developed to generate higher-quality pseudo-labels, providing adequate and robust supervision for unlabeled images. ECOCSeg can be easily integrated with existing methods and consistently demonstrates significant improvements on multiple UDA and SSL benchmarks across different segmentation architectures. Code is available at \url{https://github.com/Woof6/ECOCSeg}.

\end{abstract}    

\section{Introduction}
\label{sec:intro}
% Recent advances in deep neural networks have made significant progress in semantic segmentation \cite{long2015fully,chen2017rethinking,cheng2021per,cheng2022masked}. However, semantic segmentation requires a large volume of fine-grained pixel-level labels for supervision, which can be time-consuming and labor-intensive to obtain \cite{cordts2016cityscapes}. Due to the readily available nature of image data, two types of settings have been introduced in semantic segmentation to handle the label-scarce scenario: unsupervised domain adaptation (UDA) and semi-supervised learning (SSL). UDA involves learning from synthetic labeled data and transferring knowledge to real unlabeled target domains, while in the SSL setting, only a tiny portion of the dataset is annotated, and the model needs to generalize on unseen data. These settings are attracting increasing attention to reduce dependence on annotations.
Semantic segmentation has seen significant improvements with recent advances in deep neural networks\cite{long2015fully,chen2017rethinking,cheng2021per,cheng2022masked}. However, a major challenge in semantic segmentation is the requirement of a large volume of fine-grained pixel-level labels, which can be time-consuming and labor-intensive to obtain \cite{cordts2016cityscapes}. 
Due to the readily available nature of image data, unsupervised domain adaptation (UDA) and semi-supervised learning (SSL) have been introduced in semantic segmentation to handle the label-scarce scenarios.
UDA involves learning from synthetic labeled data and transferring knowledge to real unlabeled target domains, while SSL utilizes a tiny portion of annotated data to generalize on unseen data. 
As a result, UDA and SSL are gaining significant attention as promising approaches to reduce the reliance on extensive annotations in semantic segmentation.

\begin{figure}[t]
\centering
\includegraphics[width=0.99\columnwidth]{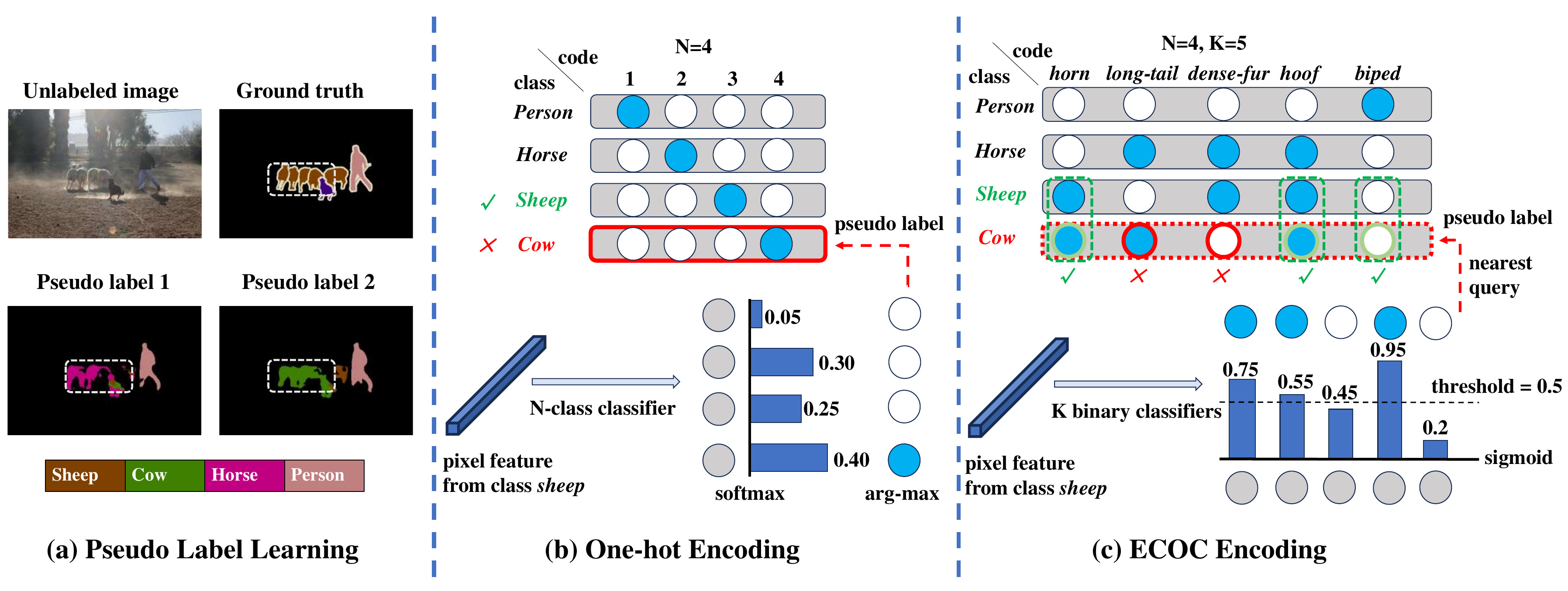} % Reduce the figure size so that it is slightly narrower than the column. Don't use precise values for figure width. This setup will avoid overfull boxes.
% \vspace {-0.5em}
\caption{Comparison of two label encoding methods. (a) Examples of erroneous pseudo labels. (b) Existing methods perform pixel-level classification using argmax-based one-hot encoding. (c) The proposed ECOCSeg predicts the multi-bit binary encoding, which disentangles the classes into fine-grained attributes and enhances the stability of the training process in pseudo-label learning.}
\label{fig1}
% \vspace {-1.2em}
% \vspace {0.5em}
\end{figure}

In both UDA and SSL settings, models are trained using annotated and unlabeled data simultaneously. Existing mainstream methods introduce common paradigms, which can be grouped into the self-training pipeline and the consistency regularization framework. Specifically, self-training methods \cite{tranheden2021dacs,jin2022semi} leverage a temporally smoothed exponential moving average (EMA) model as a teacher 
to generate stable pseudo labels for unlabeled data. On the other hand, the consistency regularization methods \cite{sohn2020fixmatch,araslanov2021self}  encourage the model to produce consistent predictions for the same sample across different perturbation views. 
These paradigms can be summarized as pseudo-label learning, where the network's predictions are used as supervision for unlabeled data.

Although achieving promising results, the inevitable errors in pseudo labels misled the training process. Typical approaches design filter-out mechanisms \cite{yang2023revisiting,sohn2020fixmatch} and only use high-confidence pseudo-labels for training. However, this paradigm tends to make the model focus on learning from easy samples while neglecting difficult ones, resulting in a sub-optimal performance. Another alternative is to utilize weighting functions \cite{hu2021semi,sun2023daw} that assign weights based on  confidence of pseudo-labels. While potentially effective, this approach requires careful design and selection of appropriate hyperparameters, which inevitably compromise its applicability. 
Based on above discussions, we investigate  existing works primarily concentrate on developing specific selection strategies for pseudo-labels but rarely consider the impact of the encoding form assigned for classes.  
As shown in Fig.\ref{fig1} (a), the pixel features of the class \textit{sheep} are being confused by the classifier as \textit{horse} or \textit{cow}, and an erroneous pseudo-label is typically encoded in a one-hot manner through the argmax operation (see Fig.\ref{fig1} (b)). 
We speculate that similar classes share common visual attributes, leading to confusing pseudo-labels and further misguiding the training process.
\textit{How to utilize the shared attributes among confusing classes to design a suitable encoding form for pseudo-label learning is rarely explored.}

To explore the encoding form tailored for pseudo-label learning, we explicitly disentangle the classes into fine-grained attributes and consider each class a set of attributes. 
As shown in Fig.\ref{fig1} (c),  even with incorrect predictions in specific attributes, confusing classes still exhibit shared attribute characteristics.  
For instance, both \textit{sheep} and \textit{cow} have \textit{horn} and \textit{hoof} and are not \textit{biped}. 
Despite potential misclassification,  accurate prediction of these shared attributes can still provide valuable guidance for effectively training the network. 
Based on this observation, we resort to error-correcting output codes (ECOC) \cite{dietterich1994solving} to assign a binary bit string (codeword) as an encoding for each class,  decomposing the $N$-class classification problem into $K$ two-class subtasks. 
The collection of codewords corresponding to each class forms a codebook. This paradigm determines the class by predicting a $K$-bit binary encoding and selecting the nearest neighbor query in the codebook.
The encoding form created by suitable ECOC enjoys two properties: \textbf{class discriminability}, ensuring well-separated classes by sufficient Hamming distance between codewords, and \textbf{attribute diversity}, ensured by making each bit-position classifier uncorrelated. 
ECOC encoding endows the model with the ability to handle partial inaccurate bits and make classification decisions.  
With a theoretical guarantee (Sec. \ref{theory}), we show that ECOC can serve as an effective equivalent to one-hot encoding in fully supervised settings, and exhibits greater robustness in pseudo-label learning by achieving a tighter classification error bound under a sufficiently large minimum code distance.

In this paper, we propose ECOCSeg, a novel segmentation framework designed for pseudo-label learning. ECOCSeg leverages error-correcting output codes (ECOC) as the class representation and creates fine-grained encoding forms to denoise pseudo-labels. Compared to the widely adopted pseudo-label learning paradigm, which involves encoding form (typically one-hot), pseudo-label selection strategy (typically weighting), and optimization criteria (typically cross-entropy loss), ECOCSeg introduces innovations tailored to the challenges of pseudo-label learning.
(1) \textbf{\textit{ECOC-based Encoding Form}}. 
To implement ECOC as an alternative to the typically argmax-based one-hot encoding in the segmentation paradigm, we explore two simple yet effective coding strategies, i.e.,  \textbf{max-min distance encoding} and \textbf{text-based encoding}, to consider robustness and the relationship between classes, respectively. 
(2) \textbf{\textit{Bit-level Denoising Mechanism}}. 
To consider the noise in pseudo labels, we present two assigning forms: \textbf{bit-wise} pseudo label and \textbf{code-wise} pseudo label (Fig. \ref{fig2} (a) and (b)). The former provides softer supervision by quantifying the output into bit-level codes, while the latter queries the nearest codeword from the codebook as pseudo labels, effectively rectifying inaccurate bits when the classification is accurate but potentially introducing additional noise when incorrect.
To leverage the strengths of both forms, we propose a reliable bit mining algorithm to identify candidate classes and determine the shared bits among corresponding codewords as reliable bits, capturing the confidence part of  \textbf{code-wise} labels. By combining them with  \textbf{bit-wise} labels, we obtain more robust pseudo-labels in a hybrid way, improving pseudo-label learning stability. 
(3) \textbf{\textit{Customized Optimization Criteria}}.
Intuitively, we can directly use binary cross-entropy for training. However, it optimizes each binary classifier independently, lacking structured representation space constraints and leading to slower convergence. To address this issue, we introduce customized optimization criteria, namely \textbf{pixel-code distance} and \textbf{pixel-code contrast}. These criteria optimize our framework effectively, equipped with intra-class compactness and inter-class separation, further enhancing overall performance.

Our contributions can be summarized as follows:
(1) We present a new perspective to consider pseudo-label noise and propose designing a suitable encoding form for pseudo-label learning that utilizes shared attributes among confusing classes.
(2) We formalize pseudo-label learning into three fundamental components for analysis: encoding form, pseudo-label selection strategy, and optimization criteria, and correspondingly 
develop an ECOC-based encoding form,  a bit-level denoising mechanism, and customized loss functions  to enhance performance.
(3) We theoretically analyze the performance of ECOC and one-hot encoding in both fully supervised and pseudo-label learning settings, demonstrating that with suitable codebook design, ECOC has greater potential to tolerate label noise.
(4) We implement ECOCSeg, which can be easily built upon existing pseudo-label learning frameworks  and consistently improves performance on multiple UDA and SSL benchmarks.

\begin{figure}[t]

\centering
\includegraphics[width=0.99\columnwidth]{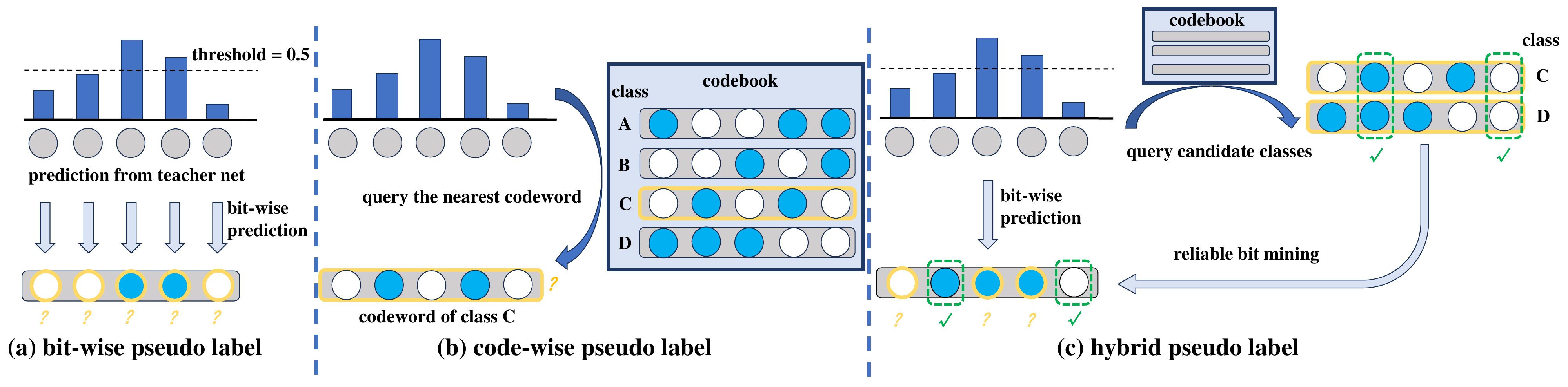} 
% \vspace {-0.7em}
\caption{Demonstration of different forms of assigning pseudo labels introduced by ECOCSeg. }
\label{fig2}
% \vspace {-1.2em}
% \vspace {0.5em}
\end{figure}

\section{Related Work}
\label{sec:related}

\subsection{ Label-scarce Semantic Segmentation}

Although deep models have achieved remarkable success in various tasks \cite{krizhevsky2012imagenet,he2016deep, vaswani2017attention, dosovitskiy2020image, chen2024sam, li2023enhancing, wangkai2023maunet, wang2024image}, they heavily rely on large amounts of labeled training data and struggle to generalize to data with shifted distributions. This is particularly evident in semantic segmentation, where alternative approaches have been introduced to avoid laborious pixel-wise annotation. 
Unsupervised domain adaptation (UDA) \cite{ghifary2016deep, ganin2016domain, vu2019advent,toldo2020unsupervised,  chen2025alleviate, pan2025exploring, he2025progressive, he2025target, chen2025beyond} aims to transfer knowledge from labeled source domains to unlabeled target domains, enabling models to perform well in the target domain despite distribution differences. Semi-supervised learning (SSL) \cite{chen2021semi,xu2022semi, wang2022semi,zhang2020survey, sun2025towards, sun2025two} leverages a combination of a few manually annotated target samples with a large pool of unlabeled samples to enhance model performance. Weakly supervised learning \cite{wei2016stc,huang2018weakly,ahn2018learning,chan2024scribble} addresses the challenge by utilizing less precise annotation signals, such as image-level labels or bounding boxes. Few-shot learning \cite{dong2018few,wang2019panet,liu2024bidirectional,wu2021learning, li2025dual, luo2024exploring, li2025generalized, li2024localization} approaches tackle scenarios with a small number of annotated samples by leveraging prior knowledge and meta-learning techniques. 
% These strategies have been extensively studied in semantic segmentation, providing effective solutions to overcome data scarcity. 
UDA and SSL, in particular, share a similar objective of using unlabeled (target) data to improve the performance of models trained with labeled (source) data only. In this work, we explore these two settings from a unified perspective of pseudo-label learning.

\subsection{Pseudo-label Learning }
% Two popular paradigms are often employed when training a model with unlabeled data: self-training-based methods \cite{tranheden2021dacs,hoyer2022daformer,jin2022semi}   and consistency regularization-based methods \cite{sohn2020fixmatch,chen2021semi,araslanov2021self}. In self-training, the model is trained on unlabeled samples using pseudo labels derived from a teacher network. On the other hand, consistency regularization aims to ensure prediction stability across different perturbations. Both of them can be viewed as pseudo-label learning \cite{zheng2021rectifying,lee2013pseudo,qiao2023fuzzy}. More recently, SOTA segmentation methods in UDA and SSL combine both two technologies \cite{wang2022freematch,hoyer2023mic,liu2022perturbed}. Although making significant progress, incorrect pseudo-labels can mislead the model's training process. Typical approaches incorporate filtering mechanisms  \cite{sohn2020fixmatch,yang2023revisiting} to train the model exclusively with highly confident pseudo-labels, while recent research focuses on identifying suitable weight functions to improve training stability \cite{hu2021semi,sun2023daw,hoyer2022daformer}. In addition to the methods above, some works design new optimization criteria inspired by negative learning \cite{qiao2023fuzzy,wang2022semi} to improve the learning from pseudo-labels. In this paper, we focus on the encoding form of the pseudo-label, which is an orthogonal direction to the above approaches.

Two popular paradigms are often employed when training a model with unlabeled data: self-training-based methods \cite{zou2018unsupervised,tranheden2021dacs,hoyer2022daformer,jin2022semi, li2025balanced} and consistency regularization-based methods \cite{sohn2020fixmatch,chen2021semi,araslanov2021self,yang2023revisiting}. In self-training, the model is trained on unlabeled samples using pseudo labels derived from a teacher network. Consistency regularization aims to ensure prediction stability across different perturbations. Both can be viewed as pseudo-label learning \cite{zheng2021rectifying,lee2013pseudo,qiao2023fuzzy}. More recently, state-of-the-art segmentation methods in UDA and SSL combine both technologies \cite{wang2022freematch,hoyer2023mic,liu2022perturbed, li2025towards_1, sun2025beyond}. Although making significant progress, incorrect pseudo-labels can mislead the model's training process. Typical approaches incorporate filtering mechanisms \cite{sohn2020fixmatch,yang2023revisiting} to train the model exclusively with highly confident pseudo-labels, while recent research focuses on identifying suitable weight functions to improve training stability \cite{hu2021semi,sun2023daw,hoyer2022daformer}. Some works design new optimization criteria inspired by negative learning \cite{qiao2023fuzzy,wang2022semi} to improve learning from pseudo-labels. In this paper, we focus on the encoding form of the pseudo-label, which is an orthogonal direction to the above approaches.

\section{Method}
\label{sec:method}

\subsection{Preliminaries} 
\label{sec:3.1}
For the general formulation of UDA and SSL in semantic segmentation, we are given $n_l$ labeled (source) samples $D_l = {(x^l_i, y^l_i)}^{n_l}_{i=1}$, where $x^l_i$ represents $i$-th image with $y^l_i $ as the corresponding pixel-wise one-hot label covering $N$ classes, and $n_u$ unlabeled (target) samples $D_l= \{x^u_i \}^{n_u}_{i=1}$ with the same label space. The supervised loss $\mathcal{L}^s$ can be calculated on labeled data:
\begin{equation}
\small
\mathcal{L}^s =\frac{1}{n_l} \sum_{i=1}^{n_l}\frac{1}{HW}\sum_{j=1}^{H\times W}\ell_{ce}(F(x^l_{ij}),y^l_{ij}),
\end{equation}
where $\ell_{ce}$ denotes the cross-entropy loss. The segmentation model, $F$, can be defined as $F = h\circ g$, where $g: \mathcal{X} \rightarrow \mathcal{Z} $ lifts each pixel of the input image in $\mathcal{X}$ to the feature space $\mathcal{Z}$ and $h: \mathcal{Z} \rightarrow \mathbb{R}^N $ is a pixel-wise classifier to give a score for each class. The unsupervised loss $\mathcal{L}^u$ can be formulated in a unified form of pseudo-label learning as:
% \vspace{-2em}
\begin{equation}
\label{eq.2}
\small
\mathcal{L}^u =\frac{1}{n_u} \sum_{i=1}^{n_u}\frac{1}{HW}\sum_{j=1}^{H\times W}q(p_{ij})\ell_{ce}(F(\mathcal{A}^s(x^u_{ij})),\hat{y}^u_{ij}),
\end{equation}
\begin{equation}
\small
\hat{y}^u_{ij}=argmax(\hat{F}(\mathcal{A}^w(x^u_{ij})),
\end{equation}
where $\hat{y}^u_{ij}$ is pseudo label produced by teacher model $\hat{F}$ and $\mathcal{A}^w/\mathcal{A}^s$ denotes weakly/strongly-augmented strategies. We define $q(p_{ij})$ as a quality estimate conditioned on confidence $p_{ij} $ for pseudo labels, which can be implemented with threshold filtering or a weighting function. The overall objective function is  $\mathcal{L} = \mathcal{L}^s+\lambda\mathcal{L}^u.$

\subsection{Method Overview}

An overview of our method ECOCSeg, a pseudo-label learning framework for semantic segmentation, is shown in Fig. \ref{fig3}. We propose an ECOC-based dense classification paradigm (Sec. \ref{sec:3.3}), where error-correcting codes create a fine-grained output representation for each class.  Then,  we study the different forms of pseudo labels driven by ECOCSeg and propose a reliable bit mining algorithm to refine pseudo labels in a bit-level way (Sec. \ref{sec:3.4}). Finally, the optimization criteria are developed to train this framework for further enhanced performance (Sec. \ref{sec:3.5}).

\subsection{ECOC-based Dense Classification}
\label{sec:3.3}

Traditionally, semantic segmentation is formulated as a discriminative learning problem based on softmax projection. For each pixel example $i$, the embedding $\bm{z}_i \in \mathbb{R}^D $ is extracted from $g$ and fed into $h$ for $N$-way classification:
\begin{equation}
\small
p(n|\bm{z}_i) = \frac{exp(\bm{w}_n^{\mathrm{T}}\bm{z}_i)}{\sum^N_{{n^{'}}=1}exp(\bm{w}^{\mathrm{T}}_{n^{'}}\bm{z}_i)}.
\end{equation}
In this paradigm, $p(n|\bm{z}_i) \in[0,1]$ is the probability that pixel $i$ being assigned to class $n$ and $h$ is a $N$-class classifier parameterized by $\bm{W}=[\bm{w}_1^{\mathrm{T}};\dots;\bm{w}_N^{\mathrm{T}}]\in\mathbb{R}^{N\times D}$, in which bias term is omitted.

% Unlike focusing on learning the decision boundaries between the $N$ classes on the pixel embedding space, our ECOCSeg reformulates the task from a view of dense classification based on error-correcting output codes. Firstly, an error-correcting codebook is defined as a matrix of binary values with the size of $N \times K$, where each class is represented by a specific codeword of length $K$. Then, the $N$-class classifier  is replaced by $K$ binary classifiers,  with $p(k|\bm{z}_i) = sigmoid(\bm{w}_k^{\mathrm{T}}\bm{z}_i)$ to predict probability that the $k$-th bit
% of pixel $i$ being assigned to digit $1$. Typically, the class is determined
% by nearest neighbor algorithm within the codebook where Hamming distance is used as a metric. To  avoid confusion with the same code distance in the inference process, we define  soft Hamming distance between codeword of class $n$, denoted as $\bm{c}_n$,  and  probability vector $\bm{p}^i$, which consists of $p(k|\bm{z}_i)$:
Our ECOCSeg reformulates the task from a view of dense classification based on error-correcting output codes. An error-correcting codebook is defined as a matrix of binary values with the size of $N \times K$, where each class is represented by a specific codeword of length $K$. The $N$-class classifier is replaced by $K$ binary classifiers, with $p(k|\bm{z}_i) = sigmoid(\bm{w}_k^{\mathrm{T}}\bm{z}_i)$ to predict probability that the $k$-th bit of pixel $i$ being assigned to digit $1$. The class is determined by nearest neighbor algorithm within the codebook where soft Hamming distance $d_{SH}$ is used as a metric:
\begin{equation}
\label{Eq.5}
\small
d_{SH}(\bm{c}_n,\bm{p}^i)= \frac{1}{K}\sum^K_{k=1}\Vert p(k|\bm{z}_i)-\bm{c}_{nk}	\Vert_1,
\end{equation}
\begin{equation}
\label{Eq.6}
\small
\hat{n}^{i} = \mathop{\arg\min}\limits_{n} \{d_{SH}(\bm{c}_n,\bm{p}^i)\},
\end{equation}
where $\bm{c}_n$ is  the codeword of class $n$, consisting of $\bm{c}_{nk} \in \{0,1\}$,  $\bm{p}^i$ is predicted probability vector,  consisting of $p(k|\bm{z}_i)\in [0,1]$, and $\hat{n}^{i}$ is the class assigned for pixel $i$.

\label{sec:3.2}
\begin{figure*}[t]
\centering
\includegraphics[width=1\textwidth]{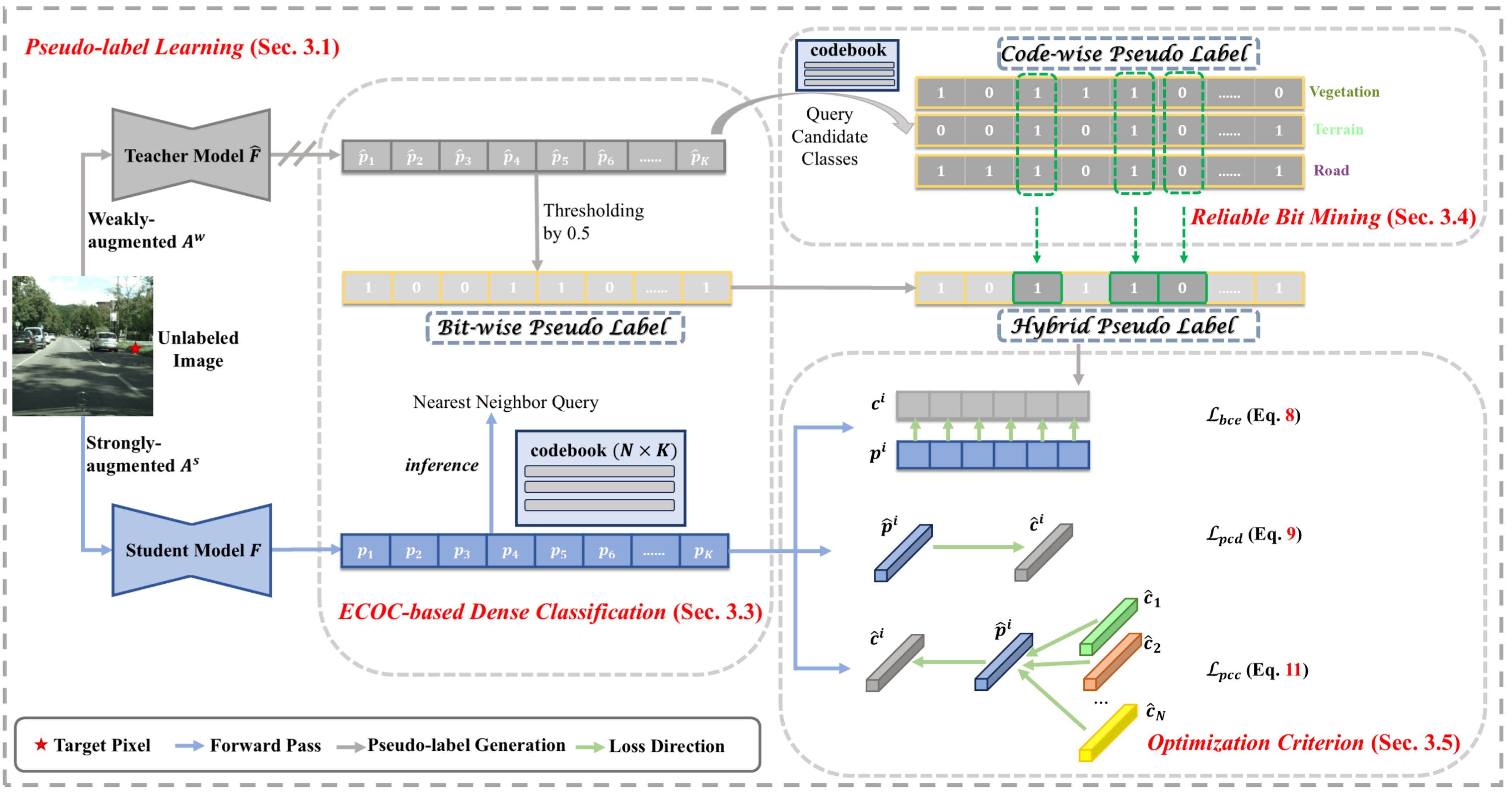}
% \vspace {-2em}
\caption{ \textbf{Pipeline illustration of  ECOCSeg}.  We introduce a new perspective for semantic segmentation (Sec. \ref{sec:3.3}), propose a reliable bit mining algorithm to refine the pseudo label (Sec. \ref{sec:3.4}), and develop customized optimization criteria (Sec. \ref{sec:3.5}).}
\label{fig3}
% \vspace {-1.2em}
\end{figure*}

To enable the proposed paradigm, we construct the codebook using two algorithms:  \textbf{max-min distance encoding} and \textbf{text-based encoding}, to generate the binary matrix $M \in \{0,1\}^{N \times K}$ and ensure the validity. Please refer to Appendix \ref{sec:A} for details on the implementation.

\subsection{ Reliable Bit Mining}
\label{sec:3.4}

% \vspace {-0.5em}
% \begin{algorithm}[h] \small
% 	\caption{Reliable bit mining strategy} 
% 	\label{alg:1}
% 	\begin{algorithmic}[1]
% 		\STATE \textbf{Input:}  probability vector  $\bm{p}^i \in [0,1]^K$
% 		\STATE \textbf{Output:}   mask of the reliable part $\mathcal{M}^i \in \{0,1\}^{K} $
%         \STATE \textbf{Initialize:} code matrix $M \in \{0,1\}^{N\times K}$, confidence threshold $T$, candidate set $S_c=\{\}$, $\mathcal{M}^i=\{1\}^{K} $
% \STATE compute code distance for each class by Eq. \ref{Eq.5};
% \STATE sort the code distance and obtain sorted index $\mathcal{I}$;
% \STATE compute confidence $\bm{q}^i$;
% \FOR{$n=1$ \textbf{to}  $N$}
% \STATE add $\bm{c}_{\mathcal{I}[n]}$ to $S_c$;
% \STATE compute the shared part $P_s(S_c)$;
% \STATE update $\mathcal{M}^i$ with bit positions in $P_s(S_c)$;
% \STATE compute mean confidence $\bm{q}^i_{m}$ in $P_s(S_c)$;
% \IF{$\bm{q}^i_{m}>T$ \textbf{or} $\mathcal{M}^i=\{0\}^{K} $}
% \STATE \textbf{break};
% \ENDIF
% \ENDFOR
% \STATE \textbf{return} $\mathcal{M}^i$ 
% \end{algorithmic}  
% \end{algorithm}
% \vspace {-0.5em}

Since ECOCSeg represents classes as multi-bit binary codes, it naturally leads to the introduction of two forms of pseudo-labels, as shown  in Fig. \ref{fig2}. Both forms present varying levels of label noise in different scenarios. Notably, for the code-wise pseudo-label, the noise only comes from incorrect class decisions by Eq. \ref{Eq.6}, prompting us to explore more candidate classes to mine the reliable bits. As seen in Fig. \ref{fig3}, the ground truth for the target pixel is $terrain$, while the nearest codeword is $vegetation$, thus making the false classification. If we query the $C$-nearest neighbors, the correct class will fall into the candidate set like $\{vegetation, terrain, road\}$ when $C=3$. Therefore, the shared part within this set, i.e., $P_s(\bm{c}_{veg.},\bm{c}_{ter.},\bm{c}_{roa.})$  can be guaranteed to be accurate, allowing us to consider these bits as reliable.

\begin{wrapfigure}{r}{0.5\textwidth}
\vspace{-7pt} % 可调节顶部空白
\begin{minipage}{0.48\textwidth}
\begin{algorithm}[H] % H避免浮动
\small
\caption{Reliable bit mining strategy} 
\label{alg:1}
\begin{algorithmic}[1]
\STATE \textbf{Input:}  probability vector  $\bm{p}^i \in [0,1]^K$
\STATE \textbf{Output:}   mask of the reliable part $\mathcal{M}^i \in \{0,1\}^{K}$
\vspace{-2ex}
\STATE \textbf{Initialize:} code matrix $M$, confidence threshold $T$, candidate set $S_c=\{\}$, $\mathcal{M}^i=\{1\}^{K} $
\STATE compute code distance for each class by Eq. \ref{Eq.5};
\STATE sort the code distance and obtain sorted index $\mathcal{I}$;
\STATE compute confidence $\bm{q}^i$;
\FOR{$n=1$ \textbf{to}  $N$}
    \STATE add $\bm{c}_{\mathcal{I}[n]}$ to $S_c$;
    \STATE compute the shared part $P_s(S_c)$;
    \STATE update $\mathcal{M}^i$ with bit positions in $P_s(S_c)$;
    \STATE compute mean confidence $\bm{q}^i_{m}$ in $P_s(S_c)$;
    \IF{$\bm{q}^i_{m}>T$ \textbf{or} $\mathcal{M}^i=\{0\}^{K} $}
        \STATE \textbf{break};
    \ENDIF
\ENDFOR
\STATE \textbf{return} $\mathcal{M}^i$ 
\end{algorithmic}  
\end{algorithm}
\end{minipage}
% \vspace{-10pt} % 可调节底部空白
\end{wrapfigure}

To provide an adaptive value of $C$ for each pixel, we design an effective strategy to determine the candidate classes and mine the reliable bits at the same time (Alg. \ref{alg:1}), where we define confidence $q(k|\bm{z}_i)=max\{p(k|\bm{z}_i),1-p(k|\bm{z}_i)\}$ for each bit and set a hyperparameter $T$ as upper bound. Then, we obtain the hybrid pseudo label by fusing the bit-wise label ($\bm{c}^i_{bit}$) and the reliable bits mined from the code-wise label ($\bm{c}^i_{code}$) with mask $\mathcal{M}^i$:
\begin{equation}
\label{Eq.12}
\small
\bm{c}^i_{hyb.} = \mathcal{M}^i \odot  \bm{c}^i_{code}+(1-\mathcal{M}^i)\odot \bm{c}^i_{bit}.
\end{equation}
A higher $T$ aligns $\bm{c}^i_{hyb.}$ more closely with the bit-wise way, while a lower $T$ aligns it closer with the code-wise way. The hybrid pseudo-label takes advantage of both forms and introduces less label noise, which provides a low-noise form of supervision for unlabeled images and improves the stability of pseudo-label learning in semantic segmentation.

\subsection{Optimization Criterion}
\label{sec:3.5}
In ECOCSeg, the multiclass learning problem is decomposed to $K$ binary classification  problems, which can be addressed by a binary cross-entropy (BCE) loss:
\begin{equation}
\label{Eq.7}
\small
\mathcal{L}_{bce}^i =-\frac{1}{K} \sum_{k=1}^{K}[\bm{c}_{(k)}^i\log p(k|\bm{z}_i)+(1-\bm{c}_{(k)}^i)\log (1-p(k|\bm{z}_i))],
\end{equation}
where $\bm{c}^i$ is the target codeword assigned for pixel $i$.
However, only adopting this training objective is insufficient for two reasons. First, Eq. \ref{Eq.7}  considers the classification of each bit independently,  ignoring the relationships between bits and resulting in a lack of intra-class compactness within features extracted by $g$. 
Second, this bit-level supervision fails to capture inter-class relationships, neglecting inter-class separation. 
Although Eq. \ref{Eq.7} ensures classifier robustness, it lacks structured representation space constraints.
To address these issues, we introduce two extra training objectives: pixel-code distance and 
pixel-code contrast.

\noindent \textbf{Pixel-code distance.} To regularize representations and reduce intra-class variation, we introduce a compactness-aware loss that minimizes the cosine distance between logits and codewords:
\begin{equation}
\label{Eq.8}
\small
\mathcal{L}_{pcd}^i = 1 - \cos( \hat{\bm{p}}^i,\hat{\bm{c}}^i),
\end{equation}
where $\hat{\bm{p}}^i$ represents the logits predicted by $K$ binary classifiers, and $\hat{\bm{c}}^i$ is the standardized version of the codeword $\bm{c}^i$, with $\hat{\bm{c}}_{k}^i \in \{-1,1\}$.

\noindent \textbf{Pixel-code contrast.} Eq. \ref{Eq.8}  encourages intra-class similarity without considering inter-class separation. For two  codewords $\bm{c}_1$ and $\bm{c}_2$, we define the shared part between them as $P_s(\bm{c}_1, \bm{c}_2)$ and the distinctive part as $P_d(\bm{c}_1, \bm{c}_2)$. The value of $\hat{\bm{p}}^i$ 
changes in $P_s(\bm{c}_1, \bm{c}_2)$ and $P_d(\bm{c}_1, \bm{c}_2)$ has the same impact on the Eq. \ref{Eq.8}, while the latter part is more significant since $P_s(\bm{c}_1, \bm{c}_2)$ does not distinguish between classes.  Thus, a pixel-code contrastive learning strategy is introduced:
\begin{equation}
\label{Eq.9}
\small
\mathcal{L}_{pcc}^i = -\log\frac{\exp{(\langle \hat{\bm{p}}^i,\hat{\bm{c}}^i\rangle/\tau)}}{\exp{(\langle \hat{\bm{p}}^i,\hat{\bm{c}}^i\rangle/\tau)}+\sum_{\hat{\bm{c}}^{\bm{-}} \in \hat{\bm{\mathcal{C}}}^{\bm{-}}}\exp{(\langle \hat{\bm{p}}^i,\hat{\bm{c}}^{\bm{-}}\rangle/\tau)}},
\end{equation}
where $\langle,\rangle$ is cosine similarity, $\hat{\bm{\mathcal{C}}}^{\bm{-}}=\{\hat{\bm{c}}_n\}_{n=1}^N/\hat{\bm{c}}^i$ and $\tau$ is the temperature to control the concentration level. Note that the target codeword $\hat{\bm{c}}^i$ is not necessarily included in $\{\hat{\bm{c}}_n\}_{n=1}^N$ if we adopt the form of a bit-wise pseudo label.  Furthermore, this loss term can be rewritten as:
\begin{equation}
\label{Eq.10}
\small
\mathcal{L}_{pcc}^i = \log(1+\sum_{\hat{\bm{c}}^{\bm{-}} \in \hat{\bm{\mathcal{C}}}^{\bm{-}}}\exp{(\langle \hat{\bm{p}}^i,\hat{\bm{c}}^{\bm{-}}-\hat{\bm{c}}^i}\rangle/\tau)),
\end{equation}
which is only calculated on the $P_d(\hat{\bm{c}}^i,\hat{\bm{c}}^{\bm{-}})$ to distinguish between codewords. These two loss terms complement each other to enhance the representative capacity of the learning features. Then, the segmentation model is trained with combinatorial loss over all training pixel samples:
\begin{equation}
\label{Eq.11}
\small
\mathcal{L}_{total} = \mathcal{L}_{bce}+\lambda_1\mathcal{L}_{pcd}+\lambda_2\mathcal{L}_{pcc},
\end{equation}
which can be utilized in both  supervised loss $\mathcal{L}^s$ and  unsupervised loss $\mathcal{L}^u$  in pseudo-label learning.

\section{Theory}
\label{theory}
In this section, we present our main theoretical results on the performance and robustness of ECOC-based DNNs. We first introduce some key tools and concepts used in our analysis, including the Neural Tangent Kernel (NTK) \cite{jacot2018neural} and its properties in the infinite width limit. Then we state our theorems, which characterize the behavior of ECOC compared to one-hot encoding in the fully supervised setting and the pseudo-label learning setting, respectively. The complete proofs with technical assumptions are deferred to the Appendix \ref{sec:proof}.
\begin{theorem}[ECOC Performance in Fully Supervised Setting] \label{theorem1}
Suppose the ECOC encoding matrix $E([C])$ is nearly orthogonal, i.e., $|E([C])^T E([C]) - nI| \leq \delta$ for some small $\delta > 0$, where $n$ is the code length and $I$ is the identity matrix. Then the ECOC-based DNN achieves performance equivalent to the one-hot encoding  in the fully supervised setting, up to an error term depending on $\delta$.
\end{theorem}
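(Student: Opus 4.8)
The plan is to exploit the closed-form description of infinite-width training furnished by the NTK, under which the trained network coincides with the min-norm kernel interpolator of its training targets. The structural fact that makes everything tractable is that the one-hot and ECOC networks share the same architecture, hence the same (scalar, per-coordinate) NTK; only their regression targets differ. Writing $Y_{oh}\in\{0,1\}^{m\times C}$ for the one-hot training labels and letting $E=E([C])$ be the encoding matrix whose columns are the codewords, the $i$-th ECOC target is $e_{y_i}^{\mathrm T}E^{\mathrm T}=\bm c_{y_i}^{\mathrm T}$, so the ECOC target matrix is exactly $Y_{ec}=Y_{oh}E^{\mathrm T}$. Because kernel regression is linear in the targets and the Gram matrix $K(X,X)$ and cross-kernel $K(x,X)$ are identical across the two output heads, the predictor inherits the same linear relation: on any test pixel $x$,
\begin{equation*}
f_{ec}(x)=K(x,X)\,K(X,X)^{-1}Y_{ec}=f_{oh}(x)\,E^{\mathrm T},\qquad f_{oh}(x)=K(x,X)\,K(X,X)^{-1}Y_{oh}.
\end{equation*}
First I would establish this identity carefully from the NTK limit, verifying that the shared per-coordinate kernel decouples the output coordinates so the interpolator acts columnwise on the targets.

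Next I would reconcile the decoding rules: one-hot uses $\arg\max_c f_{oh}(x)_c$, while ECOC decodes by the nearest codeword under the soft Hamming distance of Eq.~\ref{Eq.5}. For binary codewords of (nearly) constant Hamming weight, $\arg\min_j d_{SH}(\bm c_j,f_{ec}(x))$ coincides with $\arg\max_j\langle f_{ec}(x),\bm c_j\rangle$, because both the squared and the $\ell_1$ discrepancies differ from the correlation only by terms independent of the candidate class. Substituting the prediction identity yields
\begin{equation*}
\langle f_{ec}(x),\bm c_j\rangle=f_{oh}(x)\,E^{\mathrm T}E_{:,j}=f_{oh}(x)\,(E^{\mathrm T}E)_{:,j}.
\end{equation*}
I would then inject the near-orthogonality hypothesis by writing $E^{\mathrm T}E=nI+\Delta$ with $\|\Delta\|\le\delta$, splitting the score into a leading term $n\,f_{oh}(x)_j$ and a perturbation $f_{oh}(x)\Delta_{:,j}$ of magnitude at most $\|f_{oh}(x)\|\,\delta$. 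The leading term reproduces the one-hot $\arg\max$ verbatim, so the two classifiers agree on every pixel whose one-hot margin (the gap between its top two logits) exceeds the perturbation scale $\delta\|f_{oh}(x)\|/n$. Aggregating over the data, the set of pixels on which the decisions can differ is governed by this margin condition, giving an accuracy gap that vanishes as $\delta\to0$, i.e.\ the claimed error term depending on $\delta$.

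I expect the main obstacle to be the decoding-equivalence step rather than the NTK algebra. The clean reduction of soft-Hamming decoding to correlation maximization needs the codewords to have (approximately) equal weight; I would either assume a balanced/regular codebook, as produced by the max-min distance construction of Appendix~\ref{sec:A}, or absorb the weight imbalance into an additional $O(\delta)$-type correction. A related subtlety is that the NTK closed form applies to the network logits whereas $d_{SH}$ acts on the sigmoid probabilities; since the link is monotone per coordinate and the codewords are binary, I would argue the ordering used in decoding is preserved, or work directly with the standardized logits $\hat{\bm p}^i$ as in Eq.~\ref{Eq.8}. Finally I would fix the matrix norm in the hypothesis: with the spectral norm the perturbation bound above is immediate, but if $\delta$ controls an entrywise or Frobenius deviation I would convert it at the cost of an explicit dimension-dependent constant, tracked so that the final bound retains the stated dependence on $\delta$.
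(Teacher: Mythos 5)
Your proposal follows essentially the same route as the paper's proof: both express the two predictors via the NTK closed form, use linearity of kernel regression in the targets to write the ECOC output as the one-hot output multiplied by $E^{\mathrm T}$, observe that decoding becomes identical when $E^{\mathrm T}E = nI$ (the paper phrases this as the Mahalanobis norm $\|\cdot\|_{E^{\mathrm T}E}$ collapsing to a scaled Euclidean norm, you phrase it as correlation maximization), and then run a perturbation argument bounding disagreement by comparing the one-hot margin $\Delta_c-\Delta_{\hat c}$ against an $O(\delta/\sqrt n)$ term. Your extra care about soft-Hamming versus $\ell_2$ decoding and sigmoid versus logits addresses a mismatch between the method section and the theory section that the paper's own proof also glosses over, so it is a refinement rather than a different approach.
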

\begin{remark}
In practice, the codewords of ECOC are often designed to be approximately orthogonal. Due to the smoothness of the NTK, we can expect the performance of ECOC-based DNN to be close to that of one-hot encoding when the codewords are nearly orthogonal.
\end{remark}

\begin{theorem}[ECOC Robustness in Pseudo-Label Learning] \label{theorem2}
Suppose $E([C]) \in \{-1, +1\}^{C \times n}$ has code length $n$ and minimum distance $d$, and the binary classifiers $f_1(x), \ldots, f_n(x)$ satisfy the margin condition with parameters $\gamma_1, \ldots, \gamma_n$. Assume that the pseudo-labels are treated as class labels that are corrupted by random noise with probability $\epsilon$. If the minimum distance $d$ satisfies:
\begin{equation}
d > \frac{16\epsilon \kappa^2}{\gamma^2} \left( \frac{(1+\log 2)n}{2} - \log(2C)\right) + 2\frac{\hat{\gamma}^2}{\gamma^2},
\end{equation}
where $\gamma = \min_k \gamma_k$, $\hat{\gamma} = \min_j \hat{\gamma}_j$, and $\kappa = \kappa(B, L, \phi(0))$, then the classification error probability of the ECOC-based DNNs admits a tighter upper bound than that of one-hot encoding  under the same noise level $\epsilon$. 
\end{theorem}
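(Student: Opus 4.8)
The plan is to reduce the ECOC classification error to a bit-flip counting problem, exploit the minimum distance $d$ for error correction, and then propagate the $\epsilon$-label-noise through each bit classifier using its margin together with the NTK machinery already established in Theorem~\ref{theorem1}. First I would invoke the error-correcting property of the codebook: under nearest-codeword soft-Hamming decoding (Eq.~\ref{Eq.5}--\ref{Eq.6}), a pixel is misclassified only if its predicted bit pattern falls into the Hamming ball of radius $d/2$ around a \emph{wrong} codeword, which forces at least $\lceil d/2 \rceil$ of the $n$ classifiers $f_1,\dots,f_n$ to disagree with the true codeword. Hence $P(\text{ECOC error}) \le P\big(\#\{\text{bit errors}\} \ge d/2\big)$, and the whole problem collapses to controlling the aggregate bit-error count.

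Second, I would bound the per-bit error probability. Each $f_k$ satisfies the margin condition with parameter $\gamma_k$, and by the smoothness of the NTK exploited in Theorem~\ref{theorem1}, the output of $f_k$ is a Lipschitz function (with constant governed by $\kappa = \kappa(B,L,\phi(0))$) of the training targets seen through the shared feature map $g$. Corrupting a label with probability $\epsilon$ therefore perturbs the decision statistic by a controlled amount, and combining the margin $\gamma_k$ with a sub-Gaussian/Hoeffding tail yields a per-bit error rate $\bar p_k \lesssim \epsilon\kappa^2/\gamma_k^2$; worst-casing over $k$ with $\gamma = \min_k \gamma_k$ gives a uniform bit-error rate. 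Third, I would aggregate the $n$ bit errors by a Chernoff bound on $\#\{\text{bit errors}\}$ and a union bound over the $C-1$ competing codewords, obtaining a bound of the shape
\[
P(\text{ECOC error}) \;\le\; 2C\,\exp\!\Big(\tfrac{(1+\log 2)\,n}{2}-\tfrac{\gamma^2}{16\epsilon\kappa^2}\,d\Big),
\]
where the $\log(2C)$ is the union-bound factor over codewords and the $(1+\log 2)n/2$ term is the binomial moment-generating/Hamming-ball volume contribution counting how many bit patterns decode incorrectly. In parallel, the one-hot scheme reduces to a single per-class decision with margin $\hat\gamma = \min_j \hat\gamma_j$ and no redundancy, so its error carries its own $\log(2C)$ union factor and an exponent governed by $\hat\gamma^2/(\epsilon\kappa^2)$. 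Requiring the ECOC exponent to dominate the one-hot exponent makes the common $\log(2C)$ factors partially cancel and rearranges precisely to
\[
d \;>\; \frac{16\epsilon\kappa^2}{\gamma^2}\Big(\frac{(1+\log 2)n}{2}-\log(2C)\Big)+2\frac{\hat\gamma^2}{\gamma^2},
\]
with the $2\hat\gamma^2/\gamma^2$ offset being exactly the slack needed to match the one-hot margin baseline.

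\textbf{The main obstacle} I anticipate lies in the transition between the second and third steps: the $n$ bit classifiers share the backbone feature map $g$, so their errors are statistically \emph{dependent}, whereas the naive Chernoff/Hoeffding aggregation assumes independence. I would address this by conditioning on the NTK feature representation, under which (in the infinite-width limit invoked in Theorem~\ref{theorem1}) the bit predictions decouple, or by replacing the independence bound with a bounded-difference/martingale concentration tolerant of the shared randomness. Equally delicate is tracking the $\epsilon$-corruption through the NTK regression residual so that the constant $16$ and the entropy factor $(1+\log 2)/2$ emerge exactly, rather than treating label noise as an independent additive perturbation of the target codeword; getting these constants to line up cleanly with the one-hot comparison is where the bulk of the careful bookkeeping will sit.
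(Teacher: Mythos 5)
Your plan follows essentially the same route as the paper's proof: the Hamming-ball argument that misclassification forces at least $d/2$ bit errors, a margin-plus-output-bound ($\kappa$) perturbation lemma giving an exponential per-bit error rate under the induced bit-level noise, aggregation via $\binom{n}{d/2}p^{d/2}\le(2en/d)^{d/2}p^{d/2}$ (your ``Hamming-ball volume'' term, simplified with $d<n$ to the $(1+\log 2)n/2$ factor), a union bound over $C$ classes for the one-hot baseline with margin $\hat\gamma$, and a comparison of exponents yielding the stated condition on $d$. Two bookkeeping slips to fix when executing: the per-bit rate should be $p_k\le 2\exp\bigl(-\gamma_k^2/(8\epsilon_k\kappa^2)\bigr)$ with $\epsilon_k=\epsilon(n+d)/(2n)$ (your polynomial form $\epsilon\kappa^2/\gamma_k^2$ would not produce your own aggregate bound), and the $\log(2C)$ in the final inequality comes solely from the one-hot union bound --- the paper's ECOC bound carries no $C$ prefactor, so there is no cancellation of ``common'' $\log(2C)$ factors.
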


\begin{remark}
Theorem \ref{theorem2} provides a comparison between the robustness of ECOC and one-hot encoding under label noise in the context of pseudo-label learning. It shows that ECOC can achieve a tighter error bound than one-hot encoding, provided that the minimum distance of the ECOC matrix is sufficiently large compared to the noise level. This result suggests that we can employ a larger minimum distance $d$ to cope with higher noise levels, demonstrating the robustness of the ECOC-based DNN against label noise. 
In our experiments, we  also demonstrate that ECOC can achieve better model calibration (Appendix \ref{sec:E}), thus enhancing the reliability of pseudo-labels.
\end{remark}

\section{Experiment}
\label{sec:experiment}

\subsection{Experimental Setup}
\noindent \textbf{Datasets.}
%  For the task of UDA, we evaluate our approach on two standard benchmarks for synthetic-to-real adaption of street scenes. As synthetic datasets, we use GTAv\cite{richter2016playing}, which consists of 24,966 synthetic images with resolution $1914 \times 1052$ and  SYNTHIA\cite{ros2016synthia}, which consists of 9400 synthetic images with resolution $1280 \times 960$. As the target domain,  Cityscapes \cite{cordts2016cityscapes} is a real-world urban dataset, with 2975 images for training and 500 for validation. For the SSL setting, The PASCAL VOC
% 2012 \cite{everingham2015pascal} is a generic object segmentation benchmark comprising 20 object classes and 1 background class with 1,464 and 1,449 finely annotated images for training and validation, respectively. There is also an augmented set, adding 10,582 images to the standard training set. 
We evaluate our approach on two standard benchmarks for synthetic-to-real adaptation of street scenes in the UDA task. The synthetic datasets include GTAv \cite{richter2016playing} (24,966 images) and SYNTHIA \cite{ros2016synthia} (9,400 images). Cityscapes \cite{cordts2016cityscapes}, a real-world urban dataset, serves as the target domain, with 2,975 training and 500 validation images. For the SSL setting, we use Cityscapes, PASCAL VOC 2012 \cite{everingham2015pascal}, a generic object segmentation benchmark with 1,464 training and 1,449 validation images, along with an augmented set of 10,582 additional training images, and COCO \cite{lin2014microsoft},  a challenging benchmark composed of 118k/5k training/validation images with 81 classes.

 \begin{table*}[t]
  \centering \tiny
    \caption{UDA performance on two synthetic-to-real benchmarks, where the IoU improved by ECOCSeg is marked as \textbf{bold}. For each benchmark, results are acquired based on CNN-based model \cite{chen2017deeplab} (C) and Transformer-based model \cite{xie2021segformer} (T). mIoUs on SYN.$\rightarrow$CS. are calculated over 16 classes.}  
    \label{table1}
    \tabcolsep=2.5pt
    \begin{tabular}{c|c|ccccccccccccccccccc|c} 
    \hline
       \textbf{Method} &Arch.& \rotatebox{90}{Road}  & \rotatebox{90}{Sidewalk} & \rotatebox{90}{Building} & \rotatebox{90}{Wall} & \rotatebox{90}{Fence} & \rotatebox{90}{Pole} & \rotatebox{90}{Light} & \rotatebox{90}{Sign} & \rotatebox{90}{Veg} & \rotatebox{90}{Terrain} & \rotatebox{90}{Sky} & \rotatebox{90}{Person} & \rotatebox{90}{Rider} & \rotatebox{90}{Car} & \rotatebox{90}{Truck} & \rotatebox{90}{Bus} & \rotatebox{90}{Train} & \rotatebox{90}{Motor} & \rotatebox{90}{Bike} & \textbf{mIoU}\\
      \hline 
      \multicolumn{21}{c}{\textbf{GTAv$\rightarrow$Cityscapes(Val.)}}\\
      \hline
     % DACS& C&89.9&39.7&87.9&39.7&39.5&38.5&46.4&52.8&88.0&44.0&88.8&67.2&35.8&84.5&45.7&50.2&0.2&27.3&34.0&52.1  \\ 
     ProDA \cite{zhang2021prototypical}  & C&87.8&56.0&79.7&46.3&44.8&45.6&53.5&53.5&88.6&45.2&82.1&70.7&39.2&88.8&45.5&50.4&1.0&48.9&56.4&57.5 \\ 
     CPSL \cite{li2022class}& C&92.3&59.5&84.9&45.7&29.7&52.8&61.5&59.5&87.9&41.6&85.0&73.0&35.5&90.4&48.7&73.9&26.3&53.8&53.9&60.8  \\  
     TransDA \cite{chen2022smoothing} & T&94.7&64.2&89.2&48.1&45.8&50.1&60.2&40.8&90.4&50.2&93.7&76.7&47.6&92.5&56.8&60.1&47.6&49.6&55.4&63.9    \\ 
     ADFormer  \cite{he2025attention}& T&96.7 &75.1& 88.8 &57.5 &45.9 &45.6& 55.4 &59.8 &90.2& 45.6 &92.1 &70.8 &43.0& 91.0& 78.9 &79.3 &68.7& 52.7 &65.0& 69.2 \\
     CDAC \cite{wang2023cdac}& T &97.1 &78.7 &91.8& 59.6 &57.1& 59.1& 66.1& 72.2 &91.8 &53.1 &94.5 &79.4 &51.6 &94.6 &84.9& 87.8& 78.7 &64.9& 67.6 &75.3\\

\hline
         DACS \cite{tranheden2021dacs}& C&89.9&39.7&87.9&39.7&39.5&38.5&46.4&52.8&88.0&44.0&88.8&67.2&35.8&84.5&45.7&50.2&0.2&27.3&34.0&52.1  \\ 
      \rowcolor{black!10}\textbf{+ECOCSeg} & C&\textbf{95.6}&\textbf{71.8}&\textbf{90.2}&37.8&31.4&\textbf{44.8}&\textbf{50.8}&\textbf{58.8}&\textbf{90.4}&\textbf{50.3}&\textbf{91.3}&\textbf{68.6}&23.5&\textbf{91.2}&\textbf{49.8}&\textbf{55.4}&\textbf{8.8}&15.2&9.8&$\textbf{54.5}_{\uparrow2.4}$ \\
         \hdashline[1pt/1pt]
         DAFormer \cite{hoyer2022daformer}  & T    &95.7&70.2&89.4&53.5&48.1&49.6&55.8&59.4&89.9&47.9&92.5&72.2&44.7&92.3&74.5&78.2&65.1&55.9&61.8&68.3    \\ 

\rowcolor{black!10}\textbf{+ECOCSeg} & T&\textbf{96.7}&\textbf{75.6}&\textbf{89.4}&\textbf{54.0}&\textbf{51.4}&\textbf{55.1}&\textbf{59.4}&\textbf{61.9}&\textbf{90.1}&46.6&90.0&71.5&42.4&\textbf{92.8}&\textbf{79.7}&\textbf{85.4}&\textbf{79.1}&\textbf{60.0}&58.2&$\textbf{70.5}_{\uparrow2.2}$  \\
      
      \hdashline[1pt/1pt]
     MIC  \cite{hoyer2023mic} & T         &97.4&80.1&91.7&61.2&56.9&59.7&66.0&71.3&91.7&51.4&94.3&79.8&56.1&94.6&85.4&90.3&80.4&64.5&68.5&75.9              \\ 
                    
\rowcolor{black!10} \textbf{+ECOCSeg} & T&\textbf{97.9}&\textbf{81.4}&\textbf{91.9}&\textbf{62.2}&54.3&\textbf{64.2}&\textbf{67.4}&\textbf{76.1}&\textbf{92.9}&\textbf{54.4}&94.2&\textbf{82.1}&53.0&\textbf{95.2}&\textbf{89.6}&\textbf{90.8}&\textbf{82.3}&61.9&\textbf{69.4}&$\textbf{76.9}_{\uparrow1.0}$ \\
     
    \hline \multicolumn{21}{c}{\textbf{SYNTHIA$\rightarrow$Cityscapes(Val.)}}\\
      \hline
    
     ProDA \cite{zhang2021prototypical}& C&87.8&45.7&84.6&37.1&0.6&44.0&54.6&37.0&88.1&-&84.4&74.2&24.3&88.2&-&51.1&-&40.5&45.6&55.5  \\ 
     CPSL \cite{li2022class}& C &87.2&43.9&85.5&33.6&0.3&47.7&57.4&37.2&87.8&-&88.5&79.0&32.0&90.6&-&49.4&-&50.8&59.8&57.9  \\   

      TransDA \cite{chen2022smoothing}& T&90.4&54.8&86.4&31.1&1.7&53.8&61.1&37.1&90.3&-&93.0&71.2&25.3&92.3&-&66.0&-&44.4&49.8&59.3    \\ 
      ADFormer  \cite{he2025attention}& T& 91.8 &53.6 &87.0 &40.5& 5.2 &46.8 &52.1& 54.9& 88.4 &- &92.6& 72.5& 45.7& 86.1 &- &61.6& - &50.4 &64.4& 62.1\\
  CDAC \cite{wang2023cdac}& T&93.1 &68.5 &89.8& 51.2& 8.9 &59.4 &65.5 &65.3 &84.7&-& 94.4 &81.2& 57.0 &90.5 &-&56.9 &-&66.8 &66.4 &68.7\\
      \hline
       DACS \cite{tranheden2021dacs}& C&80.6&25.1&81.9&21.5&2.9&37.2&22.7&24.0&83.7&-&90.8&67.6&38.3&82.9&-&38.9&-&28.5&47.6&48.3   \\ 
        \rowcolor{black!10}   \textbf{+ECOCSeg} & C&\textbf{88.0}&17.6&\textbf{88.2}&17.3&\textbf{9.3}&\textbf{41.7}&\textbf{47.4}&\textbf{50.2}&\textbf{87.8}&-&89.1&\textbf{72.6}&\textbf{41.5}&\textbf{86.2}&-&9.3&-&\textbf{34.5}&\textbf{53.6}& $\textbf{52.1}_{\uparrow3.8}$ \\ 
             \hdashline[1pt/1pt] 
         DAformer \cite{hoyer2022daformer}  & T     &84.5&40.7&88.4&41.5&6.5&50.0&55.0&54.6&86.0&-&89.8&73.2&48.2&87.2&-&53.2&-&53.9&61.7&60.9    \\

 \rowcolor{black!10}   \textbf{+ECOCSeg} & T & \textbf{90.6}&\textbf{50.3}&\textbf{89.1}&\textbf{41.8}&\textbf{11.3}&49.5&\textbf{56.8}&\textbf{58.3}&\textbf{86.9}&-&\textbf{91.9}&\textbf{76.2}&44.2&\textbf{88.4}&-&\textbf{61.3}&-&\textbf{57.8}&58.3&$\textbf{63.3}_{\uparrow2.4}$ \\ 
   \hdashline[1pt/1pt]
     MIC \cite{hoyer2023mic}   & T          &86.6&50.5&89.3&\textbf{}47.9&7.8&59.4&66.7&63.4&87.1&-&94.6&81.0&58.9&90.1&-&61.9&-&67.1&64.3&67.3              \\ 
     
     \rowcolor{black!10} \textbf{+ECOCSeg} & T&\textbf{94.3}&\textbf{68.8}&89.0&42.3&\textbf{13.6}&\textbf{60.5}&\textbf{68.8}&57.5&\textbf{90.4}&-&94.4&80.1&54.5&\textbf{90.7}&-&\textbf{68.7}&-&64.0&\textbf{67.1}&  $ \textbf{69.0}_{\uparrow1.7}$                            \\  
     \hline
    \end{tabular}
  
  % \end{center}
\end{table*}

\noindent \textbf{UDA Setting.}
% We evaluate ECOCSeg based on two widely used frameworks, DAFormer \cite{hoyer2022daformer} and MIC \cite{hoyer2023mic}, with  MIT-B5 \cite{xie2021segformer} as the backbone. All experiments are trained on one RTX-3090 (24 GB memory) GPU for DAFormer and two for MIC. The network is trained for 40K iterations with a batch size of 2  where an AdamW optimizer is used with learning rates of $6\times10^{-5}$ for the encoder and $6\times10^{-4}$ for the decoder, a weight decay of 0.01, and linear learning rate warm-up for the first 1.5K iterations.  The input images are rescaled and randomly cropped to $512\times512$ following the same data augmentation in DAFormer, and the  EMA coefficient for updating the teacher net is set to be 0.999. 
We evaluate ECOCSeg on three widely used frameworks, DACS \cite{tranheden2021dacs} with ResNet101 \citep{he2016deep} backbone, DAFormer \cite{hoyer2022daformer}, and MIC \cite{hoyer2023mic}, with MIT-B5 \cite{xie2021segformer} backbone. Experiments are conducted on one RTX-3090 GPU for DACS and DAFormer, and two for MIC. The network is trained for 40K iterations (batch size 2) using AdamW optimizer with learning rates of $6\times10^{-5}$ (encoder) and $6\times10^{-4}$ (decoder), weight decay of 0.01, and linear warm-up for the first 1.5K iterations. Images are rescaled and randomly cropped to $512\times512$ following DAFormer's augmentation, and the EMA coefficient for updating the teacher net is 0.999.

\noindent \textbf{SSL Setting.}
We implement our method on ST++ \cite{yang2022st++}, FixMatch \cite{sohn2020fixmatch}, UniMatch \cite{yang2023revisiting} and adopt DeepLabv3+ \cite{chen2018encoder} with a ResNet \cite{he2016deep} backbone as our segmentation model. For Pascal, we use a crop size of $321\times321$ and $513\times513$, a batch size of 8, and a learning rate of 0.001 with an SGD optimizer. The model is trained for 80 epochs using a poly learning rate scheduler on 2$\times$ RTX 3090 GPUs. More experiment settings  are detailed in Appendix \ref{sec: ssl_exp}.

% \begin{figure}[t]
% \centering
% \includegraphics[width=0.99\columnwidth]{figure/fig4.pdf} % Reduce the figure size so that it is slightly narrower than the column. Don't use precise values for figure width. This setup will avoid overfull boxes.
% \vspace {-0.5em}
% \caption{Qualitative results  on GTAv$\rightarrow$Cityscapes(Val.).  Significant improvements are marked with dotted boxes. }
% \label{fig4}
% \vspace {-0.5em}
% \end{figure}

\noindent \textbf{ECOCSeg Parameters.}
ECOCSeg uses $M_{text}$ as the default codebook with codeword length $K = 40$ for Cityscapes and Pascal, and $K = 60$ for COCO. We set the loss weight $\lambda_1 = 5$ and $\lambda_2 = 2$  with the temperature $\tau = 0.5$. The confidence threshold $T$ for reliable bit mining is set to 0.95. Specifically, we use the mean value of bit-wise confidence, i.e., $\frac{1}{K}\sum_{k=1}^Kq(k|\bm{z}_i)$,  to estimate pixel-wise confidence, which is needed in Eq. \ref{eq.2}.

\subsection{ECOCSeg for UDA}
% We build ECOCSeg with DAFormer \cite{hoyer2022daformer} and MIC \cite{hoyer2023mic} and compare to state-of-the-art UDA approaches on the GTAv$\rightarrow$Cityscapes and SYNTHIA$\rightarrow$Cityscapes benchmarks. For a fair comparison, we train the model with the same hyperparameters as baseline methods. We report results based on whole inference on DAFormer and slide inference on MIC without other test time augmentation strategies. The results are summarized in Table \ref{table1}. ECOCSeg achieves $2.2\%$ and $2.4\%$ gains on two benchmarks built with the strong baseline DAFormer. Compared to the previous state-of-the-art method MIC, ECOCSeg also achieves consistent improvements for most classes, resulting in $1.0\%$ and $1.7\%$ gains.
We integrate ECOCSeg with three baselines and compare with state-of-the-art UDA approaches on  GTAv$\rightarrow$Cityscapes and SYNTHIA$\rightarrow$Cityscapes benchmarks. For a fair comparison, we train the model with same hyperparameters as the baseline methods. We report results based on whole inference on DACS, DAFormer and slide inference on MIC without other test time augmentation strategies. As shown in Table \ref{table1}, ECOCSeg achieves $2.4\%$ and $2.9\%$ gains on the two benchmarks built with DACS and  $2.2\%$ and $2.4\%$ gains  built with the strong baseline DAFormer. Compared to the previous state-of-the-art method MIC, ECOCSeg also achieves consistent improvements for most classes, resulting in $1.0\%$ and $1.7\%$ gains.

% Furthermore, we observe that significant performance improvements mainly focus on the groups of classes that are prone to confusion (e.g., \{\textit{road}, \textit{sidewalk}\}, \{\textit{truck}, \textit{bus}, \textit{train}\}), which typically encounter unstable adaptation in pseudo-label learning based on one-hot encoding. This observation is also reflected in the example predictions in Fig. \ref{fig4}. While the previous method struggles to distinguish confusing classes, ECOCSeg significantly improves their accuracy, primarily attributed to the supervision provided by higher-quality pseudo-labels.
Furthermore, significant gains are primarily observed in confusing classes (e.g., \{\textit{road}, \textit{sidewalk}\}, \{\textit{truck}, \textit{bus}, \textit{train}\}), which typically encounter unstable adaptation in pseudo-label learning based on one-hot encoding. This observation is also reflected in qualitative results (Appendix \ref{sec:F}). While previous methods struggle to distinguish confusing classes, ECOCSeg significantly improves their accuracy, primarily attributed to the supervision provided by higher-quality pseudo-labels.

\subsection{ECOCSeg for SSL}
% While the above methods for UDA are developed within the self-training paradigm, for the SSL setting, we combine ECOCSeg with the state-of-the-art approach UniMatch \cite{yang2023revisiting}, which is a consistency regularization framework, to compare the performance. 
We evaluate the performance using 1/16, 1/8, and 1/4 labeled data with ResNet-50 and ResNet-101 backbones with three different SSL frameworks. As shown in Table \ref{table2}, ECOCSeg consistently outperforms the baselines under different partition protocols, training resolutions, and backbone architectures, with gains ranging from 1.1\% to 3.7\%. This confirms that these pseudo-label learning methods can benefit from the robust pseudo-labels provided by ECOCSeg.

In Appendix \ref{sec: ssl_exp}, we evaluate on more powerful baselines, provide more quantitative  results on the Cityscapes, COCO and additional real-world Scenarios. ECOCSeg demonstrates significant gains across multiple different datasets and network architectures, indicating the versatility of our method.

\begin{figure}[ht]
  \centering
  % Left: Table A
  \begin{minipage}[t]{0.48\textwidth}
    \centering
    \tiny \tabcolsep=4.5pt
 % \captionsetup{type=table}
    \captionof{table}{SSL performance on Pascal. The $321$ and $513$ denote the training resolution. } 
  
    \label{table2}
    
    \begin{tabular}{c|c|cccccc} 
    \hline
    & &\multicolumn{3}{c|}{ResNet-50}& \multicolumn{3}{|c}{ResNet-101}    \\ 
            \cline{3-8}

     \multirow{-2}{*}{\textbf{Method}} & \multirow{-2}{*}{Res.}     &1/16 & 1/8 & 1/4 & 1/16&1/8&1/4 \\ 
      \hline \hline  
     Sup-only &321  & 61.2& 67.3&70.8&65.6&70.4&72.8 \\ 
   CAC \cite{lai2021semi} &321  & 70.1& 72.4&74.0&72.4&74.6&76.3 \\   \hline 
     ST++ \cite{yang2022st++} &321  & 72.6& 74.4&75.4&74.5&76.3&76.6 \\ 
         \rowcolor{black!10} \textbf{+ECOCSeg} &321    &\textbf{74.0}& \textbf{76.1}& \textbf{76.5}&\textbf{77.1}&\textbf{77.9}& \textbf{78.0}\\ 
     \hdashline[1pt/3pt]
     UniMatch \cite{yang2023revisiting} &321   & 74.5& 75.8&76.1&76.5&77.0&77.2 \\ 
     
     \rowcolor{black!10} \textbf{+ECOCSeg} &321    &\textbf{76.4}& \textbf{77.5}& \textbf{77.6}&\textbf{78.1}&\textbf{78.6}& \textbf{78.9}\\ 
    \hline

     \hline \hline  
     Sup-only &513  & 62.4& 68.2&72.3&67.5&71.1&74.2\\ 
   U$^2$PL \cite{wang2022semi}&513  & 72.0& 75.1&76.2&74.4&77.6&78.7 \\ 
    PS-MT \cite{liu2022perturbed} &513  & 72.8& 75.7&76.4&75.5&78.2&78.7 \\ 
  DAW \cite{sun2023daw} &513  & 76.2 &77.6& 77.4 &78.5& 78.9 &79.6 \\
RankMatch \cite{mai2024rankmatch} &513  &  76.6 &77.8& 78.3 &78.9 &79.2 &80.0 \\
    \hline 
    Fixmatch \cite{sohn2020fixmatch}&513  & 70.6 &73.9& 75.1 &74.3 &76.3& 76.9\\
    \rowcolor{black!10} \textbf{+ECOCSeg} &513    &\textbf{74.3}& \textbf{75.5}& \textbf{76.3}&\textbf{76.0}&\textbf{77.8}& \textbf{78.2}\\ 
     \hdashline[1pt/3pt]
     UniMatch \cite{yang2023revisiting} &513   & 75.8& 76.9&76.8&78.1&78.4&79.2 \\ 
     \rowcolor{black!10} \textbf{+ECOCSeg} &513    &\textbf{77.1}& \textbf{78.3}& \textbf{78.5}&\textbf{79.2}&\textbf{79.8}& \textbf{80.3}\\ 
    \hline
    
      \end{tabular}
    
  \end{minipage}
  \hfill
  % Right: Table B and C stacked
  \begin{minipage}[t]{0.48\textwidth}
    \centering

    \tiny   \tabcolsep=7pt

    \captionof{table}{Ablation study on optimization criterion, built with  DAFormer under the \textbf{fully supervised learning setting} on Cityscapes.  } 

    \label{table3}
    
    \begin{tabular}{c|cccc|c} 
    \hline
     Encoding&$\mathcal{L}_{ce}$     & $\mathcal{L}_{bce}$  & $\mathcal{L}_{pcd}$ & $\mathcal{L}_{pcc}$ & mIoU \\ 
      \hline \hline  
 \rowcolor{black!5}    one-hot& \checkmark   & - & - &-&77.6 \\ \hdashline[1pt/3pt]
     \hdashline[1pt/3pt]
   &- & \checkmark& - &- &76.3\\ 
      & - & \checkmark& \checkmark&-&77.9 \\ 
     &- & \checkmark & - &\checkmark &77.8\\ 
   \multirow{-4}{*}{$M_{text}$}   &-& \checkmark & \checkmark &\checkmark&\textbf{78.1} \\    
     \hdashline[1pt/3pt]
$M_{mmd}$   & -& \checkmark& \checkmark&  \checkmark&77.7\\ 

    \hline
    \end{tabular}

    \vspace{0em} % spacing between tables
\tiny \tabcolsep=4pt
       \captionof{table}{ Ablation study on confidence threshold $T$, built with DAFormer under the   \textbf{UDA setting} on GTAv$\rightarrow$Cityscapes.  }
 
    \label{table5}
    
    \begin{tabular}{c|c|cc|ccc|c} 
    \hline
     Encoding& baseline& code    & bit  & 0.9 & 0.95 &0.99 & oracle \\ 
      \hline \hline  
    
$M_{mmd}$&68.3& 69.0  & 69.6& 69.4 &69.9&\textbf{70.2}&77.7 \\ 
$M_{text}$& 68.3  & 69.7& 69.4&70.0 & \textbf{70.5} &69.8&78.1\\ 

    \hline
    \end{tabular}
  \end{minipage}

  \vspace{-1em} 
\end{figure}

\subsection{Diagnostic Experiment}

%%%放补充材料啦
% \noindent \textbf{Coding Strategy.}
% We first visualize the similarity matrix of different encoding forms in Fig. \ref{fig5}. The classes are typically encoded in a one-hot form, which is easily susceptible to the influence of label drift in the pseudo-label learning process. The $M_{mmd}$ aims to maximize the distance between classes, ensuring the robustness of the labels. Furthermore,  $M_{text}$  takes into account the relationships and structures between different classes, ensuring that similar classes have similar encodings. This property makes the resulting binary classification problems easier to optimize.
% \begin{figure}[t]
% \centering
% \includegraphics[width=0.99\columnwidth]{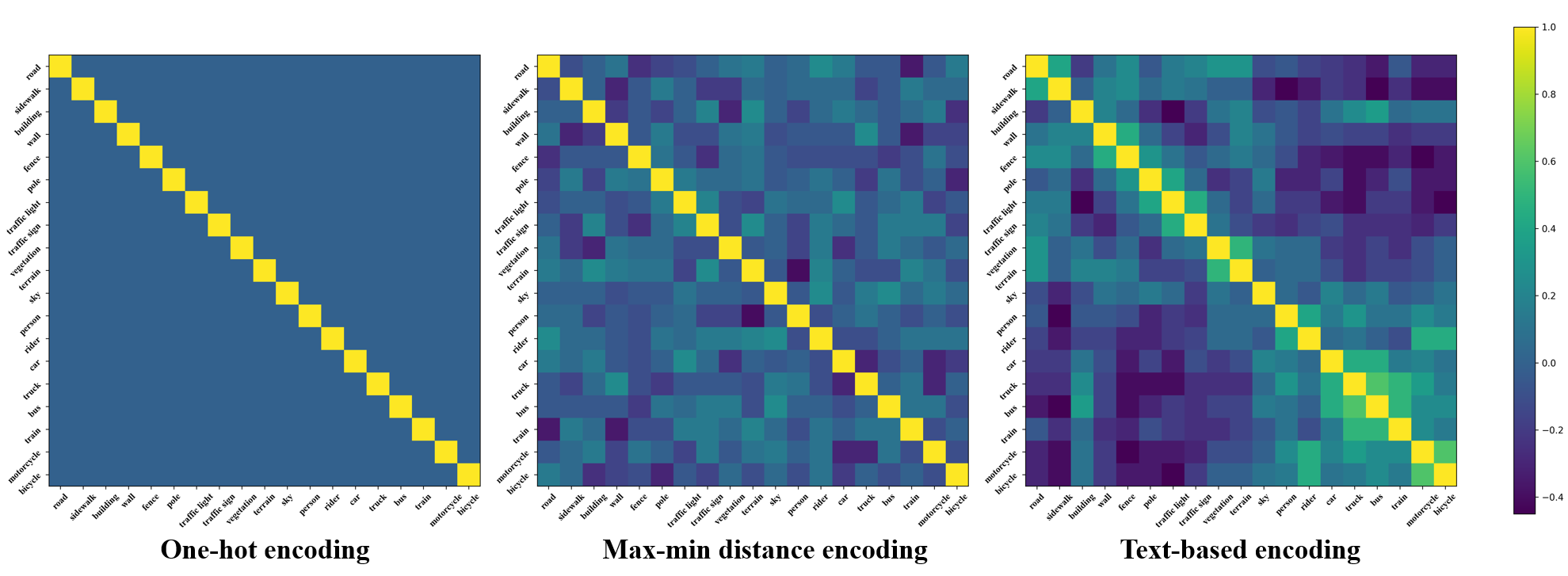} % 
% \caption{The similarity matrix of different encoding forms for 19 classes in Cityscapes, where the minimum code distance in $M_{mmd}$ is 15 and in $M_{text}$ is 8.  Note that we standardize the binary value $\{0,1\}$ to $\{-1,1\}$ for ECOC encoding.}
% \label{fig5}
% \vspace{-2em}
% \end{figure}

\noindent \textbf{Optimization Criterion.}
We evaluate the proposed optimization criterion under the fully supervised setting using DAFormer on Cityscapes in Table \ref{table3}. The first line denotes the original argmax-based one-hot encoding supervised by cross-entropy loss. When implemented only with $\mathcal{L}_{bce}$, the accuracy for $M_{text}$ is lower than baseline due to sub-optimal learning by independent binary classification. Adding $\mathcal{L}_{pcd}$ or $\mathcal{L}_{pcc}$ individually brings gains with 1.6\% and 1.5\%.  Combining all the losses yields the best performance, surpassing the one-hot paradigm by a higher margin of 0.5\%. When implemented with $M_{mmd}$, the performance slightly degrades but still achieves competitive results.

\noindent \textbf{Confidence Threshold $T$.}
% As shown in Table \ref{table5}, we conduct a study comparing pseudo-label performance using code-wise and bit-wise forms. The former performs better when combined with $M_{text}$, while the latter achieves better results when combined with $M_{mmd}$. This is because the code-wise form will introduce more noise when the code distance is larger. The hybrid form combines the advantages of both forms to achieve consistent improvements. It is worth noting that despite the lower oracle performance (fully supervised setting) with $M_{mmd}$ compared to the baseline, it still achieves stronger domain adaptation capability, benefiting from stable pseudo-label learning. Furthermore, the confidence threshold $T$ controls the mixing ratio in hybrid labels as we discuss in Sec. \ref{sec:3.4}: when $T=0.5$, it is equivalent to using the code-wise form, while $T=1$ is equivalent to using the bit-wise form. To further investigate the impact of the confidence threshold in reliable bit mining,  we quantify the total number of differing bits between the code-wise labels and bit-wise labels during the training process in  Fig. \ref{fig6} (a). Subsequently, we present the total number of bits corrected by the reliable bit mining algorithm in Fig. \ref{fig6} (b). Across different values of $T$, as the training process progresses, the differences between the two pseudo-label forms decrease, and the number of corrected bits increases.  An appropriate selection of $T$ value can generate a robust hybrid label that combines the advantages of both forms, thereby reducing pseudo label noise.
We conduct a study comparing pseudo-label performance using code-wise and bit-wise forms in Table \ref{table5}. The former performs better when combined with $M_{text}$, while the latter achieves better results when combined with $M_{mmd}$. This is because the code-wise form introduces more noise with larger code distance (please refer to Appendix \ref{sec:B} for more analysis). The hybrid form combines the advantages of both forms to achieve consistent improvements. It is worth noting that despite the lower oracle performance (fully supervised setting) with $M_{mmd}$, it achieves competitive domain adaptation capability, benefiting from robust pseudo-label learning. Furthermore, the confidence threshold $T$ controls the mixing ratio in hybrid labels as discussed in Sec. \ref{sec:3.4}: when $T=0.5$, it is equivalent to code-wise form, while $T=1$ is equivalent to bit-wise form. To further investigate the impact of $T$, we quantify the count of differing bits between the code-wise labels and bit-wise labels (Difference Count), and the count of bits corrected by the reliable bit mining algorithm (Correction Count) in Fig. \ref{fig6} (a). Across different values of $T$, as the training process progresses, the differences between the two pseudo-label forms decrease, and the count of corrected bits increases. An appropriate selection of $T$ value can generate a robust hybrid label that combines the advantages of both forms, thereby reducing pseudo-label noise.

\begin{wrapfigure}{r}{0.52\textwidth}
  \centering
  \vspace{-1.2em}
  \includegraphics[width=0.48\textwidth]{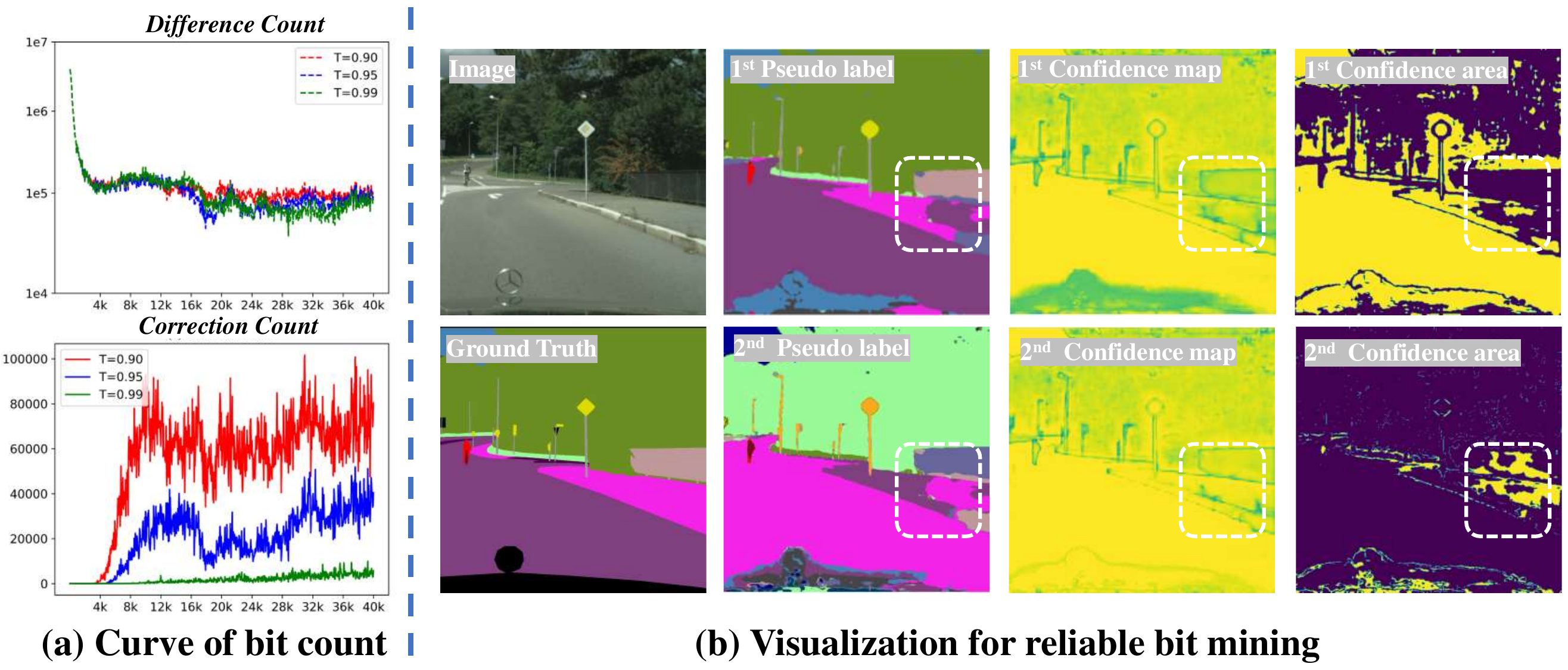}
  % \vspace{-1em}
  \caption{(a) Bit count curves under different $T$. (b) Visualization of 2-nearest codewords and confidence map; dotted boxes indicate confusing areas.}
  \label{fig6}
  \vspace{-1.2em}
\end{wrapfigure}

\noindent \textbf{Analysis of Reliable Bit Mining.} 
% To further explain the working mechanism of the reliable bit mining algorithm, we show an example during the training process on  GTAv$\rightarrow$Cityscapes in Fig. \ref{fig7}. In this strategy, the model first queries the nearest class and calculates the confidence map. Then, due to the confusion between \textit{sidewalk} and \textit{road} classes in the marked area, the confidence of this area falls below the threshold $T$. When querying the second class, the model obtains the accurate classification, and the shared bits between code-wise labels of the two classes exhibit higher confidence, which can be viewed as reliable bits. Moreover, as classification errors tend to occur in the low confidence region, the bit-wise pseudo label provides softer supervision with lower noise compared to cod-wise form. This hybrid approach combines the advantages of both forms, allowing for a more robust and effective training process.
Fig. \ref{fig6} (b) shows an example of the reliable bit mining algorithm during training. The model first queries the nearest class and calculates the confidence map. Due to the confusion between \textit{sidewalk} and \textit{road} classes in the marked area, the confidence of this area falls below the threshold $T$. When querying the second class, the model obtains accurate classification, and the shared bits between code-wise labels of the two classes exhibit higher confidence, which can be viewed as reliable bits. As classification errors tend to occur in low confidence regions, the bit-wise pseudo-label provides softer supervision with lower noise compared to the code-wise form. This hybrid approach combines the advantages of both forms, allowing for a more robust and effective training process.

\section{Conclusion}
\label{sec:conclusion}
% In this paper, we propose ECOCSeg, a new perspective for semantic segmentation that utilizes the error-correcting output codes as a  fine-grained encoding form for each class and facilitates stable pseudo-label learning. Concretely,  we first develop the coding strategies and optimization criteria to couple an ECOC-based classifier with an existing segmentation framework and then propose a bit-level label denoising mechanism to further obtain the higher-quality pseudo labels. Our method can easily be built upon the most existing works and demonstrate consistent improvements on multiple UDA and SSL benchmarks.
In this paper, we present ECOCSeg, a novel framework that introduces a new perspective that utilizes the error-correcting output codes as a  fine-grained encoding form for each class and facilitates stable pseudo-label learning  in semantic segmentation.  By leveraging an ECOC-based encoding form, a bit-level pseudo-label denoising mechanism, and customized optimization criteria, ECOCSeg effectively addresses the challenges associated with pseudo-label learning. The versatility of ECOCSeg allows it to be easily integrated with existing works, consistently demonstrating improvements across multiple unsupervised domain adaptation (UDA) and semi-supervised learning (SSL) benchmarks.

\section*{Acknowledgements} 
This work was partially supported by the National Key R\&D Program of China (Grant No. 2024YFB3909902), and the Youth Innovation Promotion Association of the Chinese Academy of Sciences (CAS).

\bibliographystyle{splncs04}
\bibliography{main}

%%%%%%%%%%%%%%%%%%%%%%%%%%%%%%%%%%%%%%%%%%%%%%%%%%%%%%%%%%%%
\clearpage
\newpage
\section*{NeurIPS Paper Checklist}

\begin{enumerate}

\item {\bf Claims}
    \item[] Question: Do the main claims made in the abstract and introduction accurately reflect the paper's contributions and scope?
    \item[] Answer: \answerYes{} % Replace by \answerYes{}, \answerNo{}, or \answerNA{}.
    \item[] Justification: The abstract and introduction accurately reflect the main contributions. Please refer to abstract and Section \ref{sec:intro}.
    \item[] Guidelines:
    \begin{itemize}
        \item The answer NA means that the abstract and introduction do not include the claims made in the paper.
        \item The abstract and/or introduction should clearly state the claims made, including the contributions made in the paper and important assumptions and limitations. A No or NA answer to this question will not be perceived well by the reviewers. 
        \item The claims made should match theoretical and experimental results, and reflect how much the results can be expected to generalize to other settings. 
        \item It is fine to include aspirational goals as motivation as long as it is clear that these goals are not attained by the paper. 
    \end{itemize}

\item {\bf Limitations}
    \item[] Question: Does the paper discuss the limitations of the work performed by the authors?
    \item[] Answer: \answerYes{} % Replace by \answerYes{}, \answerNo{}, or \answerNA{}.
    \item[] Justification: Please refer to Appendix \ref{limitation}  for our discussions on the limitations.
    \item[] Guidelines:
    \begin{itemize}
        \item The answer NA means that the paper has no limitation while the answer No means that the paper has limitations, but those are not discussed in the paper. 
        \item The authors are encouraged to create a separate "Limitations" section in their paper.
        \item The paper should point out any strong assumptions and how robust the results are to violations of these assumptions (e.g., independence assumptions, noiseless settings, model well-specification, asymptotic approximations only holding locally). The authors should reflect on how these assumptions might be violated in practice and what the implications would be.
        \item The authors should reflect on the scope of the claims made, e.g., if the approach was only tested on a few datasets or with a few runs. In general, empirical results often depend on implicit assumptions, which should be articulated.
        \item The authors should reflect on the factors that influence the performance of the approach. For example, a facial recognition algorithm may perform poorly when image resolution is low or images are taken in low lighting. Or a speech-to-text system might not be used reliably to provide closed captions for online lectures because it fails to handle technical jargon.
        \item The authors should discuss the computational efficiency of the proposed algorithms and how they scale with dataset size.
        \item If applicable, the authors should discuss possible limitations of their approach to address problems of privacy and fairness.
        \item While the authors might fear that complete honesty about limitations might be used by reviewers as grounds for rejection, a worse outcome might be that reviewers discover limitations that aren't acknowledged in the paper. The authors should use their best judgment and recognize that individual actions in favor of transparency play an important role in developing norms that preserve the integrity of the community. Reviewers will be specifically instructed to not penalize honesty concerning limitations.
    \end{itemize}

\item {\bf Theory assumptions and proofs}
    \item[] Question: For each theoretical result, does the paper provide the full set of assumptions and a complete (and correct) proof?
    \item[] Answer: \answerYes{} % Replace by \answerYes{}, \answerNo{}, or \answerNA{}.
    \item[] Justification: Please refer to Appendix \ref{sec:proof} for the complete proof and assumptions.
    \item[] Guidelines:
    \begin{itemize}
        \item The answer NA means that the paper does not include theoretical results. 
        \item All the theorems, formulas, and proofs in the paper should be numbered and cross-referenced.
        \item All assumptions should be clearly stated or referenced in the statement of any theorems.
        \item The proofs can either appear in the main paper or the supplemental material, but if they appear in the supplemental material, the authors are encouraged to provide a short proof sketch to provide intuition. 
        \item Inversely, any informal proof provided in the core of the paper should be complemented by formal proofs provided in appendix or supplemental material.
        \item Theorems and Lemmas that the proof relies upon should be properly referenced. 
    \end{itemize}

    \item {\bf Experimental result reproducibility}
    \item[] Question: Does the paper fully disclose all the information needed to reproduce the main experimental results of the paper to the extent that it affects the main claims and/or conclusions of the paper (regardless of whether the code and data are provided or not)?
    \item[] Answer: \answerYes{} % Replace by \answerYes{}, \answerNo{}, or \answerNA{}.
    \item[] Justification: We provide implementation details in Section \ref{sec:experiment} and Appendix \ref{sec:A}.
    \item[] Guidelines:
    \begin{itemize}
        \item The answer NA means that the paper does not include experiments.
        \item If the paper includes experiments, a No answer to this question will not be perceived well by the reviewers: Making the paper reproducible is important, regardless of whether the code and data are provided or not.
        \item If the contribution is a dataset and/or model, the authors should describe the steps taken to make their results reproducible or verifiable. 
        \item Depending on the contribution, reproducibility can be accomplished in various ways. For example, if the contribution is a novel architecture, describing the architecture fully might suffice, or if the contribution is a specific model and empirical evaluation, it may be necessary to either make it possible for others to replicate the model with the same dataset, or provide access to the model. In general. releasing code and data is often one good way to accomplish this, but reproducibility can also be provided via detailed instructions for how to replicate the results, access to a hosted model (e.g., in the case of a large language model), releasing of a model checkpoint, or other means that are appropriate to the research performed.
        \item While NeurIPS does not require releasing code, the conference does require all submissions to provide some reasonable avenue for reproducibility, which may depend on the nature of the contribution. For example
        \begin{enumerate}
            \item If the contribution is primarily a new algorithm, the paper should make it clear how to reproduce that algorithm.
            \item If the contribution is primarily a new model architecture, the paper should describe the architecture clearly and fully.
            \item If the contribution is a new model (e.g., a large language model), then there should either be a way to access this model for reproducing the results or a way to reproduce the model (e.g., with an open-source dataset or instructions for how to construct the dataset).
            \item We recognize that reproducibility may be tricky in some cases, in which case authors are welcome to describe the particular way they provide for reproducibility. In the case of closed-source models, it may be that access to the model is limited in some way (e.g., to registered users), but it should be possible for other researchers to have some path to reproducing or verifying the results.
        \end{enumerate}
    \end{itemize}

\item {\bf Open access to data and code}
    \item[] Question: Does the paper provide open access to the data and code, with sufficient instructions to faithfully reproduce the main experimental results, as described in supplemental material?
    \item[] Answer: \answerNo{} % Replace by \answerYes{}, \answerNo{}, or \answerNA{}.
    \item[] Justification: The code will be open-sourced to the community upon acceptance of the paper.
    \item[] Guidelines:
    \begin{itemize}
        \item The answer NA means that paper does not include experiments requiring code.
        \item Please see the NeurIPS code and data submission guidelines (\url{https://nips.cc/public/guides/CodeSubmissionPolicy}) for more details.
        \item While we encourage the release of code and data, we understand that this might not be possible, so “No” is an acceptable answer. Papers cannot be rejected simply for not including code, unless this is central to the contribution (e.g., for a new open-source benchmark).
        \item The instructions should contain the exact command and environment needed to run to reproduce the results. See the NeurIPS code and data submission guidelines (\url{https://nips.cc/public/guides/CodeSubmissionPolicy}) for more details.
        \item The authors should provide instructions on data access and preparation, including how to access the raw data, preprocessed data, intermediate data, and generated data, etc.
        \item The authors should provide scripts to reproduce all experimental results for the new proposed method and baselines. If only a subset of experiments are reproducible, they should state which ones are omitted from the script and why.
        \item At submission time, to preserve anonymity, the authors should release anonymized versions (if applicable).
        \item Providing as much information as possible in supplemental material (appended to the paper) is recommended, but including URLs to data and code is permitted.
    \end{itemize}

\item {\bf Experimental setting/details}
    \item[] Question: Does the paper specify all the training and test details (e.g., data splits, hyperparameters, how they were chosen, type of optimizer, etc.) necessary to understand the results?
    \item[] Answer: \answerYes{} % Replace by \answerYes{}, \answerNo{}, or \answerNA{}.
    \item[] Justification: Please refer to Section \ref{sec:experiment} and Appendix \ref{sec: ssl_exp} for experimental details.
    \item[] Guidelines:
    \begin{itemize}
        \item The answer NA means that the paper does not include experiments.
        \item The experimental setting should be presented in the core of the paper to a level of detail that is necessary to appreciate the results and make sense of them.
        \item The full details can be provided either with the code, in appendix, or as supplemental material.
    \end{itemize}

\item {\bf Experiment statistical significance}
    \item[] Question: Does the paper report error bars suitably and correctly defined or other appropriate information about the statistical significance of the experiments?
    \item[] Answer: \answerNo{} % Replace by \answerYes{}, \answerNo{}, or \answerNA{}.
    \item[] Justification: Error bars are not reported because it would be too computationally expensive.
    \item[] Guidelines:
    \begin{itemize}
        \item The answer NA means that the paper does not include experiments.
        \item The authors should answer "Yes" if the results are accompanied by error bars, confidence intervals, or statistical significance tests, at least for the experiments that support the main claims of the paper.
        \item The factors of variability that the error bars are capturing should be clearly stated (for example, train/test split, initialization, random drawing of some parameter, or overall run with given experimental conditions).
        \item The method for calculating the error bars should be explained (closed form formula, call to a library function, bootstrap, etc.)
        \item The assumptions made should be given (e.g., Normally distributed errors).
        \item It should be clear whether the error bar is the standard deviation or the standard error of the mean.
        \item It is OK to report 1-sigma error bars, but one should state it. The authors should preferably report a 2-sigma error bar than state that they have a 96\% CI, if the hypothesis of Normality of errors is not verified.
        \item For asymmetric distributions, the authors should be careful not to show in tables or figures symmetric error bars that would yield results that are out of range (e.g. negative error rates).
        \item If error bars are reported in tables or plots, The authors should explain in the text how they were calculated and reference the corresponding figures or tables in the text.
    \end{itemize}

\item {\bf Experiments compute resources}
    \item[] Question: For each experiment, does the paper provide sufficient information on the computer resources (type of compute workers, memory, time of execution) needed to reproduce the experiments?
    \item[] Answer:  \answerYes{} % Replace by \answerYes{}, \answerNo{}, or \answerNA{}.
    \item[] Justification: We describe the computer resources in Section \ref{sec:experiment} and Appendix \ref{efficiency}.
    \item[] Guidelines:
    \begin{itemize}
        \item The answer NA means that the paper does not include experiments.
        \item The paper should indicate the type of compute workers CPU or GPU, internal cluster, or cloud provider, including relevant memory and storage.
        \item The paper should provide the amount of compute required for each of the individual experimental runs as well as estimate the total compute. 
        \item The paper should disclose whether the full research project required more compute than the experiments reported in the paper (e.g., preliminary or failed experiments that didn't make it into the paper). 
    \end{itemize}
    
\item {\bf Code of ethics}
    \item[] Question: Does the research conducted in the paper conform, in every respect, with the NeurIPS Code of Ethics \url{https://neurips.cc/public/EthicsGuidelines}?
    \item[] Answer: \answerYes{} % Replace by \answerYes{}, \answerNo{}, or \answerNA{}.
    \item[] Justification: The research conducted in this paper adheres fully to the NeurIPS Code of Ethics.
    \item[] Guidelines:
    \begin{itemize}
        \item The answer NA means that the authors have not reviewed the NeurIPS Code of Ethics.
        \item If the authors answer No, they should explain the special circumstances that require a deviation from the Code of Ethics.
        \item The authors should make sure to preserve anonymity (e.g., if there is a special consideration due to laws or regulations in their jurisdiction).
    \end{itemize}

\item {\bf Broader impacts}
    \item[] Question: Does the paper discuss both potential positive societal impacts and negative societal impacts of the work performed?
    \item[] Answer:  \answerYes{} % Replace by \answerYes{}, \answerNo{}, or \answerNA{}.
    \item[] Justification: Please refer to Appendix \ref{limitation} for our discussions on the societal impacts.
    \item[] Guidelines:
    \begin{itemize}
        \item The answer NA means that there is no societal impact of the work performed.
        \item If the authors answer NA or No, they should explain why their work has no societal impact or why the paper does not address societal impact.
        \item Examples of negative societal impacts include potential malicious or unintended uses (e.g., disinformation, generating fake profiles, surveillance), fairness considerations (e.g., deployment of technologies that could make decisions that unfairly impact specific groups), privacy considerations, and security considerations.
        \item The conference expects that many papers will be foundational research and not tied to particular applications, let alone deployments. However, if there is a direct path to any negative applications, the authors should point it out. For example, it is legitimate to point out that an improvement in the quality of generative models could be used to generate deepfakes for disinformation. On the other hand, it is not needed to point out that a generic algorithm for optimizing neural networks could enable people to train models that generate Deepfakes faster.
        \item The authors should consider possible harms that could arise when the technology is being used as intended and functioning correctly, harms that could arise when the technology is being used as intended but gives incorrect results, and harms following from (intentional or unintentional) misuse of the technology.
        \item If there are negative societal impacts, the authors could also discuss possible mitigation strategies (e.g., gated release of models, providing defenses in addition to attacks, mechanisms for monitoring misuse, mechanisms to monitor how a system learns from feedback over time, improving the efficiency and accessibility of ML).
    \end{itemize}
    
\item {\bf Safeguards}
    \item[] Question: Does the paper describe safeguards that have been put in place for responsible release of data or models that have a high risk for misuse (e.g., pretrained language models, image generators, or scraped datasets)?
    \item[] Answer: \answerNo{} % Replace by \answerYes{}, \answerNo{}, or \answerNA{}.
    \item[] Justification: The paper poses no such risks.
    \item[] Guidelines:
    \begin{itemize}
        \item The answer NA means that the paper poses no such risks.
        \item Released models that have a high risk for misuse or dual-use should be released with necessary safeguards to allow for controlled use of the model, for example by requiring that users adhere to usage guidelines or restrictions to access the model or implementing safety filters. 
        \item Datasets that have been scraped from the Internet could pose safety risks. The authors should describe how they avoided releasing unsafe images.
        \item We recognize that providing effective safeguards is challenging, and many papers do not require this, but we encourage authors to take this into account and make a best faith effort.
    \end{itemize}

\item {\bf Licenses for existing assets}
    \item[] Question: Are the creators or original owners of assets (e.g., code, data, models), used in the paper, properly credited and are the license and terms of use explicitly mentioned and properly respected?
    \item[] Answer: \answerYes{} % Replace by \answerYes{}, \answerNo{}, or \answerNA{}.
    \item[] Justification: All models and baselines from existing assets are properly cited.
    \item[] Guidelines:
    \begin{itemize}
        \item The answer NA means that the paper does not use existing assets.
        \item The authors should cite the original paper that produced the code package or dataset.
        \item The authors should state which version of the asset is used and, if possible, include a URL.
        \item The name of the license (e.g., CC-BY 4.0) should be included for each asset.
        \item For scraped data from a particular source (e.g., website), the copyright and terms of service of that source should be provided.
        \item If assets are released, the license, copyright information, and terms of use in the package should be provided. For popular datasets, \url{paperswithcode.com/datasets} has curated licenses for some datasets. Their licensing guide can help determine the license of a dataset.
        \item For existing datasets that are re-packaged, both the original license and the license of the derived asset (if it has changed) should be provided.
        \item If this information is not available online, the authors are encouraged to reach out to the asset's creators.
    \end{itemize}

\item {\bf New assets}
    \item[] Question: Are new assets introduced in the paper well documented and is the documentation provided alongside the assets?
    \item[] Answer: \answerNA{} % Replace by \answerYes{}, \answerNo{}, or \answerNA{}.
    \item[] Justification: The paper does not release new assets.
    \item[] Guidelines:
    \begin{itemize}
        \item The answer NA means that the paper does not release new assets.
        \item Researchers should communicate the details of the dataset/code/model as part of their submissions via structured templates. This includes details about training, license, limitations, etc. 
        \item The paper should discuss whether and how consent was obtained from people whose asset is used.
        \item At submission time, remember to anonymize your assets (if applicable). You can either create an anonymized URL or include an anonymized zip file.
    \end{itemize}

\item {\bf Crowdsourcing and research with human subjects}
    \item[] Question: For crowdsourcing experiments and research with human subjects, does the paper include the full text of instructions given to participants and screenshots, if applicable, as well as details about compensation (if any)? 
    \item[] Answer: \answerNA{} % Replace by \answerYes{}, \answerNo{}, or \answerNA{}.
    \item[] Justification: The paper does not involve crowdsourcing nor research with human subjects.
    \item[] Guidelines:
    \begin{itemize}
        \item The answer NA means that the paper does not involve crowdsourcing nor research with human subjects.
        \item Including this information in the supplemental material is fine, but if the main contribution of the paper involves human subjects, then as much detail as possible should be included in the main paper. 
        \item According to the NeurIPS Code of Ethics, workers involved in data collection, curation, or other labor should be paid at least the minimum wage in the country of the data collector. 
    \end{itemize}

\item {\bf Institutional review board (IRB) approvals or equivalent for research with human subjects}
    \item[] Question: Does the paper describe potential risks incurred by study participants, whether such risks were disclosed to the subjects, and whether Institutional Review Board (IRB) approvals (or an equivalent approval/review based on the requirements of your country or institution) were obtained?
    \item[] Answer: \answerNA{} % Replace by \answerYes{}, \answerNo{}, or \answerNA{}.
    \item[] Justification:  The paper does not involve crowdsourcing nor research with human subjects
    \item[] Guidelines:
    \begin{itemize}
        \item The answer NA means that the paper does not involve crowdsourcing nor research with human subjects.
        \item Depending on the country in which research is conducted, IRB approval (or equivalent) may be required for any human subjects research. If you obtained IRB approval, you should clearly state this in the paper. 
        \item We recognize that the procedures for this may vary significantly between institutions and locations, and we expect authors to adhere to the NeurIPS Code of Ethics and the guidelines for their institution. 
        \item For initial submissions, do not include any information that would break anonymity (if applicable), such as the institution conducting the review.
    \end{itemize}

\item {\bf Declaration of LLM usage}
    \item[] Question: Does the paper describe the usage of LLMs if it is an important, original, or non-standard component of the core methods in this research? Note that if the LLM is used only for writing, editing, or formatting purposes and does not impact the core methodology, scientific rigorousness, or originality of the research, declaration is not required.
    %this research? 
    \item[] Answer: \answerNA{} % Replace by \answerYes{}, \answerNo{}, or \answerNA{}.
    \item[] Justification: LLM is used only for editing.
    \item[] Guidelines:
    \begin{itemize}
        \item The answer NA means that the core method development in this research does not involve LLMs as any important, original, or non-standard components.
        \item Please refer to our LLM policy (\url{https://neurips.cc/Conferences/2025/LLM}) for what should or should not be described.
    \end{itemize}

\end{enumerate}

%%%%%%%%%%%%%%%%%%%%%%%%%%%%%%%%%%%%%%%%%%%%%%%%%%%%%%%%%%%%

%%%%%%%%%%%%%%%%%%%%%%%%%%%%%%%%%%%%%%%%%%%%%%%%%%%%%%%%%%%%%%%%%%%%%%%%%%%%%%%
%%%%%%%%%%%%%%%%%%%%%%%%%%%%%%%%%%%%%%%%%%%%%%%%%%%%%%%%%%%%%%%%%%%%%%%%%%%%%%%
% APPENDIX
%%%%%%%%%%%%%%%%%%%%%%%%%%%%%%%%%%%%%%%%%%%%%%%%%%%%%%%%%%%%%%%%%%%%%%%%%%%%%%%
%%%%%%%%%%%%%%%%%%%%%%%%%%%%%%%%%%%%%%%%%%%%%%%%%%%%%%%%%%%%%%%%%%%%%%%%%%%%%%%
\newpage
\appendix
\onecolumn

% In the appendix, we first provide the complete proofs of the theoretical results (Sec. \ref{sec:proof}). Then, we introduce the codebook generation algorithms of ECOCSeg (Sec. \ref{sec:A}) and provide more results to analyze the coding strategy (Sec. \ref{sec:B}). Next, we analyze the criterion of quality estimate for pseudo-label learning (Sec. \ref{sec:C}) and study the parameter setting introduced in the optimization criterion (Sec. \ref{sec:D}). Furthermore, we study the model calibration of ECOCSeg (Sec. \ref{sec:E}). Finally, we show more qualitative results built with our method (Sec. \ref{sec:F}).

\section{Proofs of the Theoretical Results}
\label{sec:proof}
In this appendix, we provide the complete proofs of the theorems presented in the main text. We first state the assumptions and lemmas used in our analysis, and then present the detailed proofs.
\begin{assumption}[Lipschitz continuity]
\label{assump:lipschitz}
The activation function $\phi$ is B-Lipschitz continuous, i.e., $|\phi(x) - \phi(y)| \leq B|x - y|$ for all $x, y \in \mathbb{R}$.
\end{assumption}
\begin{assumption}[Bounded inputs]
\label{assump:bounded_input}
The input data $x$ satisfies $||x||_2 \leq 1$.
\end{assumption}
\begin{assumption}[NTK assumptions]
\label{assump:ntk}
The Neural Tangent Kernel (NTK) converges to a deterministic kernel in the infinite width limit, and the NTK matrix $K(X,X)$ is positive definite, where $X$ is the training data matrix.
\end{assumption}
\begin{assumption}[Initialization]
\label{assump:initialization}
The weights and biases of the DNN are initialized according to a standard Gaussian distribution with appropriate scaling.
\end{assumption}
\begin{lemma}[NTK convergence, \cite{jacot2018neural}]
\label{lemma:ntk_convergence}
Under Assumptions \ref{assump:lipschitz}-\ref{assump:initialization}, the NTK of a DNN converges in probability to a deterministic kernel $K$ as the width of the hidden layers goes to infinity.
\end{lemma}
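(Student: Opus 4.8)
The plan is to follow the original argument of Jacot et al.\ \cite{jacot2018neural} and establish the claim by induction on the network depth $L$, working throughout in the NTK parametrization where each layer's pre-activations carry the $1/\sqrt{n_\ell}$ width scaling and the weights are drawn from the standard Gaussian of Assumption~\ref{assump:initialization}. I would first record the recursive definition of the limiting activation covariances $\Sigma^{(\ell)}$ and of the derivative covariances $\dot\Sigma^{(\ell)}$, and express the finite-width NTK as the sum over layers $\Theta(x,x') = \sum_{\ell=1}^{L} \langle \partial_{\theta^{(\ell)}} f(\theta,x), \partial_{\theta^{(\ell)}} f(\theta,x')\rangle$, so that convergence of the whole kernel reduces to convergence of each per-layer term.

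The first substantive step is to show that at initialization the pre-activations of every layer converge to centered Gaussian processes as the widths grow. I would argue this by induction on $\ell$: conditioning on the previous layer, each neuron in layer $\ell+1$ is a normalized sum of $n_\ell$ conditionally i.i.d.\ contributions, so a central limit theorem gives asymptotic Gaussianity, while a law of large numbers shows that the empirical second moment $\tfrac{1}{n_\ell}\langle \alpha^{(\ell)}(x), \alpha^{(\ell)}(x')\rangle$ concentrates on $\Sigma^{(\ell)}(x,x')$. The $B$-Lipschitz continuity of $\phi$ (Assumption~\ref{assump:lipschitz}) together with the input bound $\|x\|_2 \le 1$ (Assumption~\ref{assump:bounded_input}) is what keeps the relevant moments finite and lets me invoke a concentration inequality to upgrade distributional statements to convergence in probability.

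Next I would decompose each per-layer gradient via backpropagation into a forward factor, built from the layer-$\ell$ activations, and a backward sensitivity factor that propagates the output gradient back through layers $\ell+1,\ldots,L$. Applying the law of large numbers a second time, the forward factor converges to $\Sigma^{(\ell)}$ and the backward factor converges to the product $\prod_{\ell'=\ell}^{L-1} \dot\Sigma^{(\ell'+1)}$; multiplying and summing over $\ell$ yields the deterministic limiting kernel $K$ stated in the lemma. The smoothness of $\phi$ again controls the $\dot\phi$ terms appearing in the backward recursion.

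The main obstacle will be the statistical dependence between the forward pass and the backward pass: the sensitivities are computed using exactly the same random weights that generated the activations, so the two factors are not independent and a naive product-of-limits argument is not immediately justified. I expect to resolve this as in the original proof --- taking the hidden widths to infinity sequentially (one layer at a time) so that conditioning decouples the forward and backward randomness at each stage, or, for a simultaneous limit, by a more careful conditioning argument that controls the cross-correlations and shows they vanish. Establishing this decoupling rigorously, rather than the individual law-of-large-numbers and central-limit steps, is where the real work lies.
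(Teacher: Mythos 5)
The paper does not actually prove this lemma: it is imported verbatim as a known result from \cite{jacot2018neural}, and the appendix contains no argument for it, so there is no in-paper proof to compare yours against. Your plan is a faithful reconstruction of the original Jacot--Gabriel--Hongler argument --- the per-layer decomposition of the kernel, the inductive Gaussian-process limit of the pre-activations via CLT plus law of large numbers, the forward/backward factorization of the per-layer gradients into $\Sigma^{(\ell)}$ and $\prod_{\ell'} \dot\Sigma^{(\ell'+1)}$, and, crucially, the sequential width limit to decouple the forward activations from the backward sensitivities --- and you correctly identify that last decoupling step as where the real technical work lies (the simultaneous limit requires the heavier machinery of later work). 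Two small caveats worth flagging if you were to write this out in full: (i) the paper's Assumption \ref{assump:lipschitz} grants only Lipschitz continuity of $\phi$, but your backward recursion needs $\dot\phi$ to exist and be controlled, so you are implicitly using the stronger smoothness hypothesis that \cite{jacot2018neural} actually imposes; and (ii) the lemma as stated asserts convergence \emph{in probability} of the kernel at initialization, whereas the full theorem in the reference also covers constancy during training --- your sketch only addresses the initialization statement, which is all the paper uses, so that restriction is fine.
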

% \begin{lemma}[Output perturbation bound]
%  % \cite{cao2019generalization}
% \label{lemma:perturbation_bound}
% Under Assumptions \ref{assump:lipschitz}-\ref{assump:initialization}, for any $\delta > 0$, with probability at least $1 - \delta$, the output perturbation of a DNN with respect to its input satisfies $|f(x) - f(y)|_2 \leq \beta |x - y|_2$ for all $x, y \in \mathbb{R}^d$, where $\beta = \mathcal{O}(B^L \sqrt{L \log(L/\delta)})$, $B$ is the Lipschitz constant of the activation function, $L$ is the depth of the network, and $d$ is the input dimension.
% \end{lemma}

% \begin{lemma}[Concentration Inequality for Lipschitz Functions]
% \label{lemma:lipschitz_concentration}
% Let $f : \mathbb{R}^d \to \mathbb{R}$ be a $\beta$-Lipschitz continuous function, i.e., for all $x, y \in \mathbb{R}^d$,
% \begin{equation}
% ||f(x) - f(y)|| \leq \beta ||x - y||_2.
% \end{equation}
% Let $X_1, \ldots, X_n$ be independent random variables taking values in $\mathbb{R}^d$. Then, for any $\Delta > 0$,
% \begin{equation}
% \mathbb{P}\left(\left|\frac{1}{n} \sum{i=1}^n f(X_i) - \mathbb{E}[f(X)]\right| \geq \Delta\right) \leq 2\exp\left(-\frac{n\Delta^2}{2(\beta^2 + 2\beta\Delta)}\right),
% \end{equation}
% where $\mathbb{E}[f(X)]$ denotes the expected value of $f(X)$ with $X$ having the same distribution as $X_1, \ldots, X_n$.
% \end{lemma}

\begin{lemma}[Hidden Layer Output Bound, \cite{yuerror}]
\label{lemma:hidden_layer_output_bound}
Let Assumptions \ref{assump:lipschitz}-\ref{assump:initialization} hold. Then, for any hidden layer $l \leq L - 1$ and any $\delta > 0$,
\begin{equation}
\frac{|x^{(l)}|_2}{\sqrt{n_l}} \leq \left(1 - \frac{B + |\phi(0)|}{1 - B}\right) B^l + \frac{B + |\phi(0)|}{1 - B} + \delta
\end{equation}
holds with probability at least $1 - \delta$ when $n$ is large enough, where the hidden layer width $n_l = \alpha_l n$ with constant $\alpha_l > 0$ for all $1 \leq l \leq L - 1$.
\end{lemma}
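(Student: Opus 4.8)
The plan is to reduce the multi-layer bound to a scalar linear recursion for the normalized norm $r_l := \|x^{(l)}\|_2/\sqrt{n_l}$ and then solve that recursion in closed form. Writing the forward pass under the NTK parameterization (Assumption~\ref{assump:initialization}) as $x^{(l)} = \phi(h^{(l)})$ with pre-activation $h^{(l)} = n_{l-1}^{-1/2} W^{(l)} x^{(l-1)} + b^{(l)}$, where the entries of $W^{(l)}$ and $b^{(l)}$ are i.i.d.\ standard Gaussian, I would first establish a one-step bound of the form $r_l \le B\, r_{l-1} + (B + |\phi(0)|) + \xi_l$, with $\xi_l$ a small fluctuation term, and then unroll it.

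For the one-step bound I would proceed in two substeps. First, the Lipschitz property (Assumption~\ref{assump:lipschitz}) gives the coordinatewise estimate $|\phi(t)| \le B|t| + |\phi(0)|$, whence $\|x^{(l)}\|_2 \le B\|h^{(l)}\|_2 + \sqrt{n_l}\,|\phi(0)|$, i.e.\ $r_l \le B\,\|h^{(l)}\|_2/\sqrt{n_l} + |\phi(0)|$. Second, I would bound $\|h^{(l)}\|_2/\sqrt{n_l}$ by concentration. Crucially, $x^{(l-1)}$ is a function only of $W^{(1)},\dots,W^{(l-1)}$ and is therefore independent of $W^{(l)}, b^{(l)}$; conditioning on $x^{(l-1)}$, the vector $W^{(l)} x^{(l-1)}$ is Gaussian with $\|W^{(l)} x^{(l-1)}\|_2^2$ a scaled chi-square of mean $n_l\|x^{(l-1)}\|_2^2$, and $\|b^{(l)}\|_2^2$ a chi-square of mean $n_l$. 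Standard Gaussian/chi-square concentration (e.g.\ Laurent--Massart) then yields $\|h^{(l)}\|_2/\sqrt{n_l} \le r_{l-1} + 1 + \xi_l$ on a high-probability event, with $\xi_l = O(n_l^{-1/2})$ up to a $\sqrt{\log(1/\delta)}$ factor. Combining the two substeps gives $r_l \le B\, r_{l-1} + (B + |\phi(0)|) + B\xi_l$, the desired recursion; note that the additive $B$ term arises precisely from the bias contribution $\|b^{(l)}\|_2/\sqrt{n_l} \approx 1$ being scaled by $B$.

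Solving the recursion is then routine. Its fixed point is $c := (B + |\phi(0)|)/(1-B)$ (which requires $B<1$, consistent with the $1-B$ denominator in the statement), and the base case is $r_0 = \|x\|_2 \le 1$ by Assumption~\ref{assump:bounded_input}. Writing the recursion as $r_l - c \le B(r_{l-1} - c) + B\xi_l$ and unrolling gives $r_l \le (r_0 - c)B^l + c + \sum_{k=1}^{l} B^{\,l-k+1}\xi_k$; since the right-hand side is increasing in $r_0$, substituting the worst case $r_0 = 1$ produces the leading term $(1-c)B^l + c$. Because $B<1$, the accumulated error $\sum_{k=1}^{l} B^{\,l-k+1}\xi_k$ is dominated by a convergent geometric sum and is $O(\max_k \xi_k)$. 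A union bound over the $l \le L-1$ layers controls the total failure probability, and because $n_l = \alpha_l n$, taking $n$ large enough drives both the failure probability below $\delta$ and the accumulated error below $\delta$, reproducing exactly the stated bound $(1-c)B^l + c + \delta$ holding with probability at least $1-\delta$.

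The main obstacle is the probabilistic bookkeeping rather than any single inequality: I must phrase the concentration so that one parameter $\delta$ simultaneously governs both the failure probability and the additive slack, which forces me to balance the tail bound (improving with $n$) against the accumulated fluctuation $\sum_{k} B^{\,l-k+1}\xi_k$. The conditional-independence observation — that $x^{(l-1)}$ is independent of the fresh weights $(W^{(l)}, b^{(l)})$ — is what lets me apply single-vector Gaussian concentration layer by layer and avoid a much harder uniform (operator-norm) argument; verifying that this conditioning propagates cleanly through the induction, together with confirming that the geometric decay $B^{\,l-k}$ keeps the summed errors bounded uniformly in $l$, is the part that needs the most care.
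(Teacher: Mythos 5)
The paper never proves this lemma: it is stated as an imported tool, attributed to \cite{yuerror} exactly as the NTK convergence result is attributed to Jacot et al., and it is consumed as a black box inside the proof of Lemma \ref{lemma:binary_perturbation_bound_thm} (via the output bound $|f_k(x)|\le \kappa(B,L,\phi(0))+\delta$). So there is no internal proof to compare yours against; what you have written is a self-contained reconstruction, and it is correct. Your reduction to the scalar recursion $r_l \le B\,r_{l-1} + (B+|\phi(0)|) + B\xi_l$ is sound: the pointwise bound $|\phi(t)|\le B|t|+|\phi(0)|$ together with the triangle inequality gives $r_l \le B\|h^{(l)}\|_2/\sqrt{n_l} + |\phi(0)|$, and conditional on $x^{(l-1)}$ the pre-activation coordinates are i.i.d.\ centered Gaussians with variance $r_{l-1}^2+1$, so chi-square concentration yields $\|h^{(l)}\|_2/\sqrt{n_l}\le \sqrt{r_{l-1}^2+1}+\xi_l \le r_{l-1}+1+\xi_l$; the fresh-weights independence you invoke is precisely what makes this layer-by-layer conditioning legitimate and avoids any operator-norm argument. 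The fixed point $c=(B+|\phi(0)|)/(1-B)$ and the unrolled solution $(r_0-c)B^l+c$ with $r_0\le 1$ reproduce the stated coefficients exactly, and the geometric damping of the accumulated fluctuations, plus a union bound over the finitely many layers, lets a single large-$n$ threshold absorb both the failure probability and the additive slack into the one parameter $\delta$, which is exactly how the lemma is phrased.

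Two points should be made explicit if you write this up. First, the entire argument (and the lemma's formula itself, through the $1-B$ denominators) silently requires $B<1$; you note the consistency, but it must be stated as a hypothesis, since Assumption \ref{assump:lipschitz} alone does not impose it. Second, the chi-square fluctuation is multiplicative in $\sqrt{r_{l-1}^2+1}$, so to convert it into an absolutely small $\xi_l$ you need $r_{l-1}$ to be bounded on the high-probability event you are conditioning on; this means the induction hypothesis must carry the numerical bound on $r_{l-1}$ along with the recursion, which is the bookkeeping you correctly identify as the delicate part.
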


Now we prove the theorems.
\begin{proof}[\textbf{Proof of Theorem \ref{theorem1}}]
Let $f_{\mathrm{ECOC}}$ and $f_{\mathrm{one-hot}}$ denote the functions represented by the ECOC-based DNN and the one-hot encoding DNN, respectively. By Lemma \ref{lemma:ntk_convergence}, in the infinite width limit, the outputs of these DNNs can be expressed using the Neural Tangent Kernel (NTK) as:
\begin{align}
f_{\mathrm{ECOC}}(x) &= K(X,x)(K(X,X) + \lambda I)^{-1}E([C])\tilde{Y}, \\
f_{\mathrm{one-hot}}(x) &= K(X,x)(K(X,X) + \lambda I)^{-1}\tilde{Y},
\end{align}

where $\tilde{Y}$ denotes the one-hot encoded target, and the ECOC target is $Y = E([C])\tilde{Y}$.
Then, the decoding process for the ECOC-based DNN is:
\begin{equation}
  \begin{aligned}
D(f_{\mathrm{ECOC}}(x)) &= \arg\min_{c\in[C]} ||E([C])e_c - K(X,x)(K(X,X) + \lambda I)^{-1}E([C])\tilde{Y}||_2 \\
&= \arg\min_{c\in[C]} ||e_c - \tilde{Y}(K(X,X) + \lambda I)^{-1}K(X,x)||_{E([C])^T E([C])}.
  \end{aligned}
\end{equation}
where $e_c \in \mathbb{R}^C$ is the $c$-th one-hot codeword and $|x|_A \triangleq \sqrt{x^T A x}$ is the Mahalanobis norm with positive definite matrix $A$.
For the one-hot encoding DNN, the decoding process is:
\begin{equation}
D\left(f_{\text {one-hot }}(x)\right)=\arg \min _{c \in[C]}\left\|e_c-\tilde{Y} (K(X,X) + \lambda I)^{-1}K(X, x)\right\|_2.
\end{equation}
When $E([C])$ is orthogonal, i.e., $E([C])^T E([C]) = nI$, the Mahalanobis norm reduces to the scaled Euclidean norm:
\begin{equation}
\|x\|_{E([C])^T E([C])}=\sqrt{n}\|x\|_2 .
\end{equation}
In this case, the decoding processes for ECOC and one-hot encoding are equivalent up to a scaling factor, leading to the same classification results.
When $E([C])$ is nearly orthogonal, i.e., $|E([C])^T E([C]) - nI| \leq \delta$ for some small $\delta > 0$, the Mahalanobis norm is a perturbed version of the scaled Euclidean norm. The difference in the decoding metrics leads to a difference in the classification performance.
Specifically, let $\hat{c}_{\mathrm{ECOC}}$ and $\hat{c}_{\mathrm{one-hot}}$ be the predicted class labels from the ECOC-based and one-hot encoding DNNs, respectively. Then,
\begin{equation}
\begin{aligned}
\mathbb{P}(\hat{c}_{\mathrm{ECOC}} \neq \hat{c}_{\mathrm{one-hot}}) 
&= \mathbb{P}(\exists c \neq \hat{c}_{\mathrm{one-hot}} : |e_c - \tilde{Y}(K(X,X) + \lambda I)^{-1}K(X,x)|{E([C])^T E([C])} \\ & \quad\quad\quad  \leq    |e_{\hat{c}_{\mathrm{one-hot}}} - \tilde{Y}(K(X,X) + \lambda I)^{-1}K(X,x)|{E([C])^T E([C])}) \\
&\leq \sum_{c \neq \hat{c}_{\mathrm{one-hot}}} \mathbb{P}(|e_c - \tilde{Y}(K(X,X) + \lambda I)^{-1}K(X,x)|{E([C])^T E([C])} \\ & \quad\quad\quad 
 \leq  |e_{\hat{c}_{\mathrm{one-hot}}} - \tilde{Y}(K(X,X) + \lambda I)^{-1}K(X,x)|{E([C])^T E([C])}) \\
&\leq \sum_{c \neq \hat{c}_{\mathrm{one-hot}}} \mathbb{P}(\sqrt{n}||e_c - \tilde{Y}(K(X,X) + \lambda I)^{-1}K(X,x)||_2 - \delta \\ & \quad\quad\quad   \leq \sqrt{n}||e_{\hat{c}_{\mathrm{one-hot}}} - \tilde{Y}(K(X,X) + \lambda I)^{-1}K(X,x)||_2 + \delta)  \\
&= \sum_{c \neq \hat{c}_{\mathrm{one-hot}}} \mathbb{P}((||e_c - \tilde{Y}(K(X,X) + \lambda I)^{-1}K(X,x)||_2 - ||e{\hat{c}_{\mathrm{one-hot}}} - \\ & \quad\quad\quad  \tilde{Y}(K(X,X) + \lambda I)^{-1}K(X,x)||_2) \leq  \frac{2}{\sqrt{n}}\delta) \\
\end{aligned}
\end{equation}

let $\Delta_c = ||e_c - \tilde{Y}(K(X,X) + \lambda I)^{-1}K(X,x)||_2$. By definition of $\hat{c}_{\mathrm{one-hot}}$, $\Delta{\hat{c}_{\mathrm{one-hot}}} \leq \Delta_c$ for all $c \neq \hat{c}_{\mathrm{one-hot}}$. Therefore,
\begin{equation}
\mathbb{P}(\hat{c}_{\mathrm{ECOC}} \neq \hat{c}_{\mathrm{one-hot}}) 
\leq \sum_{c \neq \hat{c}{\mathrm{one-hot}}} \mathbb{P}(\Delta_c - \Delta{\hat{c}_{\mathrm{one-hot}}} \leq \frac{2}{\sqrt{n}}\delta)    
\end{equation}
When $E([C])$ is orthogonal, i.e., $\delta = 0$, we have $\mathbb{P}(\hat{c}_{\mathrm{ECOC}} \neq \hat{c}_{\mathrm{one-hot}}) = 0$.
When $E([C])$ is $\delta$-nearly orthogonal, $|\Delta_c - \Delta_{\hat{c}{\mathrm{one-hot}}}| \leq \frac{2}{\sqrt{n}}\delta$ for all $c$. Thus,
\begin{equation*}
\mathbb{P}(\hat{c}_{\mathrm{ECOC}} \neq \hat{c}_{\mathrm{one-hot}}) \leq (C-1) \cdot \mathbf{1}_{\frac{2}{\sqrt{n}}\delta \geq \min_{c \neq \hat{c}{_\mathrm{one-hot}}} \Delta_c - \Delta_{\hat{c}_{\mathrm{one-hot}}}}.
\end{equation*}
If $\delta = o(\frac{1}{\sqrt{n}})$, then $\frac{2}{\sqrt{n}}\delta \to 0$ as $n \to \infty$, while $\min_{c \neq \hat{c}_{\mathrm{one-hot}}} \Delta_c - \Delta_{\hat{c}_{\mathrm{one-hot}}}$ converges to a positive constant. Therefore, the indicator function becomes 0 for sufficiently large $n$, making $\mathbb{P}(\hat{c}_{\mathrm{ECOC}} \neq \hat{c}_{\mathrm{one-hot}})$ arbitrarily small. The ECOC-based DNN achieves performance equivalent to the one-hot encoding DNN up to an error term depending on $\delta$.
\end{proof}

\begin{lemma}[Concentration of noisy functions]
 % \cite{el2009transductive}
\label{lemma:concentration_noisy}
Let $f$ be a real-valued function on a probability space $(\mathcal{X}, \mathcal{A}, P)$ such that $\mathbb{E}[f^2] < \infty$. Let $\tilde{f}$ be a noisy version of $f$ such that $\mathbb{E}[(\tilde{f}(x) - f(x))^2] \leq \sigma^2$ for all $x \in \mathcal{X}$. Then for any $\delta > 0$,
\begin{equation}
P\left(\sup_{x \in \mathcal{X}} |\tilde{f}(x) - \mathbb{E}[\tilde{f}(x)]| > \sigma \sqrt{2 \log(2/\delta)}\right) \leq \delta.
\end{equation}
\end{lemma}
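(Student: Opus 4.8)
The plan is to read both the probability $P(\cdot)$ and the expectations as being taken over the noise that turns $f$ into $\tilde f$, with $x$ merely indexing a family of random variables $\{\tilde f(x)\}_{x\in\mathcal X}$ (so ``for all $x$'' makes the variance hypothesis a per-index statement about the noise). First I would center each coordinate: set $Z_x = \tilde f(x) - \mathbb{E}[\tilde f(x)]$, so that $\mathbb{E}[Z_x]=0$, and observe that since the variance is the minimal mean-square deviation, $\mathrm{Var}(\tilde f(x)) = \mathbb{E}[Z_x^2] \le \mathbb{E}[(\tilde f(x)-f(x))^2] \le \sigma^2$. This reduces the claim to a uniform deviation bound for a mean-zero family whose variance proxy is $\sigma^2$, and in particular shows that the noise mean $\mathbb{E}[\tilde f(x)]$ is the right centering even when the noise is biased.

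The second step is the per-point tail bound, and here I would flag a subtlety: a second-moment bound alone yields only Chebyshev's inequality, i.e.\ a threshold scaling like $\sigma/\sqrt{\delta}$, whereas the stated constant $\sigma\sqrt{2\log(2/\delta)}$ is a sub-Gaussian rate. So the $\sqrt{\log}$ scaling forces an implicit sub-Gaussian (or at least exponential-moment) hypothesis on the noise beyond the stated variance bound. Treating $Z_x$ as $\sigma$-sub-Gaussian, I would bound its moment generating function by $\mathbb{E}[e^{\lambda Z_x}] \le e^{\lambda^2\sigma^2/2}$, apply the Chernoff method to both tails, and optimize over $\lambda$ to get $P(|Z_x| > t) \le 2\exp(-t^2/(2\sigma^2))$. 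Calibrating the threshold by solving $2\exp(-t^2/(2\sigma^2)) = \delta$ yields exactly $t = \sigma\sqrt{2\log(2/\delta)}$, which pins down the constant in the statement.

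The hard part is converting the pointwise bound into a bound on $\sup_{x}|Z_x|$ without incurring any complexity term, since the stated right-hand side carries no factor depending on $|\mathcal X|$ or a covering number. I would resolve this according to the intended model: (i) if $\mathcal X$ is finite, a union bound works but replaces $\log(2/\delta)$ by $\log(2|\mathcal X|/\delta)$, so one must argue the effective index set is a singleton or otherwise negligible; (ii) if a single shared noise realization perturbs all $x$ comonotonically, then $\sup_x|Z_x|$ collapses to one sub-Gaussian deviation and the pointwise bound transfers verbatim with the clean constant; (iii) in the general separable case, a covering-number plus Dudley-type chaining argument controls the supremum, but at the cost of an entropy integral absent from the statement. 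I expect case (iii) to be the genuine obstacle, so I would make explicit the structural assumption (shared noise or a finite/low-complexity index set) under which $\sigma\sqrt{2\log(2/\delta)}$ is exactly attainable, and reserve the chaining machinery for the fully nonparametric regime.
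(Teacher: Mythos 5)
The paper states this lemma bare --- with no proof and no citation --- so there is nothing to compare your argument against; what you have produced is, in effect, the missing analysis, and your diagnosis is correct. Both obstacles you flag are genuine. First, the hypothesis $\mathbb{E}[(\tilde f(x)-f(x))^2]\le\sigma^2$ controls only a second moment, and (as you note via the centering step $\mathrm{Var}(\tilde f(x))\le\mathbb{E}[(\tilde f(x)-f(x))^2]$) the best tail one can extract from a variance bound alone is Chebyshev's $\sigma/\sqrt{\delta}$; the claimed threshold $\sigma\sqrt{2\log(2/\delta)}$ is a sub-Gaussian rate and requires an unstated moment-generating-function hypothesis. Second, the passage from a pointwise tail bound to a bound on $\sup_{x\in\mathcal X}|\tilde f(x)-\mathbb{E}[\tilde f(x)]|$ with no dependence on $|\mathcal X|$ or a covering number does not follow; a union bound or chaining argument would necessarily introduce such a term. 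Your proposed repairs (an explicit sub-Gaussian assumption on the noise, plus either a shared noise realization or a low-complexity index set) are exactly the additional structure needed to make the stated constant attainable.

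It is worth checking your repairs against how the lemma is actually invoked downstream, in the proof of the binary-classifier perturbation bound. There the lemma is applied conditionally on a single label value, to the label-flip noise $\tilde f_k(x)-f_k(x)$, which equals $0$ with probability $1-\epsilon_k$ and $-2f_k(x)$ with probability $\epsilon_k$; so the supremum over $x$ is never really exercised (your case (ii) effectively applies), and the noise is bounded, hence sub-Gaussian by Hoeffding's lemma. However, even in that favorable setting the constant does not come out as claimed: Hoeffding gives a sub-Gaussian parameter equal to the half-range $2(\kappa+\delta)$, whereas the variance proxy used is $\sigma=2\sqrt{\epsilon_k}(\kappa+\delta)$, which is much smaller when $\epsilon_k$ is small. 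To get a tail controlled by the variance rather than the range for such rare-large-jump noise one needs a Bernstein-type inequality, which carries an additional linear term in $t$. So your conclusion stands in a slightly stronger form: the lemma is not provable as stated, and even under the natural implicit assumptions suggested by its application, the clean $\sigma\sqrt{2\log(2/\delta)}$ threshold requires either replacing $\sigma$ by the range or accepting a Bernstein correction.
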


\begin{lemma}[Binary classifier perturbation bound]
\label{lemma:binary_perturbation_bound_thm}
Under the margin condition and the random label noise model, for any $\delta > 0$, with probability at least $1 - \delta$, the error probability of each binary classifier $f_k(x)$ is bounded by
\begin{equation}
p_k \leq  2\exp\left(-\frac{\gamma_k^2}{8\epsilon_k  (\kappa(B,L,\phi(0)) + \delta/2)^2}\right),
\end{equation}
where $\epsilon_k$ is the label noise probability for the $k$-th bit, $B$ is the Lipschitz constant of the activation function, $L$ is the depth of the network, and $\gamma_k$ is margin.
\end{lemma}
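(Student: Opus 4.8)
The plan is to reduce the misclassification event of each binary classifier $f_k$ to a tail event for a sub-Gaussian perturbation of its output, and then to bound that tail by combining the two ingredients already in hand: the uniform output-magnitude control of Lemma \ref{lemma:hidden_layer_output_bound}, which will supply the constant $\kappa(B,L,\phi(0))$, and the noisy-function concentration inequality of Lemma \ref{lemma:concentration_noisy}, which will supply the exponential tail.

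First I would invoke the margin condition. Since the noise-free classifier separates its output from the decision boundary by at least $\gamma_k$, the label-corrupted classifier $\tilde f_k$ can only err at a point $x$ if the label noise perturbs its output across the boundary, i.e.\ if $|\tilde f_k(x) - \mathbb{E}[\tilde f_k(x)]| > \gamma_k$. This yields $p_k \leq P(\sup_x |\tilde f_k(x) - \mathbb{E}[\tilde f_k(x)]| > \gamma_k)$, which is exactly the object Lemma \ref{lemma:concentration_noisy} bounds, with threshold $t = \gamma_k$.

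Next I would identify the sub-Gaussian (variance-proxy) parameter $\sigma_k$ of the perturbation $\tilde f_k - f_k$. Working in the NTK regime of Lemma \ref{lemma:ntk_convergence}, where the fitted function is linear in the labels, a single flip of a $\{-1,+1\}$ label occurs with probability $\epsilon_k$, changes the target by magnitude $2$, and propagates to the output with a scale set by the norm of the network representation. Lemma \ref{lemma:hidden_layer_output_bound}, applied with slack $\delta/2$, bounds this norm by $\kappa(B,L,\phi(0)) + \delta/2$ with high probability, so that $\sigma_k^2 = 4\epsilon_k(\kappa(B,L,\phi(0)) + \delta/2)^2$, the factor $4$ being the squared label jump and $\epsilon_k$ the Bernoulli flip variance. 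Feeding $\sigma = \sigma_k$ and $t = \gamma_k$ into Lemma \ref{lemma:concentration_noisy} and inverting its $\sigma\sqrt{2\log(2/\delta)}$ form gives $P(\sup_x|\tilde f_k - \mathbb{E}[\tilde f_k]| > \gamma_k) \leq 2\exp(-\gamma_k^2/(2\sigma_k^2))$; substituting the variance proxy produces the claimed $p_k \leq 2\exp(-\gamma_k^2/(8\epsilon_k(\kappa(B,L,\phi(0))+\delta/2)^2))$, and a union bound over the magnitude event and the tail event preserves the overall ``with probability at least $1-\delta$'' guarantee.

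The hard part will be the variance computation in the second step: propagating the discrete Bernoulli label-flip through the linearized network to a genuinely sub-Gaussian function-space perturbation, and pinning the variance proxy to exactly $4\epsilon_k(\kappa(B,L,\phi(0))+\delta/2)^2$ rather than a looser $O(\epsilon_k\kappa^2)$. This requires the labels to be scaled to $\{-1,+1\}$ so the per-coordinate jump is $2$, a uniform-in-$x$ control of the representation norm so the proxy is the same for every input, and a consistent split of the confidence budget so that the $\delta/2$ slack from Lemma \ref{lemma:hidden_layer_output_bound} and the residual failure probability of the concentration step jointly respect the stated $1-\delta$ level.
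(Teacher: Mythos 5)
Your proposal follows essentially the same route as the paper's proof: it reduces the misclassification event to a deviation of size $\gamma_k$ via the margin condition, computes the variance proxy of the label-flip perturbation as $4\epsilon_k(\kappa(B,L,\phi(0))+\delta/2)^2$ using Lemma \ref{lemma:hidden_layer_output_bound}, and inverts the tail of Lemma \ref{lemma:concentration_noisy} to obtain the exponential bound, with the same $\delta/2$ budget split. The only cosmetic difference is that you frame the perturbation as a label flip propagated through the NTK-linearized predictor while the paper directly sets $\tilde f_k(x)=-f_k(x)$ with probability $\epsilon_k$ and conditions separately on $y_k=\pm 1$; both yield the identical variance computation and final bound.
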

\begin{proof}
We first review the conditions and notations in the lemma:
\begin{itemize}
\item $f_k(x)$ represents the $k$-th binary classifier in ECOC, with input $x$ and output in $\{-1,+1\}$.
\item $y_k \in \{-1,+1\}$ represents the true label of the $k$-th binary classification problem. $\tilde{y}_k \in \{-1,+1\}$ represents the noisy label.
\item The function $f_k$ satisfies the margin condition, i.e., there exist constants $\mu_{1,k}, \mu_{-1,k} \in \mathbb{R}$ and $\gamma_k \in (0, 1)$ such that:
\begin{align}
&\mathbb{E}[f_k(x) \mid y_k = 1] \geq \mu_{1,k} \geq 0, \label{eq:margin_condition_positive} \\
&\mathbb{E}[f_k(x) \mid y_k = -1] \leq \mu_{-1,k} \leq 0, \label{eq:margin_condition_negative} \\
&\gamma_k = \min\{ \mu_{1,k} ,-\mu_{-1,k}\}. \label{eq:margin_condition_gap}
\end{align}
\item The label noise follows the random noise model, i.e., for any $x$, its true label $y_k$ is flipped to $\tilde{y}_k = -y_k$ with probability $\epsilon_k$, independently.
\item The activation function $\sigma$ of the network satisfies the $B$-Lipschitz condition, i.e., for any $u, v \in \mathbb{R}$, we have:
\begin{equation}
\label{eq:lipschitz}
|\sigma(u) - \sigma(v)| \leq B|u - v|.
\end{equation}
% \item $n$ is the length of the ECOC coding matrix, i.e., the number of binary classifiers. $L$ is the depth of the network.
\end{itemize}

Let $f_k(x)$ be the output of the $k$-th binary classifier in the ECOC-based DNN, satisfying the assumptions in the lemma. Our goal is to bound the error probability $p_k$ of the binary classifier $f_k$ under the noisy labels $\tilde{y}_k$.
First, we consider the case when the true label is $y_k = 1$. By the margin condition, we have:
\begin{equation}
\mathbb{E}[f_k(x) \mid y_k = 1] \geq \mu_{1,k}.
\end{equation}
Let $\tilde{f}_k(x)$ be the noisy version of $f_k(x)$ under the label noise model, i.e., $\tilde{f}_k(x) = f_k(x)$ with probability $1 - \epsilon_k$ and $\tilde{f}_k(x) = -f_k(x)$ with probability $\epsilon_k$. Then, we have:
\begin{equation}
\mathbb{E}[(\tilde{f}_k(x) - f_k(x))^2 \mid y_k = 1] = 4\epsilon_k \mathbb{E}[f_k^2(x) \mid y_k = 1].
\end{equation}
By the Hidden Layer Output Bound (Lemma \ref{lemma:hidden_layer_output_bound}) and the fact that $n_L=1$, we have:
\begin{equation}
\label{eq:output_bound}
\begin{aligned}
|f_k(x)| &\leq \left(1 - \frac{B + |\phi(0)|}{1 - B}\right) B^L + \frac{B + |\phi(0)|}{1 - B} + \delta \\
& =  \kappa(B,L,\phi(0)) + \delta,
\end{aligned}
\end{equation}

with probability at least $1 - \delta$ for any $\delta > 0$. Therefore,
\begin{equation}
\mathbb{E}[f_k^2(x) \mid y_k = 1] \leq (\kappa(B,L,\phi(0)) + \delta)^2 .
\end{equation}
Combining the above inequalities, we get:
\begin{equation}
\mathbb{E}[(\tilde{f}_k(x) - f_k(x))^2 \mid y_k = 1] \leq 4\epsilon_k (\kappa(B,L,\phi(0)) + \delta)^2.
\end{equation}
Now, applying the Concentration of Noisy Functions (Lemma \ref{lemma:concentration_noisy}) with $\sigma^2 = 4\epsilon_k (\kappa(B,L,\phi(0)) + \delta)^2$, we have:
\begin{equation}
    \mathbb{P}\left(\tilde{f}_k(x) \leq \mu_{1,k} - 2\sqrt{2\epsilon_k(\kappa(B,L,\phi(0)) + \delta)^2 \log(2/\delta)} \mid y_k = 1\right) \leq \delta
\end{equation}
Setting $2\sqrt{2\epsilon_k(\kappa(B,L,\phi(0)) + \delta)^2 \log(2/\delta)} = \mu_{1,k}$ and solving for $\delta$, we obtain:
\begin{equation}
\begin{aligned}
\mathbb{P}\left(\tilde{f}_k(x) \leq 0 \mid y_k = 1\right) & \leq 2\exp\left(-\frac{\mu_{1,k}^2}{8\epsilon_k (\kappa(B,L,\phi(0)) + \delta)^2}\right) \\ & \leq 2\exp\left(-\frac{\gamma_k^2}{8\epsilon_k (\kappa(B,L,\phi(0)) + \delta)^2}\right),
\end{aligned}
\end{equation}
with probability at least $1 - \delta$.
Similarly, for the case when the true label is $y_k = -1$, we can show that:
\begin{equation}
\mathbb{P}\left(\tilde{f}_k(x) \geq 0 \mid y_k = -1\right) \leq 2\exp\left(-\frac{\gamma_k^2}{8\epsilon_k  (\kappa(B,L,\phi(0)) + \delta)^2}\right),
\end{equation}
with probability at least $1 - \delta$.
Combining the two cases, we obtain:
\begin{equation}
\begin{aligned}
p_k &= \mathbb{P}\left(\tilde{f}_k(x) \leq 0 \mid y_k = 1\right) \mathbb{P}(y_k = 1) + \mathbb{P}\left(\tilde{f}_k(x) \geq 0 \mid y_k = -1\right)  \mathbb{P}(y_k = -1)\\
&\leq 2\exp\left(-\frac{\gamma_k^2}{8\epsilon_k  (\kappa(B,L,\phi(0)) + \delta)^2}\right),
\end{aligned}
\end{equation}
with probability at least $1 - 2\delta$.
Then, with probability at least $1 - \delta$, we have
\begin{equation}
p_k \leq  2\exp\left(-\frac{\gamma_k^2}{8\epsilon_k  (\kappa(B,L,\phi(0)) + \delta/2)^2}\right).
\end{equation}
\end{proof}

\begin{proof}[\textbf{Proof of Theorem \ref{theorem2}}]
In Pseudo-Label Learning, suppose the class labels are corrupted by random noise with probability $\epsilon \in (0, 1)$, i.e., each true label $y \in [C]$ is flipped to a uniformly random class $\tilde{y} \in [C] \setminus {y}$ with probability $\epsilon$. Under this noise model, the ECOC-based DNN has a label noise probability of $\epsilon_k = \epsilon \cdot (n+d)/(2n)$ for each binary classifier $f_k$.

\textbf{For the ECOC-based DNN}, let $\hat{y} \in [C]$ be the predicted class label for an input $x$, and let $\hat{E}(x) \in \{-1, +1\}^n$ be the corresponding predicted codeword, i.e., $\hat{E}(x)_k = \text{sign}(f_k(x))$ for $k \in [n]$. Let $y \in [C]$ be the true class label of $x$, and let $E([y]) \in \{-1, +1\}^n$ be the corresponding true codeword.
Under the given noise model,  when the probability that $y$ is flipped to a specific class $\tilde{y} \neq y$,  the Hamming distance between $E([y])$ and $E([\tilde{y}])$ is at least $d$, by the definition of the minimum distance of the ECOC matrix. Therefore, for $\hat{y}$ to be misclassified as $\tilde{y}$, the predicted codeword $\hat{E}(x)$ needs to be closer to $E([\tilde{y}])$ than to $E([y])$ in Hamming distance, which means that $\hat{E}(x)$ must differ from $E([y])$ in at least $d/2$ bits.
By the union bound, the probability of this event is at most ${n \choose d/2} p^{d/2}$, where $p$ is an upper bound on the error probability of each binary classifier. The total probability of misclassification is bounded by
\begin{equation}
\mathbb{P}_{ECOC}(\hat{y} \neq y) \leq {n \choose d/2} p^{d/2}. 
\end{equation}
To bound the error probability of each binary classifier, we apply the simplified bound from Lemma \ref{lemma:binary_perturbation_bound_thm} with the label noise probability $\epsilon_k = \epsilon \cdot (n+d)/(2n)$ and probability at least $1 - \delta/n$:
\begin{equation}
p  \leq  2\exp\left(-\frac{n\gamma^2}{4(n+d)\epsilon  (\kappa(B,L,\phi(0)) + \delta/2n)^2}\right),
\end{equation}
where $\gamma = \min \{\gamma_k\} $. Finally, the error probability of the ECOC-based classifier is bounded by:
\begin{equation}
\begin{aligned}
\mathbb{P}_{ECOC}(\hat{y} \neq y) &\leq   {n \choose d/2} p^{d/2} \\
&\leq   (2en/d)^{d/2}p^{d/2} \label{eq:ecoc_error_bound}
\end{aligned}
\end{equation}
with probability at least $1 - \delta$ and using the binomial coefficient bound ${n \choose d} \leq (en/d)^d$.

\textbf{For the one-hot-based DNN},
the multiclass classifier $f(x)$ predicts the class label of $x$ by taking the argmax of the predicted probabilities:
\begin{equation}
\hat{y} = \arg\max_{j \in [C]} f_j(x).
\end{equation}
Assume each binary classifier $f_j(x)$ satisfies the margin condition similar as Lemma \ref{lemma:binary_perturbation_bound_thm}.
Let $\epsilon_j$ be the label noise probability for the $j$-th binary classifier, we have $\epsilon_j = \epsilon$. Then, with probability at least $1 - \delta/C$, the error probability of each binary classifier $f_j(x)$ under noisy labels is bounded by:
\begin{equation}
p_j  \leq 2\exp\left(-\frac{\hat{\gamma}_j^2}{8\epsilon  (\kappa(B,L,\phi(0)) + \delta/2C)^2}\right).
\end{equation}
Now, let $E$ be the event that the multiclass classifier $f(x)$ predicts the correct class label under noisy labels, i.e., $\hat{y} = \arg\max_{j \in [C]} y_j$. For event $E$ to hold, it suffices to have $f_j(x) > f_k(x)$ for all $k \neq j$, where $j = \arg\max_{j \in [C]} y_j$ is the true class label.
Consider any class $k \neq j$. Under the true labels $y$, we have $y_j = 1$ and $y_k = -1$. By the margin condition, with probability at least $1 - \delta/C$, we have $f_j(x) > f_k(x)$ holds with probability at least $1 - \delta/C$ if:
\begin{equation}
\mu_{j,1} - \hat{\gamma}_j > \mu_{k,-1} + \hat{\gamma}_k.
\end{equation}

By the union bound, $f_j(x) > f_k(x)$ holds for all $k \neq j$ simultaneously with probability at least $1 - \delta$, implying that event $E$ holds with probability at least $1 - \delta$.

Based on \textbf{\textit{Theorem \ref{theorem1}}}, one-hot can be viewed as using $C$ binary classifiers. For correct prediction, the ground-truth score $f_j(x)$ must exceed all others $f_k(x)$ for $k\neq j$, we have the following proof sketch:
\begin{enumerate}
  \item Define the correct classification event:
   $ E = \bigcap_{k \neq j} \{ f_j(x) > f_k(x) \} $

  \item The misclassification corresponds to the complement event:
   $ \mathbb{P}(E^{\mathcal{C}}) = \mathbb{P} \left( \bigcup_{k \neq j} \{ f_k(x) > f_j(x) \} \right)$

  \item Applying the union bound:
   $ \mathbb{P}(E^{\mathcal{C}}) \leq \sum_{k \neq j} \mathbb{P}(f_k(x) > f_j(x))$

  \item Using the per-class binary upper bound $p_j$, we obtain the overall one-hot upper bound:
    $\mathbb{P}_{\text{one-hot}}(\hat{y} \neq y) \leq \sum_{j=1}^{C} p_j$
  
\end{enumerate}

Finally, the error probability of the multiclass classifier $f(x)$ is bounded by:
\begin{equation}
\begin{aligned}
\mathbb{P}_{one-hot}(\hat{y} \neq y) &= \mathbb{P}(E^\mathcal{C}) \\
&\leq \sum_{j=1}^{C} p_j \\
&\leq  2C\exp\left(-\frac{\hat{\gamma}^2}{8\epsilon  (\kappa(B,L,\phi(0)) + \delta/2C)^2}\right), \label{eq:onehot_error_bound}
\end{aligned}
\end{equation}
with probability at least $1 - \delta$, where $\hat{\gamma} = \min \{\hat{\gamma}_j\} $.

To derive the condition on the minimum distance $d$ for the ECOC-based DNN to have a tighter error bound than the one-hot encoding DNN, we compare the two bounds (Equations \ref{eq:ecoc_error_bound} and \ref{eq:onehot_error_bound}):
\begin{equation}
\begin{aligned}
 (\frac{2en}{d})^{\frac{d}{2}}\exp\left(-\frac{nd\gamma^2}{8(n+d)\epsilon  (\kappa(B,L,\phi(0)) + \delta/2n)^2}\right)&< 2C\exp\left(-\frac{\hat{\gamma}^2}{8\epsilon  (\kappa(B,L,\phi(0)) + \delta/2C)^2}\right) \\
 \left(\frac{2en}{d}\right)^{\frac{d}{2}} \exp\left(-\frac{nd\gamma^2}{8(n+d)\epsilon (\kappa + \delta/2n)^2}\right) &< 2C\exp\left(-\frac{\hat{\gamma}^2}{8\epsilon (\kappa + \delta/2C)^2}\right) \\
\Rightarrow \quad \left(\frac{2en}{d}\right)^{\frac{d}{2}} \exp\left(-\frac{nd\gamma^2}{8(n+d)\epsilon \kappa^2}\right) &< 2C \exp\left(-\frac{\hat{\gamma}^2}{8\epsilon \kappa^2}\right) \\
\Rightarrow \quad \left(\frac{2en}{d}\right)^{\frac{d}{2}} \exp\left(-\frac{nd\gamma^2}{8(n+d)\epsilon \kappa^2}\right) &< 2C \exp\left(-\frac{\hat{\gamma}^2}{8\epsilon \kappa^2}\right) \\
\Rightarrow \quad \frac{d}{2} \log\left(\frac{2en}{d}\right) - \frac{nd\gamma^2}{8(n+d)\epsilon \kappa^2} &< \log\left(2C\right) -\frac{\hat{\gamma}^2}{8\epsilon \kappa^2} \\
\Rightarrow \quad \frac{nd\gamma^2}{8(n+d)\epsilon \kappa^2} - \frac{\hat{\gamma}^2}{8\epsilon \kappa^2} &> \frac{d}{2} \log\left(\frac{2en}{d}\right) - \log\left(2C\right) \\
d > \frac{8(n+d)\epsilon \kappa^2}{n\gamma^2} & \left(\frac{d}{2} \log\left(\frac{4en}{d}\right)  - \log\left(2C\right) + \frac{\hat{\gamma}^2}{8\epsilon \kappa^2}\right)
\end{aligned}
\end{equation}

Using the fact that $d<n$, and  the monotonicity of $\frac{lnx}{x}$, we have:

\begin{equation}
d > \frac{16\epsilon \kappa^2}{\gamma^2} \left( \frac{(1+log2)n}{2}- \log\left(2C\right)\right)  + 2\frac{\hat{\gamma}^2}{\gamma^2} 
\end{equation}

The equation suggests that when the noise level ($\epsilon$) is higher, we can use a larger code distance ($d$) to obtain a tighter bound for ECOC-based classifer. The lower bound of $d$ is determined by the ratio of the margins of the two classifiers ($2\frac{\hat{\gamma}^2}{\gamma^2}$).
\end{proof}

\paragraph{Remark on Assumptions.}
The theoretical results presented above rely on two standard assumptions: \textit{(i)} classifier independence and \textit{(ii)} uniform random noise. Both assumptions are analytically tractable and justified in the context of our analysis.

\textbf{(1) Independence Assumption.}  
\begin{itemize}
  \item Theorem \ref{theorem1} analyzes ECOC decoding from a global perspective and does \textbf{not require} classifier independence.
  
  \item Theorem \ref{theorem2} assumes bit-wise independence only to derive a \textbf{worst-case error bound}. In practice, classifier correlations tend to \textbf{reduce} joint error, so the assumption does \textbf{not compromise} the theorem's validity. Specifically,
  \[
    \mathbb{P}_{\text{ECOC}}(\hat{y} \neq y) \leq \left( \frac{2en}{d} \right)^{d/2} p^{d/2}.
  \]
  When classifiers are correlated, the expected joint error is lower than the product of marginal probabilities due to Jensen’s inequality:
  \[
    \mathbb{E}[p^{d/2}] \leq (\mathbb{E}[p])^{d/2}.
  \]
  Therefore, modeling such dependencies would \textbf{tighten} the bound, and the assumption of independence yields a \textbf{conservative} estimate.
  
\end{itemize}
In practice, perfect independence is rarely satisfied. Even with orthogonal or class-agnostic codes (e.g., one-hot or $M_{\text{mmd}}$), optimization often introduces correlations. Explicitly modeling these dependencies is non-trivial and beyond our scope. Thus, the independence assumption offers a \textbf{tractable and analyzable abstraction}.

\textbf{(2) Uniform Noise Assumption.}  

We adopt uniform random noise as a clean baseline, following standard theoretical practice. While real-world pseudo-label noise may exhibit correlations 
 and structured patterns, this assumption does \textbf{not undermine} our results:
\begin{itemize}
  \item At the class level, such structure affects both one-hot and ECOC \textbf{similarly}, as both operate at the pixel level. Our primary goal is to replace one-hot encoding with ECOC within pseudo-label learning frameworks, and thus the relative comparison remains valid. Furthermore, our Reliable Bit Mining mechanism partially mitigates such structure by exploiting semantic relationships across classes.
  
  \item At the bit level, correlations make the independence-based ECOC bound more \textbf{conservative} (see \textbf{Independence Assumption}), since correlations typically reduce joint error. As a result, practical performance often surpasses the worst-case bound derived under independence.
\end{itemize}

\section{Implementation of Encoding Strategy}
\label{sec:A}
In this section, we give implementation details of two algorithms for codebook generation, as shown in Alg. \ref{alg:2} and Alg. \ref{alg:3}, respectively.

% \vspace{-1em}
\begin{algorithm}[h]
	\renewcommand{\algorithmicrequire}{\textbf{Input:}}
	\renewcommand{\algorithmicensure}{\textbf{Output:}}
	\caption{Max-min distance encoding }
	\label{alg:2}
	\begin{algorithmic}[1]
		\REQUIRE classes number $N$, codeword length $K$, 
 iterations  $L$
		\ENSURE binary matrix of codebook  $M_{mmd}  \in \{0,1\}^{N \times K} $
  \STATE $ D_{sum} = 0$ 
\FOR{$j=1$ \textbf{to} $L$}
\STATE Generate random binary matrix $m \in \{0,1\}^{N \times K} $
\STATE  Compute $d_{min\_r} , d_{min\_c}, d_{max\_c}$
\IF {$d_{min\_r}=0$ \textbf{or} $d_{min\_c}=0$ \textbf{or} $d_{max\_c}=N$}  
\STATE \textbf{continue}
\ENDIF
\STATE $d_{sum} = d_{min\_r}+ d_{min\_c}+ N- d_{max\_c}$
\IF {$d_{sum} > D_{sum} $ }  
\STATE $D_{sum} =d_{sum} , M_{mmd} =m $
\ENDIF
\ENDFOR
\STATE \textbf{return} $M_{mmd} $
\end{algorithmic}  
\end{algorithm}

\noindent \textbf{Max-min distance encoding.} A good error-correcting output code should satisfy \textbf{row separation}  and \textbf{column separation}. For the former, the minimum Hamming distance
between each pair of codewords $d_{min\_r}$ should be maximized, which can 
correct at least $\lfloor\frac{d_{min\_r}-1}{2}\rfloor$ single bit errors.
 As for the latter, each bit classifier should be uncorrelated with the others. This can be ensured by maximizing $d_{min\_c}$ and $N- d_{max\_c}$, which are calculated between columns of the code matrix. Although searching for an optimal encoding matrix is known as an NP-hard problem \cite{crammer2002learnability}, we can obtain a sufficiently valuable encoding matrix, denoted as $M_{mmd}$,  through a random generation strategy due to the sparsity of the encoding space.
 
Due to the large size of the encoding space ($2^K$) relative to the number of classes $N$,  a randomly generated codebook typically ensures a sufficiently large minimum Hamming distance. Any pair of such random strings will be separated by a Hamming distance that is binomially distributed with mean $K/2$.  We can search for the appropriate codebook through multiple iterations to satisfy the optimal row separation and column separation. At the same time, we should ensure the validity of the codebook from two perspectives: (1) There are no identical codewords, which can be ensured by $d_{min\_r} > 0$. (2) There are no wholly identical or opposite classifiers, which can be ensured by $d_{min\_c} > 0$ and $d_{max\_c} < N$. In our case, we set iterations $L = 100000$ to generate a sufficiently robust codebook.

% \vspace{-1em}
\begin{algorithm}[h]
	\renewcommand{\algorithmicrequire}{\textbf{Input:}}
	\renewcommand{\algorithmicensure}{\textbf{Output:}}
	\caption{Text-based encoding}
	\label{alg:3}
	\begin{algorithmic}[1]
		\REQUIRE  classes names  $\{class\}$, codeword length $K$, 
		\ENSURE binary matrix of codebook  $M_{text} \in \{0,1\}^{N \times K} $
\STATE  ``$\{class\}$"  $\xlongrightarrow{ word2vec} f_{text} \in \mathbb{R}^{N \times C}$ 
% \STATE  \textbf{or} ``A photo of a $\{class\}$"  $\xlongrightarrow{ text\_encoder} f_{text} \in \mathbb{R}^{N \times C}$ 
\STATE Scale by the L2 norm $\bar{f}_{text} = f_{text}/\Vert f_{text}\Vert_2 $
\STATE Calculate channel-wise variance  $\sigma  \in \mathbb{R}^C $
\STATE Sort $\bar{f}_{text}$ in descending order based on $\sigma$
\STATE $ k = 1$
\FOR{$\bar{f} \in \mathbb{R}^N $ \textbf{in} $\bar{f}_{text} \in \mathbb{R}^{N \times C}$}
\STATE Calculate the mean $m$ of $\bar{f}$
    \FOR{$n=1$ \textbf{to} $N$}
        \IF {$\bar{f}_n < mean$}
        \STATE $M_{text}[n,k]=0$
        \ELSE
        \STATE $M_{text}[n,k]=1$
        \ENDIF
    \ENDFOR
    \IF {$M_{text} \quad\hspace{-0.5em} is  \quad\hspace{-0.5em} valid$}
    \STATE $k = k+1$
        \IF {$k > K$}
        \STATE \textbf{break}
        \ENDIF
    \ENDIF
\ENDFOR
\STATE \textbf{return} $M_{text}$
\end{algorithmic}  
\end{algorithm}
% \vspace{-1em}

\noindent \textbf{Text-based encoding.} According to the above criteria, we can design encoding matrices that exhibit desirable properties and sufficient robustness. However, the resulting codewords are class-agnostic, and the corresponding binary classification problem may be difficult to optimize. Other than manually designing encodings based on class attributes, we adopt a concise and automated method based on text embedding to generate codewords for the classes. Specifically, $N$ class names  are  mapped to the feature space  $f_{text} \in \mathbb{R}^{N \times C}$ through word2vec \cite{mikolov2013distributed}. Then, we compress extracted continuous features and quantize them into binary encodings of length $K$ to obtain the code matrix, denoted as $M_{text}$. This encoding strategy considers the relationships and structural information among classes,  facilitating more efficient encoding learning.

To consider the relationships and structural information among classes, we resort to word2vec \cite{mikolov2013distributed}  to extract class-related features. Then, we select the most discriminant feature components based on variance magnitude to compress the feature dimensionality to the length of the codeword. Furthermore,  we quantize the features into 0-1 encoding using the mean of the feature components as a threshold. During this process, we also need to ensure the validity of the codebook, as discussed above.

\section{Analysis of  Coding Strategy}
\label{sec:B}
We first visualize the similarity matrix of different encoding forms in Fig. \ref{fig5}. The classes are typically encoded in a one-hot form, which is easily susceptible to the influence of label drift in the pseudo-label learning process. The $M_{mmd}$ aims to maximize the distance between classes, ensuring the robustness of the labels. Furthermore,  $M_{text}$  takes into account the relationships and structures between different classes, ensuring that similar classes have similar encodings. This property makes the resulting binary classification problems easier to optimize. 
\begin{figure}[t]
\centering
\includegraphics[width=0.99\columnwidth]{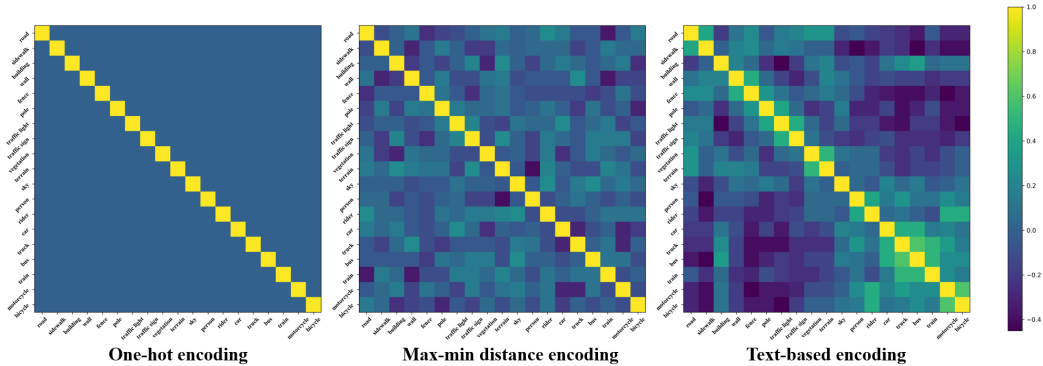} % 
\caption{The similarity matrix of different encoding forms for 19 classes in Cityscapes, where the minimum code distance in $M_{mmd}$ is 15 and in $M_{text}$ is 8.  Note that we standardize the binary value $\{0,1\}$ to $\{-1,1\}$  to calculate similarity for ECOC encoding.}
\label{fig5}
\end{figure}

Due to the larger code distance in $M_{mmd}$, it may generate more label noise for the code-wise form of pseudo-labels, leading to erroneous training in pseudo-label learning and limiting performance improvement. However, because of its sufficient robustness, bit-wise pseudo-labels can provide more stable performance gains for $M_{mmd}$. In $M_{text}$, this phenomenon is reversed because $M_{text}$ naturally perceives the relationships between classes, making its encoding form easy to learn, and thus code-wise pseudo-labels achieve higher performance, as shown in Table \ref{table5}. Our hybrid pseudo-labels effectively combine the advantages of both forms, achieving the best performance gains.

To further study the different selections for coding strategy, we conduct experiments built with denoted as $M_{mmd}$ and $M_{text}$ respectively, and evaluate performance on UDA setting with DAFormer \cite{hoyer2022daformer} and SSL setting with UniMatch \cite{yang2023revisiting}. The results are shown in Table \ref{table6} and Table \ref{table7}. We also We present a comprehensive comparison in Figure \ref{fig_r1}. Each code design has its own benefits:
\begin{itemize}
    \item $M_{mmd}$ ensures larger code distances and stronger error correction.
    \item $M_{text}$ preserves semantic relationships between classes, which helps in learning easier-to-optimize binary classifiers.
\end{itemize}

Both $M_{mmd}$ and $M_{text}$  show consistent competitive performance, meaning that the pseudo-label learning process can benefit from ECOC encoding and proposed hybrid pseudo-labels. In our experiments, we implement  $M_{text}$ as the default setting for ECOCSeg.

\begin{table}[!htbp]
    \centering
\caption{Results on Cityscapes of \textbf{UDA setting} built with DAFormer \cite{hoyer2022daformer}.} 
    \label{table6}
    
    \begin{tabular}{cccc} 
    \hline
        \textbf{Source dataset}  & baseline & $M_{mmd}$  &$M_{text}$ \\ 
      \hline
 GTAv & 68.3 & 70.2  &\textbf{70.5}\\ 
SYNTHIA &60.9& 63.1  &\textbf{63.3} \\ 
      \hline
    \end{tabular}
\end{table}

\begin{table}[!htbp]
    \centering
      \caption{Results on Pascal of \textbf{SSL setting} bulit with UniMatch \cite{yang2023revisiting}. } 
    \label{table7}
     \begin{tabular}{ccccc} 
    \hline
        \textbf{Partition protocol} & baseline & $M_{mmd}$ &$M_{text}$  \\ 
      \hline
   
       1/16  & 76.5 & 78.1 &\textbf{78.1}\\ 
     
    1/4 &77.2& 78.8 &\textbf{78.9}\\ 
      \hline
    \end{tabular}
\end{table}

Note that our encoding strategies, focusing on class separability, visual similarity, and codeword length. These components are essential for building effective codebooks in ECOCSeg.

\begin{itemize}
  \item\textbf{Class Separability.}
We propose two simple yet effective codebook generation strategies:

\begin{itemize}
  \item $M_{mmd}$: A class-agnostic strategy that maximizes the minimum pairwise Hamming distance among codewords.
  \item $M_{text}$: A text-guided strategy that ensures code diversity via balanced 0/1 quantization of pretrained language embeddings.
\end{itemize}

Both strategies guarantee sufficient inter-codeword Hamming distance, which is crucial for the error-correction capability of ECOC. As visualized in Figure \ref{fig5}, the constructed codebooks provide meaningful inter-class separation in the Hamming space.

  \item \textbf{Visual Similarity.}
Incorporating visual similarity into the codebook is optional and not strictly required—similar to how traditional one-hot encoding is class-agnostic by design. Specifically, $M_{mmd}$ does not rely on class semantics, while $M_{text}$ implicitly captures semantic and visual similarity via language priors from pretrained text embeddings. These strategies are compatible and effective under various settings, as demonstrated in Figure \ref{fig_r1}.

  \item \textbf{Codeword Length.}
We analyze the impact of codeword length $K$ in Appendices \ref{sec:Length} and \ref{sec: ssl_exp}, including selection criteria and empirical results. Our findings indicate that moderately long codes offer a favorable trade-off between robustness and computational efficiency.
\end{itemize}

\section{Analysis of  Threshold $T$ }
\label{sec:threshold}
We analyze the influence of $T$ in  Table \ref{table5}, and further present sensitivity results on different benchmarks in Figure \ref{fig_r1}. Based on these analyses, we conclude:
\begin{itemize}
\item Both code-wise and bit-wise pseudo-labels independently improve the performance due to the explicit attribute-level decoupling enabled by ECOC.
\item In the Reliable Bit Mining, $T=0.5$ and $T=1$ correspond to the pure code-wise and bit-wise forms respectively, while intermediate values yield more robust hybrid labels. This makes $T$ a non-sensitive hyperparameter.
\item The choice of $T$ is robust across datasets. We use a fixed setting of $T=0.95$ in all experiments, achieving consistent performance gains.
\end{itemize}

\begin{figure*}[htp!]
\centering
\includegraphics[width=0.99\textwidth]{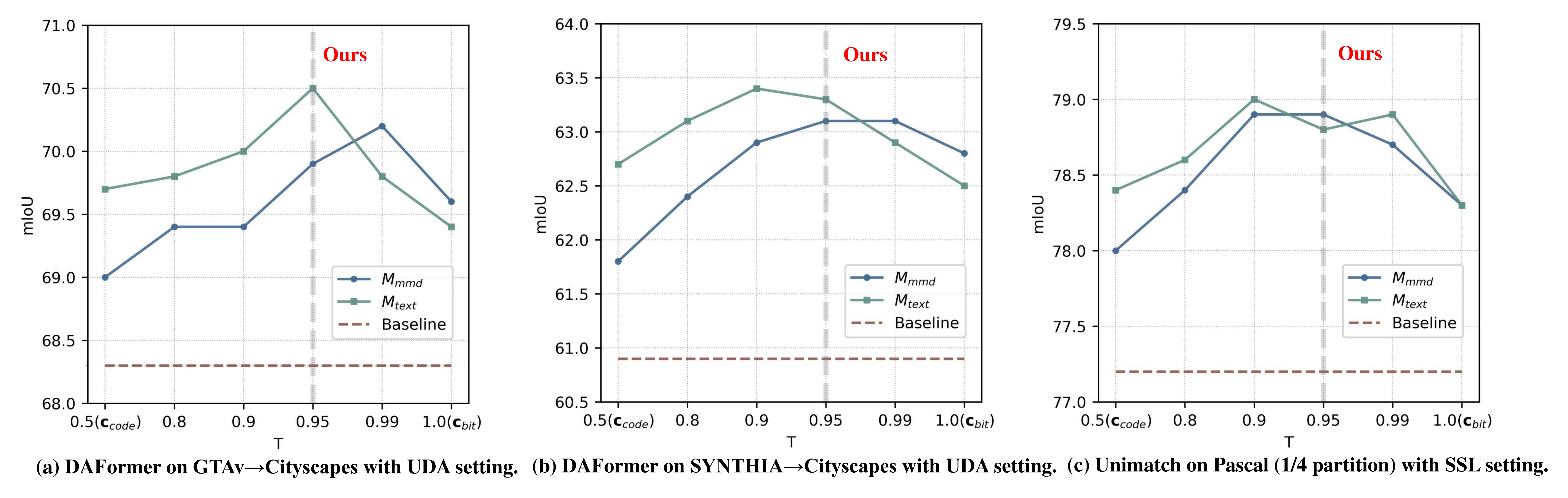} 
\caption{ Sensitivity analysis of $T$ across different benchmarks. }
\label{fig_r1}
\end{figure*}

\section{Analysis of  Encoding Length $K$}
\label{sec:Length}
Generally, an adequate length of codewords (at least $\log N$) is required to ensure robust encoding. However, excessively long codewords can lead to redundancy and inefficient optimization. Table \ref{table4} studies the influence of encoding length $K$. ECOCSeg performs well even with a low encoding length of $K=10$, which is lower than the number of classes $N=19$. The performance improves continuously as the encoding length increases within the range of less than 40. However, when $K>40$, the performance shows negligible improvement. To balance performance and computational costs, we select $K=40$  for both Cityscapes (19 classes) and  Pascal (21 classes).
In Appendix \ref{sec: ssl_exp}, we also show that ECOCSeg can handle a larger number of classes efficiently.

\begin{table}[!htbp]
    \centering   \small  \tabcolsep=8pt
   \caption{ Ablation study on $K$, built with  DAFormer under the  \textbf{fully supervised learning setting} on Cityscapes. } 

    \label{table4}
    
    \begin{tabular}{c|cccccc} 
    \hline

$K$& 10 & 20& 30 &40&50&60\\ 
\hline 
mIoU   & 77.1& 77.5&78.0 & 78.1 &78.1&78.2\\ 
    \hline
    \end{tabular}
   
\end{table}

\section{Analysis of Shared Bits}
The assumption used in Reliable Bit Mining (sec. \ref{sec:3.4}) that “shared bits are more reliable” stems from observation that "correct class is often among Top-$C$ nearest neighbors" in the codeword space. This is a widely observed property, as confusing classes often lie close to each other in the feature or code space due to small discriminative margins.

This is empirically confirmed in Figure \ref{fig_r2}:
\begin{itemize}
    \item Across benchmarks, mAcc drops monotonically with lower-ranked predicted classes.
    \item Aggregated mAcc over Top-$C$ classes quickly approaches 1, indicating correct class is usually included.
    \item Since correct class's codeword contains only correct bits, the shared bits from Top-$C$ set are guaranteed to be accurate when correct class is included.
\end{itemize}
Therefore, our Reliable Bit Mining strategy leverages this phenomenon to robustly extract accurate bits, even in the presence of pixel-level noise.

\begin{figure*}[htp!]
\centering
\includegraphics[width=0.99\textwidth]{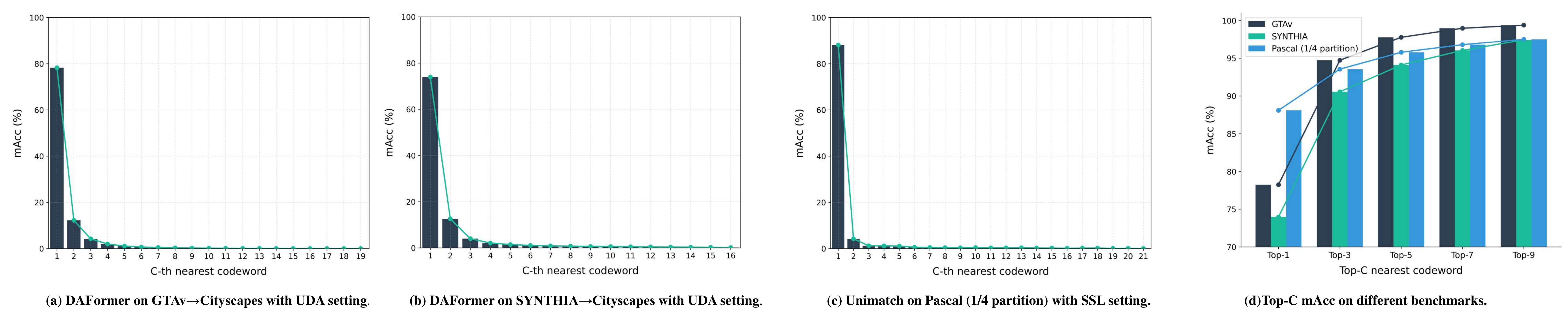} 
\caption{(a,b,c) Histogram of mAcc
 corresponding to the C-th nearest codeword. (d) Accumulated mAcc of the Top-C classes on different benchmarks. }
\label{fig_r2}
\end{figure*}

\section{More Experiments on SSL}
\label{sec: ssl_exp}
In this section, we provide more experiment settings and results  on SSL on more powerful baselines, challenging benchmarks and  real-world scenarios.
We emphasize that ECOCSeg is designed as a label representation improvement, independent of network architecture or training strategy. Therefore, it can be seamlessly integrated into a wide range of semi-supervised learning (SSL) frameworks as a plug-and-play module.

\noindent \textbf{Challenging Benchmarks.} Follow the basic setting of UniMatch \cite{yang2023revisiting}, the initial learning rate is set as
0.005 and 0.004 for  Cityscapes and COCO respectively, with a SGD optimizer. The model is trained for 240, and 30 epochs under a poly learning rate scheduler. The training resolution is set as  801, and 513 for these two datasets.  We adopt the Xception-65 \cite{chollet2017xception} as backbone when trained on COCO. 
In  Table \ref{table:city}, we implement the experiments on  Cityscapes. ECOCSeg consistently outperforms baselines  with gains ranging from 1.0\% to 3.2\%, especially on most challenging 1/16 partition, verifying its robustness and generalization ability. We also show the results on COCO in Table \ref{table:coco}, where ECOCSeg  outperforms the baseline  with gains ranging from 1.7\% to 2.6\%.  We find ECOC is efficient to handle a larger number of classes due to its binary decoupling property. Theoretically, a length of $K = \log_2N$ is sufficient to represent 
 classes, while excessively long encodings can lead to redundancy. Empirically, the optimal code length is roughly  $K = 10\log_2N$ \cite{allwein2000reducing}. Based on this empirical rule, we use $K = 60$ 
 for COCO (81 classes) and conduct experiments with UniMatch, observing significant improvements.

\begin{table}[!htbp]
    \centering  
    % \small  
    \tabcolsep=10pt
 % \captionsetup{type=table}
    \caption{SSL segmentation performance on  Cityscapes. The $321$ and $513$ denote the training resolution. } 
  
    \label{table:city}
    
    \begin{tabular}{c|cccccccc} 
    \hline
    &\multicolumn{4}{c|}{ResNet-50}& \multicolumn{4}{|c}{ResNet-101}    \\ 
            \cline{2-9}

     \multirow{-2}{*}{\textbf{Method}}     &1/16 & 1/8 & 1/4 &1/2& 1/16&1/8&1/4&1/2 \\ 
      \hline \hline  
     Sup-only &  63.3&  70.2&  73.1&  76.6 & 66.3 & 72.8&  75.0&  78.0 \\ 
 
     U$^2$PL  \cite{wang2022semi}& 70.6 &73.0 &76.3 &77.2 &74.9& 76.5 &78.5& 79.1 \\  
       AugSeg \cite{zhao2023augmentation}& 73.7 &76.4 &78.7 &79.3 &75.2 &77.8 &79.5 &80.4 \\  
        DAW \cite{sun2023daw}& 75.2 &77.5 &79.1 &79.5 &76.6 &78.4& 79.8& 80.6\\ 
 RankMatch \cite{mai2024rankmatch}& 75.4 &77.7 &79.2 &79.5 &77.1 &78.6 &80.0& 80.7  \\ 
    \hline
             
     Fixmatch \cite{sohn2020fixmatch} & 72.6 &75.7 &76.8 &78.2 &74.2 &76.2 &77.2& 78.4 \\ 

     \rowcolor{black!10} \textbf{+ECOCSeg}   &\textbf{75.8}& \textbf{78.1}& \textbf{78.5}&\textbf{79.3}&\textbf{77.3}& \textbf{78.4} & \textbf{78.9}& \textbf{79.4}
     \\ 
     
     \hdashline[1pt/3pt]
     UniMatch \cite{yang2023revisiting} & 75.0& 76.8 &77.5 &78.6 &76.6& 77.9 &79.2 &79.5 \\

     \rowcolor{black!10} \textbf{+ECOCSeg}   &\textbf{77.1}& \textbf{78.2}& \textbf{78.9}&\textbf{79.6}&\textbf{78.2}& \textbf{79.3} & \textbf{80.5}& \textbf{80.7}\\ 
    \hline
    
      \end{tabular}
\end{table}

\begin{table}[!htbp]
    \centering  
    % \small  
    \tabcolsep=15pt
 % \captionsetup{type=table}
    \caption{SSL segmentation performance on  COCO. } 
  
    \label{table:coco}
    
    \begin{tabular}{c|ccccc} 
    \hline

  {\textbf{Method}}     &1/512 & 1/256 & 1/128 &1/64& 1/32 \\ 
      \hline \hline  
     Sup-only &  22.9 &28.0 &33.6 &37.8& 42.2 \\ 
   PseudoSeg \cite{zoupseudoseg}&  29.8& 37.1 &39.1 &41.8 &43.6 \\ 
    PC$^2$Seg \cite{zhong2021pixel}&  29.9& 37.5& 40.1 &43.7 &46.1\\ 
      AllSpark \cite{wang2024allspark} &  34.1 &41.6 &45.4 &49.5 &50.9\\ 
     \hline
     UniMatch \cite{yang2023revisiting} &  31.9& 38.9& 44.4 &48.2 &49.8\\ 

     \rowcolor{black!10} \textbf{+ECOCSeg}   & \textbf{34.5} &\textbf{41.8}&\textbf{46.2} &\textbf{49.9}& \textbf{51.6}\\ 
    \hline
    
      \end{tabular}
\end{table}

\noindent \textbf{Powerful Baselines.}  To demonstrate the compatibility of our method with modern architectures and training paradigms, we conduct additional experiments on UniMatch V2 \cite{yang2025unimatch} using the Pascal VOC high-quality set, with a DINOv2-S \cite{oquab2023dinov2} encoder. The results are summarized in Table~\ref{tab:unimatch-results}. These results confirm that ECOCSeg consistently improves performance, even when built upon strong semi-supervised learning baselines with large-scale pre-training.

\begin{table}[ht]
\centering
\caption{SSL segmentation performance  on Pascal VOC (high-quality set).}
\label{tab:unimatch-results}
\begin{tabular}{lccccc}
\toprule
\textbf{Setting} & 1/16 (92) & 1/8 (183) & 1/4 (366) & 1/2 (732) & Full (1464) \\
\midrule
AugSeg \cite{zhao2023augmentation} (RN-101)  &71.1 &75.5& 78.8 &80.3& 81.4 \\
 CorrMatch \cite{sun2024corrmatch} (RN-101) &76.4 &78.5& 79.4& 80.6& 81.8\\
 BeyondPixels \cite{howlader2024beyond} (RN-101)& 77.3 &78.6 &79.8& 80.8 &81.7\\
 \hline
 
UniMatch V2 \cite{yang2025unimatch} (DINOv2-S) & 79.0 & 85.5 & 85.9 & 86.7& 87.8 \\
 \rowcolor{black!10} + ECOCSeg              & \textbf{81.1}& \textbf{86.6}& \textbf{87.1}& \textbf{87.8}& \textbf{88.9} \\
\bottomrule
\end{tabular}
\end{table}

\noindent \textbf{Real-World Scenarios.}
To evaluate the generalizability of ECOCSeg beyond natural scene segmentation, we conduct experiments on two real-world, label-scarce domains: \textit{remote sensing} and \textit{medical imaging}.

\textit{Remote Sensing:}  
We integrate ECOCSeg into UniMatch \cite{yang2023revisiting} (PSPNet \cite{zhao2017pyramid}) for binary change detection on the WHU-CD dataset \cite{ji2018fully}.
To thoroughly evaluate the effectiveness of ECOCSeg, we split the WHU-CD dataset
into three subsets following previous methods \cite{yang2023revisiting}: a training set containing 5,947
 images,a verification setwith 743 images,and a test set comprising 744 images.
The results are shown in Table~\ref{tab:whu}.

\begin{table}[!htbp]
\centering   \tabcolsep=15pt
\caption{Binary change detection results on WHU-CD (PSPNet).}
\label{tab:whu}
\begin{tabular}{lcccc}
\toprule
\textbf{Method} & 5\% &10\%  &20\% & 40\% \\
\midrule
 Sup-only &48.3 &60.7& 69.7 &69.5\\
  S4GAN \cite{mittal2019semi} &18.3 &62.2 &70.8& 76.4\\
 SemiCDNet \cite{peng2020semicdnet}& 51.7 &62.0 &66.7 &75.9\\
 SemiCD  \cite{bandara2022revisiting}&65.8& 68.1& 74.8& 77.2\\
 \hline
UniMatch     \cite{yang2023revisiting}      & 77.5& 78.9&82.9& 84.4\\
 \rowcolor{black!10} + ECOCSeg        & \textbf{78.0}& \textbf{79.6}& \textbf{83.5}&\textbf{84.6} \\
\bottomrule
\end{tabular}
\end{table}

Note that WHU-CD is a binary classification task, where ECOC encoding is not meaningful due to the absence of class diversity. In this case, the only difference introduced by ECOCSeg lies in the quality estimation strategy:

\begin{itemize}
  \item \textbf{UniMatch} uses threshold-based filtering;
  \item \textbf{ECOCSeg} adopts a global confidence-based quality score (see Appendix \ref{sec:C}).
\end{itemize}

We find that this modification leads to a modest performance gain.

\textit{Medical Imaging:}  
We also evaluate ECOCSeg on the ACDC dataset \cite{bernard2018deep}, a four-class cardiac MRI segmentation task. Results using UniMatch with a UNet \cite{ronneberger2015u} backbone are shown in Table~\ref{tab:acdc}.

\begin{table}[!htbp]
\centering   \tabcolsep=15pt
\caption{Multi-class segmentation results on ACDC (UNet).}
\label{tab:acdc}
\begin{tabular}{lccc} 
\toprule
\textbf{Method} & 1 Case  &3 Cases& 7 Cases \\
\midrule
 Sup-only& 28.5 &41.5& 62.5 \\
 UA-MT \cite{yu2019uncertainty}&- &61.0 &81.5 \\
 CPS \cite{chen2021semi}&- &60.3& 83.3 \\
 CNN\&Trans \cite{luo2022semi}&-& 65.6 &86.4 \\
\hline
UniMatch     \cite{yang2023revisiting}    & 85.4 & 88.9& 89.9 \\
 \rowcolor{black!10} + ECOCSeg        & \textbf{86.7}& \textbf{90.1}& \textbf{90.5} \\
\bottomrule
\end{tabular}
\end{table}

Unlike WHU-CD, ACDC benefits more substantially from ECOCSeg due to the presence of multiple correlated classes. This enables our bit-level label refinement mechanism to take effect, improving segmentation quality.

\medskip

These experiments demonstrate that ECOCSeg is applicable across diverse real-world domains. However, as discussed above, the advantages of ECOCSeg are more \textbf{pronounced in multi-class settings}, where fine-grained class disentanglement and bit-level shared attributes play a larger role.

\section{Analysis of  Quality Estimate }
\label{sec:C}
In pseudo-label learning, we need a quality estimate $q(p_{ij})$ to control the optimization process for $\mathcal{L}^u$, where $p_{ij}$ is confidence and typically defined by maximum class probability. In DAFormer \cite{hoyer2022daformer}, it is defined by:

\begin{equation}
\label{eq:1}
q^{(i)} =\frac{\sum_{j=1}^{H\times W}[p_{ij} >\tau'] }{H\times W},
\end{equation}
where $[\cdot]$ denotes the Iverson bracket, and this weight is applied to all pixels of the entire image simultaneously. In UniMatch \cite{yang2023revisiting}, it is defined through a threshold filtering way:

\begin{equation}
\label{eq:2}
q(p_{ij}) = [p_{ij}>\tau'],
\end{equation}
meaning that only high-confidence pixel samples are used for training. To study the differences between the two estimate methods, we implement ablation experiments based on UniMatch, using 1/4 labeled data with ResNet-101 on the Pascal dataset, as shown in Table \ref{table8}. We set the same threshold $\tau' = 0.95$ to ensure similar proportions of loss introduced by pseudo-labels. For Eq. \ref{eq:2}, due to the selection of high-confidence samples only, the robustness of the pseudo label generated by ECOCSeg is meaningless, and the introduced reliable bit mining mechanism does not work in this way, leading to no advantage compared to the one-hot encoding approach. For Eq. \ref{eq:1}, the training process is modulated by gradually increasing weights, and all samples are taken into consideration. In this way, the one-hot encoding approach faces a performance decline due to the introduction of a large number of erroneous labels, while ECOCSeg can handle this label noise and benefit from the sufficient training of all samples, resulting in a significant performance improvement. Based on the above discussions, we implement Eq. \ref{eq:1} for ECOCSeg and set the same threshold $\tau'$ as corresponding baseline methods to evaluate the performance.

\begin{table}[!htbp]
    \centering  \tabcolsep=15pt
 \caption{The performances based on different quality estimates  on Pascal of \textbf{SSL setting}. } 
    \label{table8}
\begin{tabular}{c|cc} 
    \hline

Method& Eq. \ref{eq:1} &Eq. \ref{eq:2}\\ 
\hline 
UniMatch   & 76.7& 77.2\\ 
 \rowcolor{black!10} +ECOCSeg & \textbf{78.9}& 77.0\\ 
    \hline
    
\end{tabular}
\end{table}

\begin{table}[!htbp]
    \centering
   \caption{ Analysis of hyper-parameter in optimization criterion, built with
DAFormer under the \textbf{fully supervised learning setting}. } 
    \label{table9}
    
    \begin{tabular}{c|cccccc} 
    \hline

$\lambda_1$  &0.5 & 2& 5&10&20&50\\ 
mIoU &77.9 &\textbf{78.1}&\textbf{78.1}&\textbf{78.1}&78.0&77.8\\ 
    \hline
    
$\lambda_2$  &0.1 & 0.5& 1&2&5&20\\ 
mIoU &77.8 &77.9 &\textbf{78.1}&\textbf{78.1}&78.0&77.6\\ 
    \hline
    
$\tau$   &0.1 & 0.2& 0.5&1&2&5\\ 
mIoU &77.5 &77.7&\textbf{78.1}&78.0&77.6&77.2\\ 
    \hline
    
\end{tabular}
\end{table}

\section{Influence of Parameters Setting}
\label{sec:D}
In this section, we study the hyper-parameter setting introduced in the optimization criterion for ECOCSeg, i.e., $\lambda_1$, $\lambda_2$, and $\tau$. All experiments are built with DAFormer of a fully supervised learning setting on Cityscapes, and results are summarized in Table \ref{table9}. We observe that ECOCSeg is robust to the two coefficients, $\lambda_1$ and $\lambda_2$, and achieves the best performance at $\lambda_1 =5$, $\lambda_2 = 2$. The setting of temperature $\tau$ has a more significant impact, and we set it to 0.5 for stable performance.

\section{ECOCSeg Efficiency Analysis}
\label{efficiency}
We provide a detailed comparison of training cost between baseline methods and their ECOCSeg-enhanced counterparts. The results are summarized in Table~\ref{tab:efficiency}.

\begin{table}[h]
\centering
\caption{Training cost comparison between baseline methods and ECOCSeg versions.}
\label{tab:efficiency}
\begin{tabular}{lccc}
\toprule
\textbf{Method} & \textbf{FLOPs (G)} & \textbf{GPU Memory (MB)} & \textbf{Training Time (h)} \\
\midrule
DAFormer   \cite{hoyer2022daformer}      & 116.64    & 9,807           & 13.7 \\
 \rowcolor{black!10}+ ECOCSeg        & 116.72    & 12,817          & 16.2 \\
UniMatch      \cite{yang2023revisiting}   & 96.16     & 19,542$\times$2 & 16.8 \\
 \rowcolor{black!10}+ ECOCSeg        & 96.24     & 24,314$\times$2 & 21.5 \\
\bottomrule
\end{tabular}
\end{table}

\textbf{FLOPs:} ECOCSeg only modifies the final classification head, changing the output dimension from $N$ (number of classes) to $K$ (code length). This introduces a negligible increase in FLOPs. Therefore, the inference cost remains nearly unchanged.

\textbf{Memory and Training Time:} The additional memory and training time primarily stem from two components:
\begin{itemize}
  \item A small ECOC codebook, represented as a binary matrix of size $N \times K$.
  \item The lightweight Reliable Bit Mining algorithm introduced during training.
\end{itemize}
These components incur modest overhead, which is acceptable considering the consistent performance improvements observed across various benchmarks.

Importantly, our method operates purely in the label representation space and is \textbf{orthogonal} to network architectures or training strategies. As a result, ECOCSeg can be seamlessly integrated into a wide range of pseudo-label learning frameworks as a \textbf{plug-and-play} module.
This is further validated through experiments across both unsupervised domain adaptation (UDA) and semi-supervised learning (SSL) settings, demonstrating its broad applicability and efficiency.

\section{Confidence Calibration}
\label{sec:E}

\begin{figure}[h]
\centering
\includegraphics[width=0.99\columnwidth]{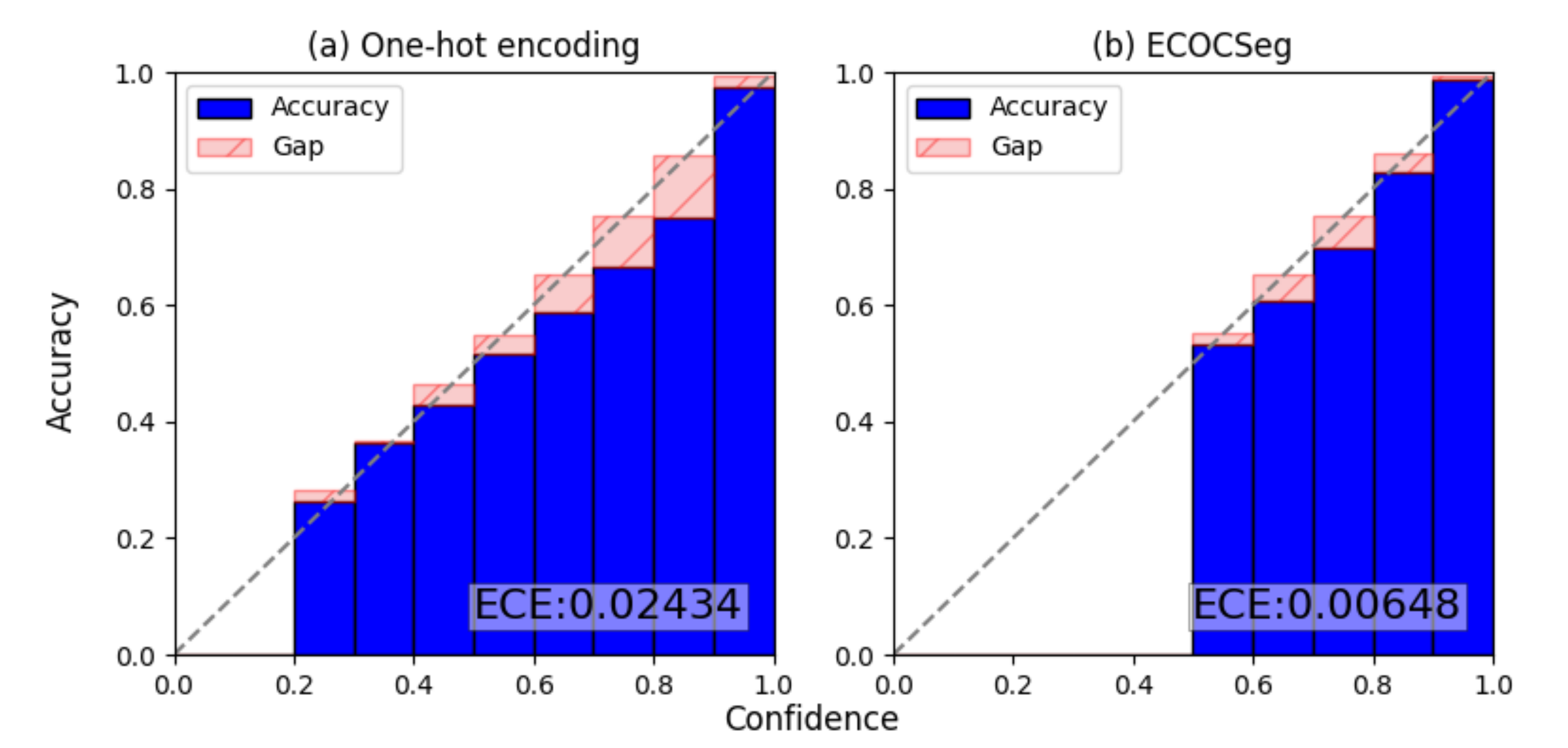} % Reduce the figure size so that it is slightly narrower than the column. Don't use precise values for figure width. This setup will avoid overfull boxes.
\caption{Reliability diagrams for DeepLabv3+ \cite{chen2018encoder} based on one-hot encoding (a) and ECOCSeg (b) on Pascal val.}
\label{fig8}
\end{figure}

In this section, we further study the model calibration of ECOCSeg and compare it with the one-hot encoding form. We implement DeepLabv3+ \cite{chen2018encoder} based on these two paradigms and evaluate on Pascal val. In Fig. \ref{fig8}, we  present the reliability diagrams, which plot the expected pixel accuracy as a
the function of confidence, and calculate the Expected Calibration Error (ECE) \cite{guo2017calibration}:
\begin{equation}
\label{eq:3}
ECE = \sum_{m=1}^M\frac{|B_m|}{n}\vert acc(B_m)-conf(B_m)\vert,
\end{equation}
where we divide the bins with an interval of 0.1. Note that the reliability diagram is obtained through the bit-wise way in ECOCSeg, meaning that every pixel will provide $K$ samples in total. As observed, ECOCSeg demonstrates smaller gaps between the expected accuracy and confidence, indicating superior calibration of predictions. While the one-hot encoding form is typically notorious for inflating the probability of the predicted class \cite{sensoy2018evidential} and suffers higher calibration error accordingly, ECOCSeg exhibits better reliability and interpretability through fine-grained bit-level label representation.

\section{Comparison with Previous Methods}

% \begin{table}[h]
% \centering
% \caption{Comparison  with previous methods}
% \label{table:other}
% \begin{tabular}{|l|p{2.3cm}|p{2.3cm}|p{2.3cm}|p{2.3cm}|}
% \hline
% &   $[1]$  \citep{chang2020weakly}& $[2]$  \citep{jiang2022prototypical}& $[3]$  \citep{mendel2020semi} &Ours \\ \hline
% \textbf{Task} & 	Weakly-supervised & 	Domain adaptive & Semi-supervised & 	Domain adaptive/Semi-supervised \\ \hline
% \textbf{Motivation} & 	Discriminative regions yield incomplete pseudo-labels & Prior works focus on intra-class alignment without explicitly modeling inter-class structure &Treat prediction error as a learnable correction term & Confusing classes share attributes — we exploit this to reduce pseudo-label noise \\ \hline
% \textbf{Implementation} & Alternating optimization: 1) cluster sub-classes per class; 2) train a sub-class classifier & Online prototype update for contrastive learning & Introduce  a correction network that learns to refine predictions with residual errors. & 1) Use ECOC-based classification to decouple classes into shared binary attributes; 2) Design Reliable Bit Mining and hybrid pseudo-labels to denoise \\ \hline
% \textbf{Key Difference} & Sub-class discovery at the feature level& 	Structure modeling via class prototypes& Prediction refinement via residual correction & 1) Bit-level denoising via ECOC; 2) Label encoding perspective, orthogonal to prior feature-level methods \\ \hline
% \end{tabular}
% \end{table}

\begin{table}[h]
\centering
\caption{Comparison  with previous methods}
\label{table:other}
\begin{tabular}{|l|p{2.6cm}|p{2.6cm}|p{2.6cm}|p{2.6cm}|}
\hline
&   \textbf{Task}  & \textbf{Motivation} & \textbf{Implementation}&\textbf{Key Difference}\\ \hline
  $[1]$  \citep{chang2020weakly} & 	Weakly-supervised & 	Discriminative regions yield incomplete pseudo-labels & Alternating optimization: 1) cluster sub-classes per class; 2) train a sub-class classifier & 	Sub-class discovery at the feature level\\ \hline
$[2]$  \citep{jiang2022prototypical} & 	Domain adaptive & Prior works focus on intra-class alignment without explicitly modeling inter-class structure &Online prototype update for contrastive learning & Structure modeling via class prototypes \\ \hline

$[3]$  \citep{zhang2021prototypical} & 	Domain adaptive & Improve pseudo-labels via feature clustering &Prototype-based denoising & Feature-space method \\ \hline

$[4]$  \citep{shen2023diga} & 	Domain adaptive & Avoid manual thresholds &Symmetric distillation \& consensus & Threshold-free, feature-level \\ \hline

$[5]$  \citep{zhang2021multiple} & 	Domain adaptive & Fuse online-offline pseudo-labels &	Unified multi-branch fusion & Fusion in feature space\\ \hline

$[6]$  \citep{ning2021multi} & 	Active Domain adaptive & Using a small amount of labeled target data to guide adaptation&	Anchor-based soft alignment & Feature-space alignment \\ \hline

$[7]$  \citep{mendel2020semi} & Semi-supervised  & Treat prediction error as a learnable correction term & Introduce  a correction network that learns to refine predictions with residual errors. & Prediction refinement via residual correction \\ \hline

$[8]$  \citep{du2022learning} & Semi-supervised  & Reduce confirmation bias & Improve pseudo-labels by peeking at future model states & 	Modification of the teacher model \\ \hline

Ours & Domain adaptive/Semi-supervised& 	SConfusing classes share attributes — we exploit this to reduce pseudo-label noise & 1) Use ECOC-based classification to decouple classes into shared binary attributes; 2) Design Reliable Bit Mining and hybrid pseudo-labels to denoise & 1) Bit-level denoising via ECOC; 2) Label encoding perspective, orthogonal to prior feature-level methods \\ \hline
\end{tabular}
\end{table}

We compare our method with representative approaches that focus on fine-grained modeling and pseudo-label refinement, as summarized in Table~\ref{table:other}. While fine-grained representation learning is a common and intuitive strategy, it is not the central contribution of our work. Instead, our method introduces a novel perspective by addressing pseudo-label noise through the lens of the \textit{label encoding space}, rather than the feature space.

ECOCSeg is orthogonal to existing methods that operate primarily at the representation level. By leveraging error-correcting output codes and explicitly modeling inter-class separability through binary attributes, we provide a complementary and scalable solution. We believe this direction opens up new opportunities for robust pseudo-label learning and could inspire further research in this area.

\section{ Qualitative Results}
\label{sec:F}
In this section, we provide more qualitative results to compare ECOCSeg and corresponding baseline methods on different benchmarks. As shown in Fig. \ref{fig9} and Fig. \ref{fig10}, the baseline methods face challenges in distinguishing between confusing classes such as \textit{sidewalk} and \textit{road}, \textit{pole} and \textit{building}, \textit{bus} and \textit{truck}, \textit{cow} and \textit{horse}, and so on. These classes are challenging to learn in pseudo-label learning due to the influence of label drift. When built with ECOCSeg, there is a notable improvement in the performance of these classes, which further demonstrates the effectiveness of our method.

\begin{figure*}[htp!]
\centering
\includegraphics[width=0.99\textwidth]{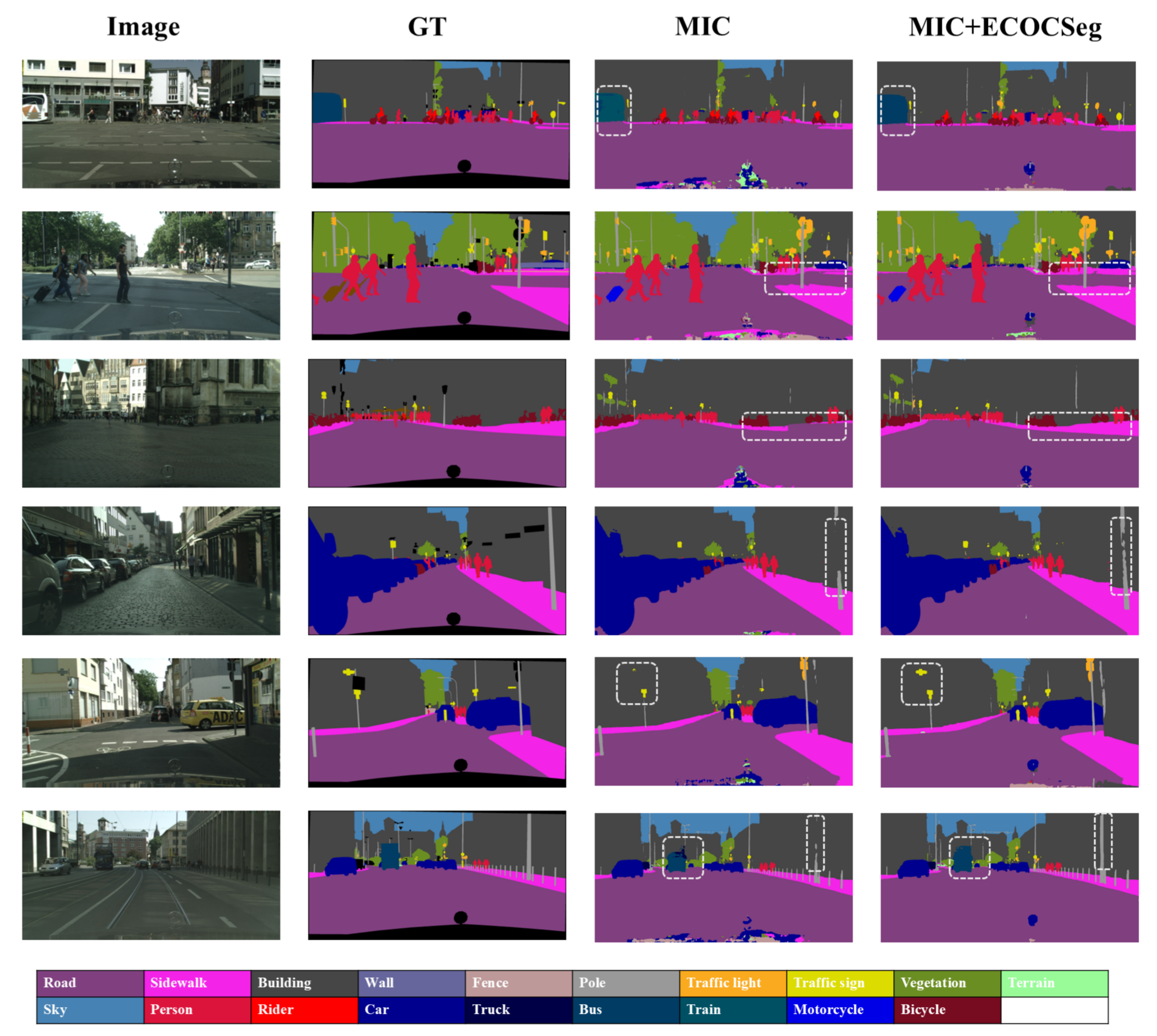} 
\caption{More qualitative comparison built with MIC \cite{hoyer2023mic} on UDA benchmark of GTAv$\rightarrow$Cityscapes. The significant improvements are marked with dotted boxes. }
\label{fig9}
\end{figure*}

\begin{figure*}[htp!]
\centering
\includegraphics[width=0.99\textwidth]{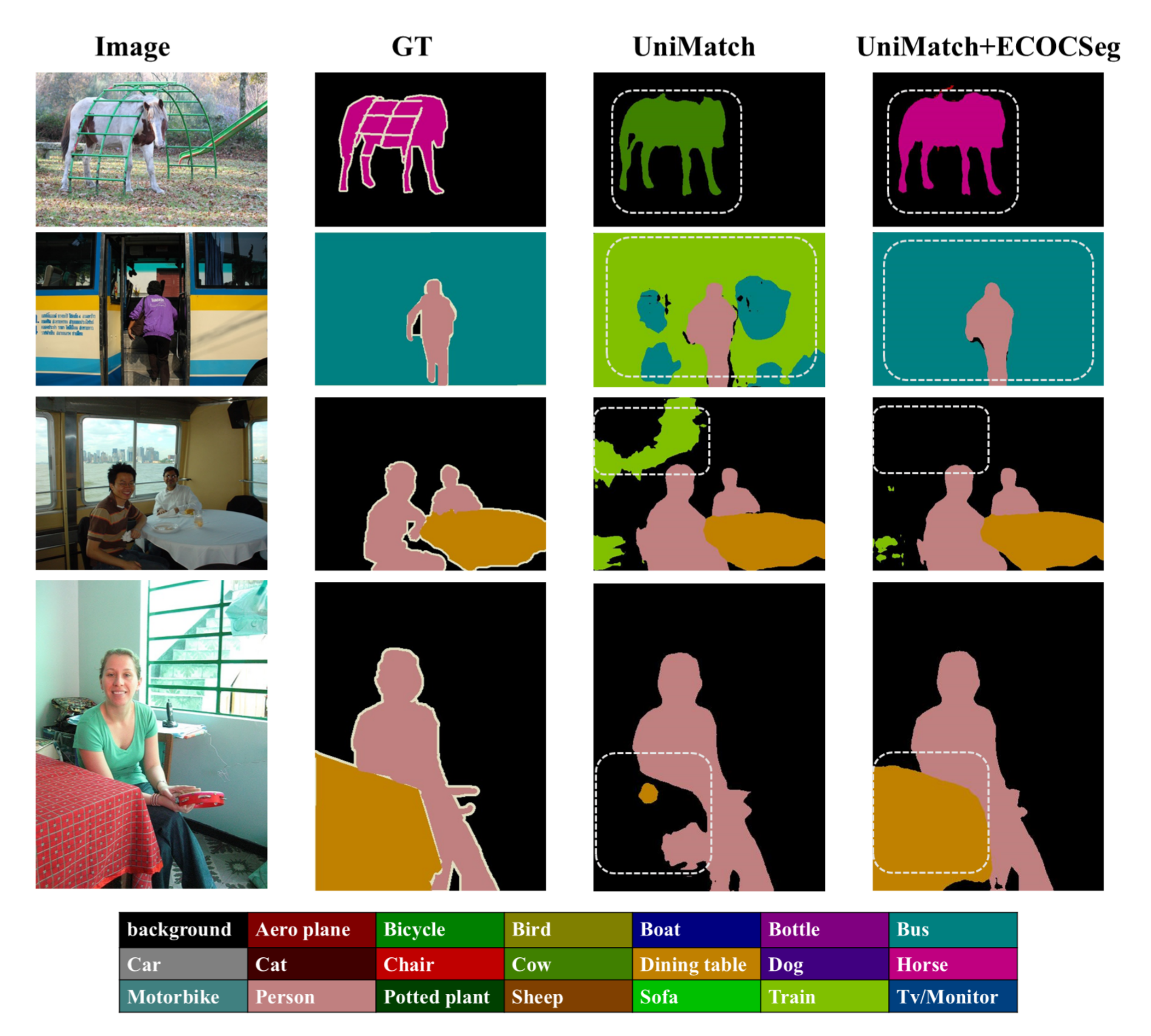} 
\caption{More qualitative comparison built with UniMatch \cite{yang2023revisiting} on SSL benchmark of Pascal. The significant improvements are marked with dotted boxes. }
\label{fig10}
\end{figure*}

\section{Limitation and Impact Statement}
\label{limitation}
While ECOCSeg demonstrates consistent improvements across a wide range of benchmarks, several limitations remain. First, the effectiveness of the ECOC encoding  depends on the quality of the codebook design. Although we propose two practical strategies ($M_{mmd}$ and $M_{text}$), sub-optimal codeword configurations may still hinder performance in certain edge cases. Second, our theoretical analysis assumes bit-wise independence and uniform label noise, which may not fully capture the structured or correlated noise patterns commonly observed in real-world scenarios. These assumptions, while analytically tractable, may limit the theoretical guarantees when applied to more complex distributions.

Within this paper, we present an approach for pseudo-label learning, especially domain adaptive/semi-supervised semantic segmentation, a pivotal research area in the realm of computer vision, with no apparent negative societal implications known thus far.

\end{document}